%% file: main-fewshotdm.tex
\documentclass[sigconf,authorversion]{acmart}
\input{notation}

\newif\ifextversion
\extversiontrue
\AtBeginDocument{%
  }

\copyrightyear{2026}
\acmYear{2026}
\setcopyright{cc}
\setcctype{by-nc-nd}
\acmConference[KDD '26]{Proceedings of the 32nd ACM SIGKDD Conference on Knowledge Discovery and Data Mining V.2}{August 09--13, 2026}{Jeju Island, Republic of Korea}
\acmBooktitle{Proceedings of the 32nd ACM SIGKDD Conference on Knowledge Discovery and Data Mining V.2 (KDD '26), August 09--13, 2026, Jeju Island, Republic of Korea}
\acmDOI{10.1145/3770855.3817752}
\acmISBN{979-8-4007-2259-2/2026/08}
\begin{document}

%%
%% The "title" command has an optional parameter,
%% allowing the author to define a "short title" to be used in page headers.
\title[Few-Shot Resampling for Scalable Statistically-Sound Data Mining]{Few-Shot~Resampling~for~Scalable~Statistically-Sound~Data~Mining}

%%
%% The "author" command and its associated commands are used to define
%% the authors and their affiliations.
%% Of note is the shared affiliation of the first two authors, and the
%% "authornote" and "authornotemark" commands
%% used to denote shared contribution to the research.
\author{Leonardo Pellegrina}
\affiliation{%
  \institution{Department of Information Engineering, University~of~Padova}
  \streetaddress{Via Gradenigo 6b}
  \city{Padova}
  \country{Italy}
  \postcode{35129}
}
\email{leonardo.pellegrina@unipd.it}

\author{Fabio Vandin}
\affiliation{%
  \institution{Department of Information Engineering, University~of~Padova}
  \streetaddress{Via Gradenigo 6b}
  \city{Padova}
  \country{Italy}
  \postcode{35129}
}
\email{fabio.vandin@unipd.it}

%%
%% By default, the full list of authors will be used in the page
%% headers. Often, this list is too long, and will overlap
%% other information printed in the page headers. This command allows
%% the author to define a more concise list
%% of authors' names for this purpose.
\renewcommand{\shortauthors}{Leonardo Pellegrina and Fabio Vandin}

%%
%% The abstract is a short summary of the work to be presented in the
%% article.
\begin{abstract}
A key step in knowledge discovery is the evaluation of data mining results. In several applications, including pattern mining, graph analysis, and others, this step includes the evaluation of the statistical significance of the results, to avoid spurious discoveries due only to noise or random fluctuations in the data. While specialized procedures have been developed for some specific applications, resampling-based approaches are widely used, in particular for complex analyses where analytical results cannot be derived. However, current resampling-based approaches require the generation and analysis of thousands of resampled datasets, and are therefore impractical for large datasets or computationally intensive analyses.

In this paper, we introduce \algname, a simple and effective resam\-pling-based approach to assess the statistical significance of data mining results with rigorous guarantees on the probability of false discoveries. Our approach can be used in every situation where resampling-based approaches are applied. \algname\ builds on our derivation of a novel bound to the supremum deviation of test statistics representing the quality of data mining results. We prove that \algname\ needs to generate and analyze an extremely small number of resampled datasets, leading to a highly scalable approach with wide applicability. We test our approach on  common tasks such as pattern mining and network analysis. In all cases, our approach results in a reduction of up to two orders of magnitude in running time compared to the state of the art, 
while preserving high statistical power, 
%enabling the analysis of large-scale real-world datasets.
enabling the statistical validation of data mining results on large-scale real-world datasets.
\end{abstract}

%%
%% The code below is generated by the tool at http://dl.acm.org/ccs.cfm.
%% Please copy and paste the code instead of the example below.
\begin{CCSXML}
<ccs2012>
   <concept>
       <concept_id>10002951.10003227.10003351</concept_id>
       <concept_desc>Information systems~Data mining</concept_desc>
       <concept_significance>500</concept_significance>
       </concept>
   <concept>
       <concept_id>10002950.10003648</concept_id>
       <concept_desc>Mathematics of computing~Probability and statistics</concept_desc>
       <concept_significance>500</concept_significance>
       </concept>
 </ccs2012>
\end{CCSXML}

\ccsdesc[500]{Information systems~Data mining}
\ccsdesc[500]{Mathematics of computing~Probability and statistics}
%%
%% Keywords. The author(s) should pick words that accurately describe
%% the work being presented. Separate the keywords with commas.
\keywords{statistically-sound data mining, resampling, pattern mining, graph mining, scalable algorithm, FWER}
%% A "teaser" image appears between the author and affiliation
%% information and the body of the document, and typically spans the
%% page.
%\begin{teaserfigure}
%  \includegraphics[width=\textwidth]{sampleteaser}
%  \caption{Seattle Mariners at Spring Training, 2010.}
%  \Description{Enjoying the baseball game from the third-base
%  seats. Ichiro Suzuki preparing to bat.}
%  \label{fig:teaser}
%\end{teaserfigure}

%\received{20 February 2007}
%\received[revised]{12 March 2009}
%\received[accepted]{5 June 2009}

%%
%% This command processes the author and affiliation and title
%% information and builds the first part of the formatted document.
\maketitle

\input{fewshotdm}

\appendix
\input{appendix}

\end{document}
\endinput
%%
%% End of file `sample-sigconf.tex'.

%% file: notation.tex
\usepackage{graphicx}
\usepackage{bm}
\usepackage{amsmath}
\usepackage{amsfonts}
\usepackage{amsthm}
\usepackage{dsfont}
\usepackage{subcaption}
\usepackage{cleveref}
\usepackage{thmtools, thm-restate}
\usepackage[ruled,vlined,linesnumbered]{algorithm2e}
\usepackage[normalem]{ulem}
\usepackage{enumitem}
\usepackage{booktabs}

\definecolor{darkred}{RGB}{200,0,0}

\newcommand{\algname}{\textsc{FewRS}}

\newcommand{\pars}[1]{\left( #1 \right)}

\newcommand{\brpars}[1]{\left\lbrace #1 \right\rbrace}
\newcommand{\qt}[1]{\lq\lq#1\rq\rq}

\newcommand{\ind}[1]{\mathds{1} \left[ #1 \right] }

\let\L\relax
\newcommand{\L}{\mathcal{L}}
\newcommand{\R}{\mathbb{R}}
\newcommand{\dataset}{\mathcal{D}}

\newcommand{\tnulls}{\mathcal{H}_0}
\newcommand{\sample}{\mathcal{S}}
\newcommand{\A}{\mathcal{A}}
\newcommand{\lang}{\mathcal{L}}

\newcommand{\BT}[1]{\Theta\pars{ #1 }}

\newcommand{\E}{\mathop{\mathbb{E}}}
\newcommand{\Var}{\text{Var}}

\newcommand{\X}{\mathcal{X}}

\DeclareMathOperator*{\argmin}{arg\,min}

\newcommand{\probdist}{\gamma}

%% file: fewshotdm.tex
% !TEX root = main-fewshotdm.tex

\section{Introduction}
\label{sec:intro}

Knowledge discovery is a complex process consisting of several steps whose goal is to extract useful information from complex and large datasets. 
A crucial component of this process is the mining step~\cite{han2022data,leskovec2020mining}, where the dataset is analyzed to extract patterns or other characteristics (e.g., clusters) of the data which provide actionable information to the end user or inform subsequent analyses. 
A key issue in the mining step is to avoid uninteresting or spurious results~\cite{tan2016introduction} that do not provide useful knowledge and are instead due to factors such noise or random fluctuations of the data.

A common approach to avoid spurious discoveries is the evaluation of the statistical significance of the data mining results within the statistical hypothesis testing framework. In this framework, the data mining analyses are summarized by one or multiple \emph{statistics}, i.e., scores quantitatively capturing the aspects of interest. 
The analyst then formulates a \emph{null hypothesis} on the underlying process that generated the data, corresponding to the situation in which results are not interesting. 
The data mining results are evaluated by comparing the statistics with their distribution under the null hypothesis (i.e., when the null hypothesis holds): results with large deviations from the null distribution are unlikely to be due to random fluctuations, and are therefore flagged as \emph{significant}; results that are likely under the null hypothesis are instead discarded.

\textbf{Example: market basket analysis.} A common task in market basket analysis is to mine a set of transactions, each corresponding to the items bought by a customer, with the goal of identifying sets of  items, or itemsets, which are frequently bought together. 
To avoid uninteresting results, we want to discard itemsets whose selling frequency is explained by the frequency of the single items. 
Therefore, we define the \emph{null hypothesis} in which single items are placed into transactions independently with probability equal to their frequency in the dataset. 
(Note that under the null hypothesis no itemset is interesting, since items are placed independently in transactions.) 
As test statistic, we choose the number of itemsets of size $c$ that appear in a fraction at least $\theta$ of the transactions of the dataset. 
Assume that the value of the statistic is $s$. 
The significance of the results is then given by the probability that, under the null hypothesis, $\ge s$  itemsets of size $c$ appear in a fraction at least $\theta$ of the transactions. If such probability, called $p$-value, is small, then the itemsets are significant, otherwise they are not.

While the statistical hypothesis testing framework cannot certify the \emph{validity} of data mining outcomes, it provides formal guarantees on the avoidance of spurious discoveries, thus reducing the risk of investing resources on invalid or not meaningful results. 
More in detail, for a given statistic $A(\mathcal{D})$ computed by analyzing the dataset $\mathcal{D}$, let $p_A$ be its \emph{$p$-value}, that is, the probability that the statistic $A(\mathcal{D}^0)$ computed by analyzing a dataset $\mathcal{D}^0$ generated under the null hypothesis is \emph{more extreme}
(e.g., higher) 
than $A(\mathcal{D})$. For a given \emph{significance threshold} $\alpha \in (0,1]$, flagging the results as significant when $p_A \le \alpha$ guarantees that the probability of reporting a false discovery (i.e., flagging the results as significant when they are not) is bounded by $\alpha$. 
When multiple analyses, described by statistics $A_1(\mathcal{D}),A_2(\mathcal{D}),\dots,A_k(\mathcal{D})$, are performed, often one wants to bound the \emph{Family-Wise Error Rate} (FWER), that is the probability of even a single false discovery. 
Several approaches~\cite{bonferroni1936teoria,holm1979simple,hochberg1988sharper} have been developed for such a \emph{multiple hypothesis testing} scenario. 
In essence, such approaches require to compute an \emph{adjusted} significant threshold $\alpha^*$, which depends on the desired bound $\alpha$ on the FWER and the number $k$ of hypotheses. 
Results with $p$-value below $\alpha^*$ are then flagged as significant.

For some statistic and null hypotheses, the $p$-value $p_A$ for a statistic $A(\mathcal{D})$ can be computed analytically. 
For example, in market basket analysis, if $A(\mathcal{D})$ is the frequency of a given itemset in $\mathcal{D}$ and the null hypothesis is the one described in our example above (where items are placed independently in each transaction), then it is easy to see that the frequency $A(\mathcal{D}^0)$ of the itemset in a dataset $\mathcal{D}^0$ from the null hypothesis follows a binomial distribution (whose parameters are the product of the frequency of the single items in the itemset and the number of transactions in the dataset). 
Several approaches have been designed to identify significant results for specific data mining tasks, such as significant pattern mining~\cite{webb2006discovering,webb2007discovering,webb2008layered,TeradaOHTS13,minato2014fast}, when the null hypothesis allows the analytical computation of $p$-values. 
However, analytical results cannot be exploited for more complex analyses or null hypotheses. In such scenarios, resampling-based approaches~\cite{westfall1993resampling} are widely used. 
Such methods assess the statistical significance of  the results by comparing $A(\mathcal{D})$ with the \emph{empirical distribution} of the statistic $A(\mathcal{D}^0)$ under the null hypothesis. 
The latter is obtained by generating a large number of \emph{resampled datasets}, or resamples, which are datasets generated according to the null hypothesis. 

\textbf{Example: network polarization.} Given a graph $G$ representing a social network, we are interested in studying its polarization~\cite{conover2011political,cinelli2021echo,garimella2018quantifying,gonzalez2023asymmetric,preti2025dsp} using one of the many measures available as our statistic $A(G)$. 
These measures depend on the graph and on the colors of the vertices, where the color of a vertex represent the different sides or groups in an argument or debate. 
The null hypothesis is that the edges in the graph are independent of the colors of the vertices, but the total number of edges that connect vertices of different colors are fixed; moreover, to capture the highly-skewed degree distribution of real networks, in the null  hypothesis the degree sequence of vertices is fixed. 
For such an example, the $p$-value cannot be computed analytically, even for simple statistics such as the average, over the vertices, of the fraction of neighbors of a vertex that have the same color of the vertex. 
However, one can use recent methods~\cite{fosdick2018configuring,preti2025polaris} to generate (independent) resampled networks from the null distribution.

Resampling-based approaches are also widely used in the multiple hypothesis testing scenario, even when the $p$-values 
can be computed analytically. 
In fact, when the analytical computation of  $p$-values is possible, 
the \emph{adjusted} significant threshold $\alpha^*$ can be 
accurately estimated from resampled datasets~\cite{westfall1993resampling}. 
Compared to analytical corrections, 
resampling-based approaches 
result in higher \emph{statistical power} (i.e., the ability to correctly flag significant hypotheses).
This is a consequence of the fact that they 
inherently consider the correlation structure among hypotheses, 
while analytical adjustments are affected by worst-case assumptions.

Given their usefulness and wide applicability, several methods leverage a resampling scheme (see Section~\ref{sec:relworks}). 
However, they all require to generate \emph{thousands} of resampled datasets for common values of $\alpha$ (i.e., $\alpha = 0.1$ or $\alpha = 0.05$). 
For complex analyses and/or large datasets,  mining or generating even a \emph{single} resampled dataset may take hours or longer. 
Indeed, this is the case for the network analysis example described above. 
Therefore, currently available resampling-based approaches are impractical for modern-sized datasets whenever the data mining step or the resampling procedure is computationally intensive.

\textbf{Our contributions.} In this paper, we focus on resampling-based approaches for assessing the statistical significance of data mining results. Our contributions are fourfold. 
Firstly, we introduce \algname, a simple and effective resampling-based approach to assess the statistical significance of data mining results  while controlling the FWER. 
\algname\ uses a few-shot resampling approach, that is, it analyzes a small number of resampled datasets, leading to a highly scalable method. 
\algname\ can leverage \emph{any} procedure to generate resampled datasets and can be applied to any data mining analysis where the results are summarized by a statistic. 
Therefore, our approach is applicable to a wide range of situations and data mining tasks. 
Second, we derive a novel bound to the supremum deviation of test statistics representing the quality of data mining results.  
This result is crucial in making our method practical, since it implies that mining a small number of resampled datasets is enough to identify statistically significant results.  
Third, we prove that \algname\ also provides guarantees on its statistical power (i.e., the ability to correctly report significant hypotheses). Remarkably, our analysis does not require any assumption on the distribution of significant patterns.
Fourth, we perform an extensive empirical evaluation on several analyses for common data mining tasks such as pattern mining and network analysis. In all cases, \algname\ results in a reduction of up to two orders of magnitude in running time compared to the state of the art, 
while preserving high statistical power, enabling the statistical validation of data mining results on large-scale real-world datasets.

\section{Related Works}
\label{sec:relworks}

Our work focuses on efficient resampling-based approaches to evaluate the statistical significance of data mining results. 
We now describe the works most related to ours, and point the reader to recent reviews and tutorials~\cite{hamalainen2019tutorial,PellegrinaRV19b} for a wider introduction to statistically-sound data mining algorithms, and to the book by~\citet{westfall1993resampling} for an introduction to resampling-based approaches.

\citet{gionis2007assessing} introduced the use of permutation strategies to assess data mining results. 
Their approach focuses on datasets with binary features, represented as $0$-$1$ matrices, and on a specific null hypothesis, where  every column and every row in a random dataset must have the same number of $1$'s as the original dataset. 
\citet{kirsch2012efficient} introduced a method to identify a set of statistically significant patterns with a small false discovery rate (FDR)~\cite{benjamini1995controlling}. 
Their method is specific for itemsets mining and for the null hypothesis where items are placed independently into transactions while preserving the expected frequency of each item. \citet{dalleiger2022discovering} focused on mining non-redundant sets of statistically significant patterns while controlling the FWER or the FDR. Their method is specific for itemsets and a maximum entropy null-distribution.

General resampling-based approaches~\citep{westfall1993resampling} may be used to assess the statistical significance of data mining results, but their straightforward application requires to generate and mine a large number of resampled datasets.  
A large body of recent work has focused on providing computationally efficient procedures for implementing some permutation-based approaches in specific scenarios, or for the efficient generation of resampled datasets.  
Several works~\cite{TeradaOHTS13,llinares2015fast,PellegrinaRV19a,pellegrina2020efficient,terada2015high} have focused on \emph{significant pattern mining}, where the goal is to identify patterns significantly associated with the value of a given binary feature (the target) in a transactional dataset. \cite{llinares2015fast,pellegrina2020efficient,terada2015high} use the Westfall-Young (WY) permutation test~\cite{westfall1993resampling}. 
The WY permutation procedure requires to estimate the quantile of the distribution of the minimum (over all patterns) $p$-value  (or, equivalently, of the distribution of the maximum deviation, over all patterns, of the measure of significance). 
The use of such quantiles makes WY permutation testing more powerful than previous analytical approaches (e.g., LAMP~\cite{TeradaOHTS13}), but also more computationally expensive, since the estimation of such quantiles requires to mine a large number of permuted datasets.  
The work most related to ours is \citep{pellegrina2024efficient}, which introduced FSR, a few-shot resampling approach for significant pattern mining with rigorous guarantees on the FWER. 
While FSR can be applied to several pattern types (e.g., itemsets, sequential patterns, subgroups) and to two different null hypotheses, it is still limited to the task of significant pattern mining described above, and cannot be applied to other data mining tasks (e.g., more general pattern mining analyses or to graph mining, such as the example in Section~\ref{sec:intro}). 
Our approach instead directly supports any scenario where the results are summarized by a statistic and resampling is possible.
 
Different lines of work have  studied i) efficient procedures to generate resampled datasets for various data types, including sequential data~\citep{tonon2019permutation,jenkins2022speck}, graphs~\cite{preti2025polaris}, and general binary transactional and sequence datasets~\citep{preti2024alice,abuissa2023rohan}, and ii) pattern sampling or mining with constraints~\citep{al2009output,boley2011direct,diop2018sequential,giacometti2018dense,diop2022high} (to reduce the cost of mining a single dataset).
These procedures are orthogonal to our method, which can be used to reduce the number of resampled datasets required in any application and can therefore exploit any method for the efficient generation of resampled datasets and any method to compute/bound the maximum test statistic.

\section{Preliminaries}
\label{sec:prelims} 

We are given a dataset $\dataset \in \X$ from a domain $\X$. 
We assume that $\dataset$ comes from an (unknown) distribution $\probdist$, that is, $\dataset \sim \probdist$. 
We define a set of analyses $\A = \{ A_1 , A_2 , \dots , A_k \}$ as bounded real-valued functions 
$A_i : \X \rightarrow \R$. 
Every analysis $A_i$ corresponds to a data mining task or analysis, and $A_i(\dataset)$ is the \emph{statistic} corresponding to the result of analysis $A_i$ on the dataset $\dataset$. 
For example, if $A_i$ corresponds to frequent pattern mining,  $A_i(\dataset)$ may be defined as the number of frequent patterns mined from $\dataset$.

Our goal is to find statistically significant results from analyses $\A = \{ A_1 , A_2 , \dots , A_k \}$. 
To this end, we need to define a \emph{null hypothesis} corresponding to the situation in which results are not interesting. 
We define the null hypothesis through a corresponding \emph{null distribution} $\probdist_0$ on datasets in the domain $\X$. 
$\probdist_0$ defines the probability with which datasets under the null hypothesis (i.e., with no significant results) are generated. 
We define the set $\tnulls \subseteq  \A = \{ A_1 , A_2 , \dots , A_k \} $ of \emph{true null hypotheses}  as the subset of  $\{ A_1 , A_2 , \dots , A_k \}$ for which the null hypothesis holds. 
We denote with $\dataset^0$ a (random) dataset sampled from the null distribution $\probdist_0$. 

Given the dataset $\dataset$ generated from the (unknown) distribution $\gamma$, we then define the \emph{null hypotheses} for the analyses $A_i \in \A$ as the hypotheses that observed results $A_i(\dataset)$ \emph{well conform} to the distribution of the statistics $A_i(\dataset^0)$, where the dataset $\dataset^0$ comes from the null distribution $\probdist_0$. 
We test the null hypothesis for $A_i$ using the dataset $\dataset$; 
for simplicity, we assume that larger values of $A_i(\dataset)$ corresponds to higher evidence of significance for the $i$-th hypothesis, but our approach easily extends  (e.g., by changing the sign of test statistic) to the case where lower values of $A_i(\dataset)$ provide higher evidence of significance. 
More precisely,  we reject the $i$-th null hypothesis if we are able to conclude that the observed value $A_i(\dataset)$ is \emph{unlikely} under the distribution $\probdist_0$, that is
$\Pr_{\dataset^0 \sim \probdist_0} \left( A_i(\dataset^0) \geq A_i(\dataset) \right) \leq t $,
for some small value of $t$. Such a probability is the \emph{$p$-value} of $A_i$.

More generally, we want to identify a subset of $\A$ that is significant, i.e., the elements of $\A$ that significantly deviate from what is expected under the null distribution $\probdist_0$.
Ideally, we want to reject the largest set of hypotheses while controlling the \emph{Family-Wise Error Rate} (FWER) below a value $\alpha$ defined by the user, where the FWER  is the probability that at least one element of $\tnulls$ is reported as significant (i.e., of at least one false discovery).

A procedure to assess the statistical significance while controlling the FWER is the Bonferroni correction;
this approach 
computes the $p$-value $p_i = \Pr_{\dataset^0 \sim \probdist_0} \left( A_i(\dataset^0) \geq A_i(\dataset) \right)$ 
for each $i$, 
and then 
outputs the set
$\left\{ A_i : p_i \leq \alpha/k \right\} $. 
While the $\alpha/k$ multiplicity correction is straightforward, 
this procedure features two key issues.
The first issue is that, as discussed previously, in most data mining tasks   
the null distribution $\probdist_0$ 
is quite complex, and the probabilities $p_i$ do not admit a closed-form expression;
in these settings, $p_i$ cannot be computed, nor estimated efficiently. 
This is the case for all the real-world settings we consider in this work. 
The second issue is that, even when the $p_i$'s are available, this analytic correction is in almost all situations too conservative and restrictive, resulting in a loss of \emph{statistical power}, that is, few or no significant patterns are reported in output. 
This is due to the inherent worst-case assumption of independence among the analyses of $\A$  
made by the Bonferroni correction, 
which is typically not met in practice. 
To address such issues, state-of-the-art methods adopt resampling to assess the statistical significance of their results.

\subsection{WY Resampling} 
\label{sec:wyresampling}

We now describe the Westfall-Young (WY) resampling approach~\cite{westfall1993resampling} to identify significant elements of $\A$ while controlling the FWER below $\alpha$. The WY resampling procedure inherently considers the correlation structure among analyses $\A = \{ A_1 , A_2 , \dots , A_k \}$, resulting in (asymptotically) optimal statistical power~\cite{meinshausen2011asymptotic}. 
The analysis of~\cite{westfall1993resampling} leverages a technical condition, called \emph{subset pivotality}, that is, the assumption that joint distribution of the statistics for which the null hypothesis is true (i.e., analyses $A_i$ in $\tnulls$) is the same when \emph{all} hypotheses are true null (i.e., $\tnulls = \A$). 
While this assumption is sufficient to establish guarantees, it is still open whether it is necessary 
~\cite{westfall2008multiple}. 
Analogously to previous work on significant pattern mining~\cite{llinares2015fast,pellegrina2020efficient,pellegrina2024efficient}, we assume subset pivotality. 
Note that this assumption is useful only when more than one analysis is performed ($k > 1$), and not needed when $k=1$. 
Define $\delta(\alpha)$ as
\begin{align*}
\delta(\alpha) = \argmin_{x \in \R} \left\{ \Pr_{\dataset^0 \sim \probdist_0} \Bigl( \max_{A \in \A} A(\dataset^0) > x \Bigr) \leq \alpha \right\} .
\end{align*}
The following result, from~\cite{westfall1993resampling},   
guarantees that, if one first identifies $\delta(\alpha)$ and then reports 
in output all analyses $A_i$ with $A_i(\dataset) > \delta(\alpha)$, 
then the FWER is bounded by $\alpha$. That is, the following holds.

\begin{lemma}
\label{lemma:fwerguaranteesres}
Let $R = \{ A_i: A_i(\dataset) > \delta(\alpha) \}$. 
It holds
\begin{align*}
\Pr_{\dataset \sim \probdist}( R \cap \tnulls \neq \emptyset ) = 
\Pr_{\dataset \sim \probdist} \Bigl( \max_{A_i \in \tnulls} \{ A_i( \dataset ) \} > \delta(\alpha) \Bigr)  \leq \alpha .
\end{align*}
\end{lemma}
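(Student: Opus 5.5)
The equality is a rewriting of the event. $R \cap \tnulls \neq \emptyset$ holds exactly when some $A_i \in \tnulls$ satisfies $A_i(\dataset) > \delta(\alpha)$. Since $\A$ is finite, this is the same as $\max_{A_i \in \tnulls} A_i(\dataset) > \delta(\alpha)$. When $\tnulls = \emptyset$ both events are empty and the bound is trivial, so we assume $\tnulls \neq \emptyset$.

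For the inequality, we chain three comparisons.

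\textbf{Step 1: replace $\dataset \sim \probdist$ by the null distribution.} We use subset pivotality, as assumed in the text. The joint distribution of $(A_i(\dataset))_{A_i \in \tnulls}$ under $\probdist$ coincides with the joint distribution of $(A_i(\dataset^0))_{A_i \in \tnulls}$ under the complete null $\probdist_0$. The event $\{\max_{A_i\in\tnulls} A_i > \delta(\alpha)\}$ depends only on this vector, so
\begin{align*}
\Pr_{\dataset \sim \probdist}\Bigl(\max_{A_i \in \tnulls} A_i(\dataset) > \delta(\alpha)\Bigr) = \Pr_{\dataset^0 \sim \probdist_0}\Bigl(\max_{A_i \in \tnulls} A_i(\dataset^0) > \delta(\alpha)\Bigr).
\end{align*}
When $k=1$ and $\tnulls = \A$, pivotality is not needed. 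In that case the null hypothesis holding for $A_1$ means that $A_1(\dataset)$ is distributed as $A_1(\dataset^0)$.

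\textbf{Step 2: monotonicity.} Since $\tnulls \subseteq \A$, we have $\max_{A_i \in \tnulls} A_i(\dataset^0) \le \max_{A \in \A} A(\dataset^0)$ pointwise. Hence the event for $\tnulls$ is contained in the event $\{\max_{A\in\A} A(\dataset^0) > \delta(\alpha)\}$, and its probability is no larger.

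\textbf{Step 3: the defining property of $\delta(\alpha)$.} We need $\Pr_{\dataset^0\sim\probdist_0}(\max_{A\in\A} A(\dataset^0) > \delta(\alpha)) \le \alpha$. Let $F(x) = \Pr(M > x)$ with $M = \max_{A\in\A} A(\dataset^0)$. The function $F$ is nonincreasing and right-continuous, because $\{M > x\} = \bigcup_n \{M > x + 1/n\}$ and continuity from below of probability applies. The set $\{x : F(x) \le \alpha\}$ is therefore a closed-on-the-left half-line $[x^*, \infty)$. It is nonempty because each $A_i$ is bounded, so $F(x)=0$ for large $x$. It is bounded below whenever $\alpha<1$; if $\alpha = 1$ the claim is trivial. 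So the minimum defining $\delta(\alpha)$ exists, equals $x^*$, and satisfies $F(\delta(\alpha)) \le \alpha$. Combining the three steps gives the bound.

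The main obstacle is Step 1. It is the only place where the unknown $\probdist$ enters, and the argument relies entirely on the subset pivotality assumption. That assumption has to be read precisely as equality in distribution of the true-null statistics under $\probdist$ and under $\probdist_0$. The right-continuity argument in Step 3 is routine but needed so that the $\argmin$ is attained.
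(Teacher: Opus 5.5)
Your proof is correct and follows the paper's own chain: subset pivotality to pass from $\probdist$ to $\probdist_0$, monotonicity from $\tnulls \subseteq \A$, and the defining property of $\delta(\alpha)$. Your right-continuity argument fills in a detail the paper leaves implicit, namely that the minimum defining $\delta(\alpha)$ is attained; also, the paper only uses an inequality rather than equality in Step 1, so your stronger reading of pivotality is more than is needed.
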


Despite the interesting theoretical properties of $\delta(\alpha)$, i.e., that it allows reporting a set of hypotheses $R$ as significant with control of the FWER, computing $\delta(\alpha)$ is in almost all practical cases not possible, since no closed form of the joint distribution of $\A$ is typically available.
Therefore, $\delta(\alpha)$ is estimated using the following resampling procedure. For an integer $r \geq 1$, define $\sample = \{ \dataset^0_1 , \dots , \dataset^0_r \}$
as a set of $r$ resamples taken i.i.d. from the null distribution $\probdist_0$.
For all $i \in [1 , r]$, 
define $d_i$ as the maximum test statistic, over the set $\A$, evaluated on the resample $\dataset^0_i$:
$d_i = \max_{A \in \A} \{ A(\dataset^0_i) \}$.
Assume, w.l.o.g., that the values $\{ d_1 , \dots , d_r \}$ are sorted in non-increasing order, such that $d_i \leq d_j , \forall i > j$.
The estimate $\tilde{\delta}(\alpha , \sample)$ of $\delta(\alpha)$ 
is defined as the $\alpha$-quantile of the set $\{ d_1 , \dots , d_r \}$:
$\tilde{\delta}(\alpha , \sample) = d_{ \lceil \alpha r \rceil } $.
From the definition of $\delta(\alpha)$, the estimate $\tilde{\delta}(\alpha , \sample)$ converges to $\delta(\alpha)$ as $r \rightarrow + \infty$, i.e., it holds 
$\lim_{r \rightarrow + \infty} \E_\sample [ \tilde{\delta}(\alpha , \sample) ] = \delta(\alpha) $. 

The WY resampling approach provides a powerful technique to identify significant results whenever one or more analyses are considered, and it is widely applicable since it only requires the ability of resampling datasets from the null distribution $\probdist_0$. 
However, in practice its application can be extremely computationally intensive, in particular for large datasets or complex data mining tasks, for three reasons. First, in order to obtain accurate estimates of $\delta(\alpha)$, the number $r$ of resampled datasets must be very large even for reasonably large values of $\alpha$. In fact, for commonly used values of $\alpha$ (i.e., $\alpha = 0.1$ or $\alpha=0.05$), $r$ should be of the order of $10^4$. Second, mining a dataset to compute $d_i = \max_{A \in \A} \{ A(\dataset^0_i) \}$ can be very expensive for large datasets, even when only one complex analysis is performed. Third, in many applications generating even one resample from the null distribution $\probdist_0$ is very expensive (e.g., for generating networks where the degrees of vertices are fixed).

\section{\algname\ Algorithm}
\label{sec:algo}

In this section we describe our algorithm \algname\ to identify statistically significant results with rigorous guarantees on the FWER. \algname\ uses a \emph{few-shot} resampling approach, and  can be used in every situation where resampling-based approaches apply, providing a significant advantage even when a single analysis is performed.
\algname\ builds on a novel probabilistic bound to the supremum deviation of statistics that is estimated on an extremely small number of resampled datasets (see Section~\ref{sec:analysis}).

\algname\ is described in Algorithm~\ref{algo:main}. At a high level, \algname\ computes an upper bound $\hat{\delta}(\sample)$ to the ideal threshold $\delta(\alpha)$ defined by the WY resampling approach, and reports as significant all analyses with values (in the real dataset) above $\hat{\delta}(\sample)$. The number $m$ of resampled datasets used to compute $\hat{\delta}(\sample)$ is set to the minimum value which guarantees that $\hat{\delta}(\sample) \ge \delta(\alpha)$ with probability at least $1-\alpha$, which leads to the required FWER guarantees (see Section~\ref{sec:analysis}).

More in details, \algname\ starts by computing the number $m$ of resampled datasets, that depends on the FWER upper bound $\alpha$ provided in input by the user (line~\ref{alg:numresdef}). It then generates a set $\sample = \{ \dataset^0_1 , \dataset^0_2 , \dots , \dataset^0_m \}$ of $m$ resampled datasets, 
 that are drawn i.i.d. from the null distribution $\probdist_0$ (line~\ref{alg:resample}). 
 The resampled datasets are generated with the procedure \textsc{Resample($\probdist_0 , m$)}. 
 Then, \algname\ computes a threshold $\hat{\delta}(\sample)$ by evaluating the analyses $\A$ on the $m$ resampled datasets $\sample$ (line~\ref{alg:deltahat}). The definition of $\hat{\delta}(\sample)$ is extremely simple, since $\hat{\delta}(\sample)$ is the maximum value observed in the resampled datasets over the performed analyses. The set $R$ of results is then computed by finding the analyses with values in the real dataset that exceed the threshold $\hat{\delta}(\sample)$ (line~\ref{alg:rsetdef}).

\begin{algorithm}[htb]
\SetNoFillComment%
%\DontPrintSemicolon% Old LaTeX installations require \dontprintsemicolon
  \KwIn{Set of analyses $\A$; dataset $\dataset$; null distr. $\probdist_0$; $\alpha \in (0,1)$.}
  \KwOut{\mbox{Set $R \subseteq \A$ of significant analyses with FWER $\le \alpha$.}} 
  $m \gets \left\lceil \frac{ \ln \bigl( \frac{1}{\alpha} \bigr) }{ \ln \bigl( \frac{1}{1 - \alpha} \bigr) } \right\rceil$\; \label{alg:numresdef}
  $\sample \gets $ \textsc{Resample($\probdist_0 , m$)}\; \label{alg:resample}
  $\hat{\delta}(\sample) \gets \max_{\dataset^0 \in \sample , A \in \A}  A(\dataset^0) $\; \label{alg:deltahat}
  $R \gets \bigl\{ A \in \A : A(\dataset) > \hat{\delta}(\sample) \bigr\} $\label{alg:rsetdef}\;
  \textbf{return} $R$\label{alg:rsetout}\;
  \caption{\algname}\label{algo:main}
\end{algorithm}

Note that the number $m$ of resampled dataset required by \algname\ is much smaller than current approaches, which is of the order of $10^4$. For example, \algname\ requires only $m=22$ resampled datasets for $\alpha = 0.1$, and $m=59$ resampled datasets for $\alpha = 0.05$, providing a significant reduction over the state-of-the-art.

\algname\ makes use of the procedure \textsc{Resample($\probdist_0 , m$)} to generate the $m$ resampled datasets from the null distribution $\probdist_0$. The specific resampling procedure depends on the datasets, data mining tasks, and null hypothesis of interest. These must be specified by the user and are captured by the null distribution $\gamma_0$ provided in input to \algname. However, \algname\ can employ any resampling procedure, with no restriction, and is therefore applicable to any scenario where resampling approaches are employed.

\subsection{Analysis}
\label{sec:analysis}

We now prove that \algname\ provides the required guarantees on the FWER. Due to space constraints, all proofs are in the Appendix.

We  start by proving the following technical result, which shows that, under the null hypothesis $\gamma_0$, the maximum deviation has probability at least $\alpha$ to have a value greater \emph{or equal} than the ideal threshold $\delta(\alpha)$ defined by the WY resampling approach.  Intuitively, this is trivially true when $\max_{A \in \A} \{ A( \dataset^0 ) \}$ is a continuous random variable, while it follows from the definition of $\delta(\alpha)$ (see Section~\ref{sec:wyresampling}) when $\max_{A \in \A} \{ A( \dataset^0 ) \}$ is a discrete random variable.

\begin{lemma}
\label{lemma:oneprobbound}
Let $\dataset^0$ be a resample taken from $\probdist_0$. 
Then, it holds
\[
\Pr_{\dataset^0 \sim \probdist_0} \Bigl( \max_{A \in \A} \{ A( \dataset^0 ) \} \geq \delta(\alpha) \Bigr) \geq \alpha .
\]
\end{lemma}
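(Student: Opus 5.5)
Let $M = \max_{A \in \A} \{ A(\dataset^0) \}$ with $\dataset^0 \sim \probdist_0$, and let $F(x) = \Pr(M > x)$ be its survival function. The plan is to use only elementary properties of $F$ together with the definition of $\delta(\alpha)$ as the smallest $x$ with $F(x) \le \alpha$. First we check that this minimum is attained. Since the analyses are bounded, $M$ is a bounded real random variable, so the set $\{ x : F(x) \le \alpha \}$ is nonempty (it contains every $x \ge \sup M$). It is also bounded below, because $F(x) = 1 > \alpha$ for $x$ below $\inf M$ (recall $\alpha < 1$). The function $F$ is non-increasing and right-continuous, so this set is a closed half-line $[\delta(\alpha), +\infty)$, and the $\argmin$ is well defined.

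The key step is to write $\Pr(M \ge \delta(\alpha))$ as a limit from the left:
\[
\Pr\bigl(M \ge \delta(\alpha)\bigr) = \lim_{\epsilon \downarrow 0} \Pr\bigl(M > \delta(\alpha) - \epsilon\bigr) = \lim_{\epsilon \downarrow 0} F\bigl(\delta(\alpha) - \epsilon\bigr).
\]
This follows from continuity of probability measures from above, since the events $\{M > \delta(\alpha) - 1/n\}$ decrease to $\{M \ge \delta(\alpha)\}$. For every $\epsilon > 0$, the point $\delta(\alpha) - \epsilon$ lies strictly below the minimizer. By the definition of $\delta(\alpha)$ it therefore cannot satisfy the constraint, so $F(\delta(\alpha) - \epsilon) > \alpha$. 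Passing to the limit gives $\Pr(M \ge \delta(\alpha)) \ge \alpha$, which is the claim.

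The main point needing care is the first step: making sure $\delta(\alpha)$ is a genuine minimum rather than an infimum. This is exactly where right-continuity of $F$ and boundedness of the $A_i$ are used. Once that holds, the limit argument is routine. It works whether $M$ is continuous or discrete. In the continuous case, equality $F(\delta(\alpha)) = \alpha$ already suffices. In the discrete case the inequality can be strict, because of an atom of $M$ at $\delta(\alpha)$.
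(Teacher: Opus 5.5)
Your proof is correct and follows the paper's argument. Minimality of $\delta(\alpha)$ gives $\Pr(M > \delta(\alpha) - \epsilon) > \alpha$ for every $\epsilon > 0$, and monotone continuity of probability turns this into $\Pr(M \ge \delta(\alpha)) \ge \alpha$; the paper passes to complements and uses continuity from below on $\{M \le \delta(\alpha) - \varepsilon_i\} \uparrow \{M < \delta(\alpha)\}$, whereas you use continuity from above directly, which is equivalent. Your extra check that the minimum is attained is rigor the paper skips, but this lemma only uses that $\delta(\alpha)$ is finite and that points strictly below it violate the constraint, so it would also hold with $\delta(\alpha)$ read as an infimum.
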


Now, consider again the $m \geq 1$ resamples $\sample = \{ \dataset^0_1 , \dataset^0_2 , \dots , \dataset^0_m \}$ drawn i.i.d. from the null distribution $\probdist_0$ by \algname.
Recall that 
$\hat{\delta}(\sample) = \max_{\dataset^0 \in \sample , A \in \A} \{ A(\dataset^0) \}$
is the maximum test statistic computed over all resamples. 
The following result proves that, if $m$ is large enough, then $\hat{\delta}(\sample)$ provides an upper bound to $\delta(\alpha)$ with probability at least $1 - \alpha$.

\begin{lemma}
\label{lemma:numresamples}
Let $m \ge \Bigl \lceil \frac{\ln\left(\frac{1}{\alpha}\right)}{\ln \left( \frac{1}{1 - \alpha} \right)} \Bigr \rceil $.
Then  
$\Pr_{\sample} \bigl( \hat{\delta}(\sample) \geq \delta(\alpha) \bigr) \geq 1 - \alpha$.
\end{lemma}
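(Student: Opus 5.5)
We bound the probability of the complementary event, $\hat{\delta}(\sample) < \delta(\alpha)$. By definition of $\hat{\delta}(\sample)$ as the maximum over all resamples and all analyses, this event holds if and only if, for every $j \in [1,m]$, the per-resample maximum $d_j = \max_{A \in \A}\{A(\dataset^0_j)\}$ satisfies $d_j < \delta(\alpha)$. The resamples $\dataset^0_1,\dots,\dataset^0_m$ are drawn i.i.d. from $\probdist_0$, so the $d_j$ are i.i.d. as well. Hence
\[
\Pr_{\sample}\bigl(\hat{\delta}(\sample) < \delta(\alpha)\bigr) = \prod_{j=1}^{m} \Pr_{\dataset^0_j \sim \probdist_0}\bigl( d_j < \delta(\alpha)\bigr) = \Bigl( 1 - \Pr_{\dataset^0 \sim \probdist_0}\bigl( \max_{A \in \A}\{A(\dataset^0)\} \geq \delta(\alpha) \bigr) \Bigr)^{m}.
\]

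Next we invoke Lemma~\ref{lemma:oneprobbound}. It gives $\Pr(\max_{A \in \A}\{A(\dataset^0)\} \geq \delta(\alpha)) \geq \alpha$, so each factor is at most $1-\alpha$. The failure probability is therefore at most $(1-\alpha)^m$. It remains to show $(1-\alpha)^m \le \alpha$ for the prescribed $m$. Taking logarithms, this is equivalent to $m \ln(1-\alpha) \le \ln \alpha$, i.e., $m \ln\frac{1}{1-\alpha} \ge \ln\frac{1}{\alpha}$. Since $\alpha \in (0,1)$, the quantity $\ln\frac{1}{1-\alpha}$ is positive, so this is equivalent to $m \ge \ln(1/\alpha)/\ln(1/(1-\alpha))$. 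This holds because $m$ is at least the ceiling of that ratio. Taking complements yields $\Pr_{\sample}(\hat{\delta}(\sample) \geq \delta(\alpha)) \geq 1-\alpha$.

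The only delicate point is the direction of the inequalities and the use of ``$\geq$'' rather than ``$>$''. The event we need to avoid is that every $d_j$ is strictly below $\delta(\alpha)$. Its single-resample complement is exactly the event controlled by Lemma~\ref{lemma:oneprobbound}, including the case of atoms at $\delta(\alpha)$ in discrete distributions. Lemma~\ref{lemma:oneprobbound} was stated precisely for this purpose, so once the events are written carefully the rest is routine. We should also note the degenerate case $\alpha$ close to $1$: there $m=1$ suffices, and the bound $(1-\alpha)\le\alpha$ is consistent with the ceiling formula.
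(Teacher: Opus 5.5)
Your proof is correct and follows the paper's argument essentially step for step. You write the complementary event $\hat{\delta}(\sample) < \delta(\alpha)$ as an intersection of $m$ independent per-resample events, bound each factor by $1-\alpha$ via Lemma~\ref{lemma:oneprobbound}, and conclude from $(1-\alpha)^m \le \alpha \iff m \ge \ln(1/\alpha)/\ln(1/(1-\alpha))$; your remarks on the non-strict inequality and the case $\alpha \ge 1/2$ (where $m=1$) are consistent with the paper's version.
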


Lemma~\ref{lemma:numresamples} implies that, when $m$ is set as in \algname, the set  
$R = \{ A_i : A_i(\dataset) > \hat{\delta}(\sample) \}$
can be flagged as significant with FWER $\leq \alpha$, i.e., it holds 
$\Pr( R \cap \tnulls \neq \emptyset ) \leq \alpha $ 
(since $R \subseteq \{ A_i : A_i(\dataset) > \delta(\alpha) \}$). 
This directly implies the guarantees of \algname.

\begin{corollary}
The output of \algname\ has FWER $\leq \alpha$.
\end{corollary}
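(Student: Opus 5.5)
The plan is to bound the FWER directly through an exchangeability argument, rather than splitting on the event $\hat{\delta}(\sample) \geq \delta(\alpha)$ of Lemma~\ref{lemma:numresamples}. The split only gives $\alpha$ plus the failure probability of that event, which is too weak. Since $R = \{A : A(\dataset) > \hat{\delta}(\sample)\}$, a false discovery occurs exactly when $Y := \max_{A \in \tnulls} A(\dataset) > \hat{\delta}(\sample)$. So
\[
\mathrm{FWER} = \Pr\bigl( Y > \hat{\delta}(\sample) \bigr).
\]
Let $X = \max_{A \in \A} A(\dataset^0)$ with $\dataset^0 \sim \probdist_0$, and let $X_1,\dots,X_m$ be the corresponding maxima on the resamples. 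Then $\hat{\delta}(\sample) = \max_j X_j$. The resamples are drawn independently of $\dataset$, so $Y$ is independent of $\hat{\delta}(\sample)$.

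\textbf{Step 1 (domination).} Using subset pivotality, as in the analysis behind Lemma~\ref{lemma:fwerguaranteesres}, the law of $(A(\dataset))_{A \in \tnulls}$ equals that of $(A(\dataset^0))_{A \in \tnulls}$. Hence $Y$ has the law of $\max_{A\in\tnulls} A(\dataset^0) \leq X$, so $Y$ is stochastically dominated by $X$. Conditioning on $\hat{\delta}(\sample) = t$ and using independence gives $\Pr(Y > t) \leq \Pr(X > t)$. Integrating over $t$ yields
\[
\mathrm{FWER} \leq \Pr\bigl(X_0 > \max_{1\le j\le m} X_j\bigr),
\]
where $X_0$ is a fresh i.i.d. copy of $X$.

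\textbf{Step 2 (exchangeability).} The variables $X_0,\dots,X_m$ are i.i.d. The events $\{X_i > \max_{j\neq i} X_j\}$ for $i=0,\dots,m$ are pairwise disjoint, since at most one variable can be the strict unique maximum. By symmetry they all have the same probability, so each has probability at most $1/(m+1)$. This needs no continuity assumption, because ties only reduce the probability. Therefore
\[
\mathrm{FWER} \le \frac{1}{m+1}.
\]

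\textbf{Step 3 (arithmetic on $m$).} I expect the main obstacle to be purely numerical: showing that $m = \lceil f(\alpha)\rceil$ with $f(\alpha)=\ln(1/\alpha)/\ln(1/(1-\alpha))$ satisfies $m+1 \ge 1/\alpha$. I would handle it in three cases.
\begin{itemize}
\item For $\alpha \le 1/e$: use $\ln\frac{1}{1-\alpha} \le \frac{\alpha}{1-\alpha}$ to get $f(\alpha) \ge \frac{1-\alpha}{\alpha}\ln\frac1\alpha \ge \frac{1-\alpha}{\alpha} = \frac1\alpha - 1$, since $\ln(1/\alpha)\ge1$.
\item For $1/e<\alpha<1/2$: we have $f(\alpha)>1$, because $\ln(1/\alpha)>\ln(1/(1-\alpha))$, so $m \ge 2$. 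Also $1/\alpha - 1 < e-1 < 2$, so $m+1 \ge 1/\alpha$ holds.
\item For $\alpha\ge 1/2$: $m\ge1\ge 1/\alpha-1$.
\end{itemize}
In all cases $\mathrm{FWER}\le 1/(m+1)\le\alpha$, which proves the corollary for \algname\ exactly as stated.
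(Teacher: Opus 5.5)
Your proof is correct, and it takes a genuinely different route from the paper. The paper works through the quantile $\delta(\alpha)$. Lemma~\ref{lemma:oneprobbound} and Lemma~\ref{lemma:numresamples} give $\hat{\delta}(\sample)\ge\delta(\alpha)$ with probability $\ge 1-\alpha$; on that event $R\subseteq\{A_i : A_i(\dataset)>\delta(\alpha)\}$, and Lemma~\ref{lemma:fwerguaranteesres} finishes. Strictly, this proves FWER $\le\alpha$ only on the good event of Lemma~\ref{lemma:numresamples}, i.e., a guarantee that holds with probability $\ge 1-\alpha$ over $\sample$. Read as an unconditional bound, it gives only $\alpha+(1-\alpha)^m\le 2\alpha$, which is exactly the weakness you flagged.

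Your rank/exchangeability argument never touches $\delta(\alpha)$ and needs neither Lemma~\ref{lemma:oneprobbound} nor Lemma~\ref{lemma:numresamples}. It proves the unconditional statement with the sharper bound $1/(m+1)$. This bound is attained when the maximum statistic is continuous and all nulls are true, which is consistent with the paper's empirical FWER lying between $\alpha/3$ and $\alpha$. It also shows that $\lceil 1/\alpha\rceil-1$ resamples would already suffice for unconditional control. So the paper's $m$ is minimal only for the stronger property that the threshold dominates $\delta(\alpha)$ with probability $1-\alpha$.

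What you pay is twofold. You use $\Pr(\max_{A\in\tnulls}A(\dataset)>t)\le\Pr(\max_{A\in\A}A(\dataset^0)>t)$ at every $t$, not only at $t=\delta(\alpha)$. You also use independence of $\sample$ from $\dataset$. Both follow from subset pivotality and the i.i.d.\ resampling model as the paper states them. In return, the paper's route yields a statement about the threshold itself, and that form is what its power bound (Lemma~\ref{thm:powerguarantees}) and the order-statistic generalization reuse.

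Finally, your three-case Step~3 is correct but collapses to one line. The choice of $m$ gives $(1-\alpha)^m\le\alpha$, and Bernoulli's inequality $(1-\alpha)^m\ge 1-m\alpha$ then yields $(m+1)\alpha\ge 1$.
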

To better interpret the bound to $m$ given in Lemma~\ref{lemma:numresamples}, 
we observe that, from the fact $e^x \geq 1+x$, it holds $1/\ln(\frac{1}{1-\alpha}) \leq 1/\alpha$; therefore, $m \leq \left\lceil \ln(1/\alpha) / \alpha \right\rceil$. 
This implies that the number $m$ of resamples used by \algname\ 
scales nicely w.r.t. $\alpha$. 
Finally, we remark that the value $m$ used by \algname\ is the \emph{minimum} number of resamples to control the FWER; 
this property is clearly useful for computational reasons, but also to maximize the statistical power, as increasing the size of $\sample$ leads to higher values of the threshold $\hat{\delta}(\sample)$.

\textbf{Normalization for Multiple Analyses.} 
While the guarantees of \algname\ hold for any set $\A$ of test statistics, 
a common issue of resampling-based approaches is that the statistical power may be reduced when the functions $A \in \A$ have drastically different scale (and/or variance). 
For instance, consider two functions $A_1 , A_2$ with 
$\E_{\dataset_0}[ A_1(\dataset^0) ] \gg A_2(\dataset)$:
 it is highly unlikely that $A_2$ will be returned as significant, regardless of how far $A_2(\dataset)$ is from what expected under the null distribution, since 
$\delta(\alpha) \geq \E_{\dataset_0}[ A_1(\dataset^0) ]$. 
To improve the power in the case of heterogeneous test statistics, we show that they can be normalized without affecting any theoretical guarantee. 
For any $A \in \A$, if its expected value $\mu_A = \E_{\dataset_0}[ A(\dataset^0) ]$ and its variance $\sigma^2_A = \Var_{\dataset_0}[ A(\dataset^0) ]$ are known, we normalize $A$ as
$A^\odot(\dataset) = (A(\dataset) -  \mu_A)/\sigma_A$. 
($A^\odot$ 
is typically known as the \emph{standard score} for $A$.)
Alternatively, we may replace $\sigma_A$ with the range $\Delta_A$ of $A$, i.e., $\Delta_A = \max_{\dataset^0 \in \X} A(\dataset^0) - \min_{\dataset^0 \in \X} A(\dataset^0)$.
When none of the moments of $A$ are known, we propose the following simple and practical solution, also used in our experiments.
Let $\sample^\prime$ be a set of $m^\prime$ i.i.d. resamples taken from $\probdist_0$. 
Then, $\mu_A$ and $\sigma^2_A$ can be replaced by empirical estimates computed on $\sample^\prime$ (details are in Appendix~\ref{sec:appendixnorm}). 
Note that one may be tempted to estimate the moments of $A$ using the resamples $\sample$ generated by \algname; 
however,
this choice may violate the FWER guarantees due to the correlation between the definition of $A^\odot$ and the bound $\hat{\delta}(\sample)$ to its $\alpha$-quantile. 
In our experiments, when normalization was needed, we use an extremely small resampled set $\sample^\prime$ ($m^\prime = 10$), which always provided accurate results. 
\ifextversion
In Appendix~\ref{sec:appxnormalization} we test other choices of $m^\prime$.
\else
In~\cite{fewrsextended} we test other choices of $m^\prime$.
\fi

Given the set $\A^\odot = \{ A_1^\odot , \dots , A_k^\odot \}$ of normalized test statistics, \algname\ can be applied to $\A^\odot$ instead of $\A$, without affecting the FWER guarantees, as stated in the following result. 
\begin{corollary}
\mbox{\!\!The output of \algname$(\A^\odot , \dataset, \probdist_0 , \alpha)$ has FWER $\leq \alpha$.}
\end{corollary}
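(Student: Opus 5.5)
The plan is to reduce the statement to the unnormalized case, i.e., to Lemma~\ref{lemma:numresamples} combined with Lemma~\ref{lemma:fwerguaranteesres}, by observing that $\A^\odot$ is itself a legitimate family of bounded test statistics, \emph{provided} the normalization constants do not depend on the data used by \algname.

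First, when $\mu_A$, $\sigma_A$ (or $\Delta_A$) are exact moments of $\probdist_0$, each $A^\odot$ is a fixed, bounded, real-valued function on $\X$. The map $x \mapsto (x-\mu_A)/\sigma_A$ is strictly increasing because $\sigma_A>0$. Hence the null hypothesis for $A$ holds if and only if it holds for $A^\odot$, so the set $\tnulls$ maps to $\tnulls^\odot$. Subset pivotality is also preserved, since the joint law of the normalized statistics is an invertible coordinatewise transform of the original joint law. Defining $\delta^\odot(\alpha)$ for $\A^\odot$ as in Section~\ref{sec:wyresampling}, Lemma~\ref{lemma:fwerguaranteesres} applies verbatim to $\A^\odot$. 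Lemma~\ref{lemma:oneprobbound} and Lemma~\ref{lemma:numresamples} only use that the $m$ resamples are i.i.d.\ from $\probdist_0$ and that $\max_{A}A^\odot(\dataset^0)$ is a real random variable. They therefore give $\Pr_\sample(\hat{\delta}^\odot(\sample)\ge\delta^\odot(\alpha))\ge 1-\alpha$.

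Combining the two, the output is contained in $\{A^\odot : A^\odot(\dataset)>\delta^\odot(\alpha)\}$ except on an event of probability at most $\alpha$. This is the same chain of reasoning used for the first Corollary.

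The main obstacle is the empirical normalization, where $\mu_A,\sigma_A$ are estimated from $\sample'$. Here I would condition on $\sample'$. Because $\sample'$ is drawn independently of $\sample$ and of $\dataset$, once $\sample'$ is fixed each $A^\odot$ is again a deterministic bounded function of the dataset. The argument above then holds conditionally with threshold $\delta^\odot_{\sample'}(\alpha)$, and the conditional FWER is at most $\alpha$ for every realization of $\sample'$. Integrating over $\sample'$ gives the unconditional bound.

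Two details need care:
\begin{itemize}
\item A degenerate estimate with zero variance can be handled by any fixed positive fallback scale, since the argument never uses the specific value of the scale.
\item Independence between $\sample'$ and $\sample$ is exactly what breaks if the moments are estimated from $\sample$ itself, which matches the paper's warning about that choice.
\end{itemize}

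It is also worth combining the two error sources explicitly. The FWER event requires either $\hat{\delta}^\odot<\delta^\odot$, which has probability at most $\alpha$, or a true null exceeding $\delta^\odot$, which also has probability at most $\alpha$. This would give $2\alpha$, so I need to match the exact accounting used for the first Corollary. I expect that accounting to exploit the independence of $\dataset$ and $\sample$: bound $\Pr(\max_{\tnulls}A^\odot(\dataset) > \hat{\delta}^\odot(\sample))$ directly. Conditioning on $\dataset$, $\hat{\delta}^\odot(\sample)$ is the maximum of $m$ i.i.d.\ null maxima, and $\max_{\tnulls}A^\odot(\dataset)$ is stochastically dominated by a single null maximum. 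The probability is then at most $1/(m+1)$, or it follows from the $(1-\alpha)^m\le\alpha$ computation. I would reuse whichever of these the proof of the first Corollary relies on.
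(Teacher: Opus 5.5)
Your reduction matches the paper's own argument, and given the first Corollary your proof is complete. The paper justifies this corollary only by noting that $\A^\odot$, built from a holdout $\sample'$ drawn independently of $\sample$ and with a strictly positive scale (thanks to $\tau>0$), is just another family of bounded statistics to which the unnormalized guarantee applies; your conditioning on $\sample'$ followed by integration is the rigorous version of that step.

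Your accounting worry concerns the first Corollary, not this one, and it is well founded. The paper's one-line justification there, $R \subseteq \{ A_i : A_i(\dataset) > \delta(\alpha) \}$, holds only on the good event of Lemma~\ref{lemma:numresamples}, so read literally it gives $2\alpha$. Your exchangeability route is what actually gives $\alpha$: the bound $\Pr\bigl(\max_{A \in \tnulls} A^\odot(\dataset) > \hat{\delta}(\sample)\bigr) \le 1/(m+1)$ holds, using the subset-pivotality domination at every threshold and not only at $\delta(\alpha)$. Commit to that route and drop the alternative ``it follows from the $(1-\alpha)^m\le\alpha$ computation'', which does not close the gap by itself. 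What $(1-\alpha)^m \le \alpha$ does give you is $\alpha \ge 1/(m+1)$: otherwise Bernoulli's inequality yields $(1-\alpha)^m \ge 1 - m\alpha > 1/(m+1) > \alpha$.
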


\textbf{Power of \algname.} 
The previous section focuses on the false positive guarantees of \algname. 
We also present several guarantees on the \emph{power} of \algname. 
Remarkably, our results do not require any assumption on the distribution of significant patterns. 
In particular, we show that the threshold $\hat{\delta}(\sample)$ computed by \algname\ is close to the (ideal) threshold $\delta(\alpha)$;
more formally,  
$\hat{\delta}(\sample)$ does not exceed $\delta(\alpha / k)$ with probability $\geq 1 - t$, where $k$ depends on $\alpha$ and $t$. 
\begin{lemma}
\label{thm:powerguarantees}
For any $t \in (0,1)$, let $k = \frac{m \alpha}{ \ln(\frac{1}{1-t}) }$.
Then, it holds
\begin{align*}
\Pr ( \hat{\delta}(\sample) > \delta(\alpha / k) ) \leq t .
\end{align*} 
\end{lemma}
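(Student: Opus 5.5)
Write $M_j = \max_{A \in \A} A(\dataset^0_j)$ for the resamples in $\sample$, and let $\beta = \alpha/k = \ln(\frac{1}{1-t})/m$. The event $\hat{\delta}(\sample) > \delta(\beta)$ is exactly the union over $j \in [1,m]$ of the events $M_j > \delta(\beta)$. The $M_j$ are i.i.d. copies of $\max_{A\in\A} A(\dataset^0)$ with $\dataset^0 \sim \probdist_0$, so
\[
\Pr\bigl(\hat{\delta}(\sample) > \delta(\beta)\bigr) = 1 - \Bigl(1 - \Pr_{\dataset^0 \sim \probdist_0}\bigl(\max_{A \in \A} A(\dataset^0) > \delta(\beta)\bigr)\Bigr)^m .
\]
The proof reduces to bounding the single-resample tail probability by $\beta$.

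\textbf{Step 1 (single-resample tail).} I will show that $\Pr(\max_{A} A(\dataset^0) > \delta(\beta)) \le \beta$ for any $\beta \in (0,1]$. This is the right-continuity of the survival function $x \mapsto \Pr(\max_A A(\dataset^0) > x)$. That function is non-increasing and right-continuous, so the set $\{x : \Pr(\cdot > x) \le \beta\}$ is a closed half-line $[x_0, \infty)$. Its minimum $\delta(\beta)$ is therefore attained and satisfies the defining inequality.

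This step is where I expect the only subtlety. The paper writes $\delta$ with an $\argmin$ over a set, and I must argue that this minimum exists rather than being only an infimum. Boundedness of the $A_i$ makes the set nonempty and bounded below, and right-continuity gives closedness. This is the same kind of reasoning that underlies Lemma~\ref{lemma:oneprobbound}, but applied from the other side.

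\textbf{Step 2 (combine).} Step 1 gives
\[
\Pr\bigl(\hat{\delta}(\sample) > \delta(\beta)\bigr) \le 1 - (1-\beta)^m .
\]
It remains to show $(1-\beta)^m \ge 1-t$, that is, $m \ln\frac{1}{1-\beta} \le \ln\frac{1}{1-t}$. Since $-\ln(1-\beta) \ge \beta$, this inequality does not follow from $m\beta = \ln\frac{1}{1-t}$ alone, and the direction needs care.

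I would first try the union bound in place of the exact product:
\[
\Pr\bigl(\hat{\delta}(\sample) > \delta(\beta)\bigr) \le m\beta = \ln\tfrac{1}{1-t}.
\]
This bounds the probability by $\ln\frac{1}{1-t}$, which is slightly larger than $t$. So the constant in $k$ may need to be read as $\beta = -\ln(1-t)/m$ inside the exponent, i.e.\ $(1-\beta)^m \approx e^{-m\beta} = 1-t$.

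To obtain exactly $t$, I will use $1-\beta \ge e^{-\beta/(1-\beta)}$. Alternatively, I will check whether the intended statement uses $e^{-m\beta}$ as the non-exceedance probability. If the exact inequality $(1-\beta)^m \ge 1-t$ fails for small $m$, I will handle that by noting that the paper's choice of $k$ corresponds to the Poisson/exponential approximation.

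The core argument is Step 1 plus independence. Reconciling the constant in $k$ is the main obstacle.
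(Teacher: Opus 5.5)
Your reduction and Step~1 match the paper's argument. Your right-continuity argument is a more careful version of the paper's ``from the definition of $\delta$''. Together they correctly give $\Pr(\hat{\delta}(\sample) \le \delta(\beta)) \ge (1-\beta)^m$ with $\beta = \alpha/k = \ln(\frac{1}{1-t})/m$.

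The gap is Step~2, and it cannot be closed. Since $1-\beta < e^{-\beta}$ for $\beta \in (0,1)$, you get $(1-\beta)^m < e^{-m\beta} = 1-t$, so the inequality you need is false. None of your patches help:
\begin{itemize}
\item the union bound gives $\ln(\frac{1}{1-t})$, which is larger than $t$;
\item the bound $1-\beta \ge e^{-\beta/(1-\beta)}$ gives only $(1-\beta)^m \ge (1-t)^{1/(1-\beta)}$, which is smaller than $1-t$;
\item a Poisson heuristic is not a proof.
\end{itemize}

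In fact the lemma is false as stated. Suppose $\max_{A \in \A} A(\dataset^0)$ is continuous, e.g.\ a single analysis that is uniform on $[0,1]$ under $\probdist_0$. Then its tail at $\delta(\beta)$ is exactly $\beta$, so $\Pr(\hat{\delta}(\sample) > \delta(\alpha/k)) = 1-(1-\beta)^m > t$. For $\alpha = 0.1$, $m = 22$, $t = 1/2$ this is about $0.5055$.

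The paper's proof makes exactly the slip you suspected. It rewrites $(1-\frac{\alpha}{k})^m \ge 1-t$ as $m \le \ln(\frac{1}{1-t})/\ln(\frac{k}{k-\alpha})$. It then bounds the right-hand side \emph{from above} by $k\ln(\frac{1}{1-t})/\alpha$ and checks $m \le k\ln(\frac{1}{1-t})/\alpha$. An upper bound on the right-hand side cannot certify that $m$ lies below it.

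The repair is to enlarge $k$ slightly. Your inequality $1-\beta \ge e^{-\beta/(1-\beta)}$ is the right tool once you impose $m\beta/(1-\beta) = L$ instead of $m\beta = L$, where $L = \ln(\frac{1}{1-t})$. That is, take $\beta = L/(m+L)$, i.e.\ $k = \frac{m\alpha}{L} + \alpha$. Then $(1-\beta)^m = (1+L/m)^{-m} \ge e^{-L} = 1-t$, and your Steps~1 and~2 give $\Pr(\hat{\delta}(\sample) > \delta(\alpha/k)) \le t$. The sharp choice is $k = \alpha/(1-(1-t)^{1/m})$, which makes $(1-\alpha/k)^m = 1-t$ exactly.

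Numerically the change is minor. For $\alpha = 0.1$, $m = 22$, $t = 1/2$, $k$ moves from $3.17$ to about $3.27$, or $3.22$ with the sharp choice. So the paper's qualitative power claim survives, but with $k = m\alpha/\ln(\frac{1}{1-t})$ exactly there is no proof to be found.
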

Interestingly, by setting $t=1/2$,  
we obtain that the \emph{median} value of the threshold $\hat{\delta}(\sample)$ computed by \algname\
will be within the ideal threshold $\delta(\alpha)$ by the modest factor $k \approx 1.44 \cdot \ln(1/\alpha)$, which is $k = 3.17$ when $\alpha = 0.1$, and $k = 4.26$ when $\alpha = 0.05$. 
This guarantees that \algname\ will robustly identify a significance threshold with guarantees on false discoveries, without a significant decrease in terms of power. 
We prove Lemma~\ref{thm:powerguarantees} in 
\ifextversion
Appendix~\ref{sec:app_power}, where 
\else
Appendix~\ref{sec:app_power}. 
In~\cite{fewrsextended} 
\fi
we also
describe and analyze a generalization of \algname:
such approach computes $\hat{\delta}(\sample)$ (defined as the maximum test statistic over the resamples in Algorithm~\ref{algo:main}) as the empirical quantile over an higher number of resamples.
This generalization improves power (i.e., by reducing the factor $k$) 
at the cost of a higher running time,
so it is useful when the generation of a higher number of resamples is possible, and it allows a more flexible exploration of the trade-off between computational resources and accuracy of the procedure.

\section{Experimental evaluation}
\label{sec:experiments}

This section presents the results of our experiments.
The goals of our experimental evaluation are to assess the statistical power and scalability of \algname\
on synthetic datasets, where a ground truth is known, and to compare it with state-of-the-art methods;
we also apply \algname\ to three realistic data mining tasks: 
frequent pattern mining from transactional datasets, 
the analysis of the diversity of interactions in labelled networks,
and mining patterns associated to a target label~\cite{pellegrina2024efficient}. 
For the first two scenarios, our resampling scheme considers expressive null models for transactional datasets~\cite{gionis2007assessing,preti2024alice} 
and random graphs~\cite{fosdick2018configuring,preti2025polaris}. 

\textbf{Experimental setup.} 
We implemented \algname\ in \texttt{Python}.\footnote{The code is available online \url{https://github.com/leonardopellegrina/FewRS}} 
All the experiments were run on a machine with 2.30 GHz Intel Xeon CPU, using up to $32$ cores, 1 TB of RAM, on Ubuntu~22.04. 
To generate resamples for transactional datasets and labelled graphs, we used efficient implementations from previous works~\cite{preti2024alice,preti2025polaris}.

\textbf{Synthetic experiments.} 
In this set of experiments we asses \algname\ on synthetic data, evaluating its statistical power and scalability over a wide choice of parameters. 
We consider a null distribution $\probdist_0$ that generates datasets $\dataset^0 \in \{ 0,1 \}^{n \times d}$ with $n=10^4$ samples and $d$ features, where each entry $\dataset^0_{j,i}$ of $\dataset^0$ is a Bernoulli random variable with parameter $1/2$, i.e., $\Pr( \dataset^0_{j,i} = 1 ) = \Pr( \dataset^0_{j,i} = 0 ) = 1/2 $, for all $j,i$. 
While the entries of the same feature are independent, 
we control the degree of correlation among the $d$ features by first setting all features $i>1$ as copies of the first feature, and then randomly flipping each entry of $\dataset^0$ independently at random with probability $\rho \in [0,1/2]$;
note that when $\rho = 0$ all the features are identical, and when $\rho = 1/2$ they are mutually independent. 
For each $i \in [1,d]$, we define 
$A_i$ as the average value of the $i$-th feature, i.e., $A_i(\dataset^0) = \frac{1}{n}\sum_{j=1}^n \dataset^0_{j,i}$. 
Thus, for each $i$ we test the hypothesis that the $i$-th feature has mean $\leq 1/2$. 
We fix $\alpha = 0.1$, and vary the correlation, by varying  $\rho$, and the number $d$ of features. 
For WY, we use $10^3$ resamples. 
Instead, \algname\ only requires $22$ resamples. 

First, we evaluate the statistical power of each method by estimating its empirical FWER with the following procedure: for each method, we compute its corrected significance threshold $\delta$; then, we draw $100$ resamples from $\probdist_0$ and compute the fraction of times that the maximum statistic $\max_i A_i$ is $> \delta$. 
We repeat these operations $10$ times, and report the average empirical FWER and its standard deviation. 
For the Bonferroni correction, we consider two variants that differ in the computation of the threshold $\delta$. 
The first variant, Bonferroni-E, sets $\delta$ using the exact quantile of the Binomial distribution; note that this approach assumes full knowledge of the distribution of each test statistic, i.e., a way to exactly compute its quantile, which is usually not available. 
The threshold $\delta$ is defined using the exact Binomial distribution, and a union bound over $d$ events:  
$\delta = \min \brpars{ x \in [1 , n] : \sum_{j=x+1}^n \binom{n}{i} \frac{1}{2}^n \leq \frac{\alpha}{d} } $. 
The second variant, Bonferroni-H, uses Hoeffding's bound~\cite{mitzenmacher2017probability}, which only assumes that each test statistic is an average of $n$ random variables bounded $\in [0,1]$, and a union bound over $d$ events, thus is more widely applicable. 
We have $\delta = \frac{1}{2} + \sqrt{ \ln(d / \alpha) / (2n) } $.

\ifextversion
Figure~\ref{fig:empiricalfwerfull} 
shows the results for this experiment, for four values of $\rho \in [0,1/2]$. 
\else
Figure~\ref{fig:empiricalfwerappendix} \textcolor{red}{(in the Appendix)}
shows the \textcolor{red}{results for $\rho = 0.05$} (results for other choices in~\cite{fewrsextended}). 
\fi
Note that higher values of the empirical FWER (below the nominal target $\alpha$) denote that the method is less conservative, thus more powerful.
The results clearly show that the empirical FWER of the WY resampling procedure is always very close to its nominal value $\alpha$, confirming that it is (asymptotically) optimal. 
Furthermore, the empirical FWER of \algname\ is always $\leq \alpha$, confirming the theoretical guarantees from Section~\ref{sec:analysis}.
While the power of \algname\ is slightly lower than WY, it is still comparable and stable over all choices of $\rho$ and $d$.
Remarkably, we observe that the average empirical FWER of \algname\ is $\geq \alpha/3$, confirming the bound to the power of \algname\ discussed in our theoretical analysis. 
For Bonferroni-E, while its empirical FWER is very close to $\alpha$ when $d=1$, or when all test statistics are independent ($\rho = 0.5$), it is overly conservative when they are correlated, resulting in subpar statistical power, particularly when the number $d$ of tests grows. 
Bonferroni-H, which uses less precise information on the distribution of each test statistic, is in all cases the most conservative approach.
We also vary $\alpha \in [ 0.1 , 0.01 ]$ 
\ifextversion
(Figures~\ref{fig:empiricalfwervsalpha} and~\ref{fig:empiricalfwervstimealpha}); 
\else
(Figures in~\cite{fewrsextended}); 
\fi
the results are analogous to the case $\alpha = 0.1$, confirming that, as expected, \algname\ preserves power and guarantees on the FWER in all cases.

Regarding computational resources, we note that we ran the experiments for the WY procedure only for $d \leq 10^3$, due to high running times (as running all the experiments would need $\approx$ a week to complete, we only report estimates of the running time of WY for $d > 10^3$), while both Bonferroni and \algname\ run very fast. 
We remark that the WY procedure poorly scales to higher values of the datasets size $n$, or to a higher number of resamples (i.e., to achieve a more accurate estimate of the quantile for lower values of $\alpha$). 
\ifextversion
Figure~\ref{fig:empiricalfwervstime} shows 
\else
In~\cite{fewrsextended} we show 
\fi
the running time of each method for several $d$, and we compare it with the achieved statistical power (i.e., the empirical FWER). 
As expected, \algname\ is $50 \times$ faster than WY, as it drastically reduces the number of resamples. 
Overall, these experiments show that \algname\ achieves a substantially better trade-off between scalability, allowing to handle large high-dimensional datasets using reasonable computational resources, and statistical power, that is comparable to the optimal WY resampling and significantly better than Bonferroni in realistic settings ($\rho < 0.5$).

\textbf{Analysis of transactional datasets.} 
In this experiment, we apply \algname\ to evaluate the statistical significance of collections of frequent patterns from transactional datasets~\cite{agrawal1993mining}.
We consider $14$ real-world datasets\footnote{From \url{https://archive.ics.uci.edu} and \url{http://fimi.uantwerpen.be}}
often used as benchmark in previous works;
the statistics are shown in Table~\ref{tab:datasets} (in the Appendix). 
We represent each dataset $\dataset \in \{ 0,1 \}^{n \times d}$ as a binary matrix, where each entry $\dataset_{i,j}$ is $1$ when the $j$-th items is contained in the $i$-th transaction, $0$ otherwise.
An itemset $X \subseteq [1 , d]$ is contained in the $i$-th transaction of $\dataset$ if all the items of $X$ are contained in the $i$-th row of $\dataset$, i.e., when $\prod_{j \in X} \dataset_{i,j} = 1$.
The frequency $f(X)$ of $X$ is defined as $f(X) = \frac{1}{n} \sum_{i=1}^n \prod_{j \in X} \dataset_{i,j}$, i.e., is the fraction of transactions of $\dataset$ that contains $X$.
Given a frequency threshold $\theta$, the set $\textsc{FI}(\dataset , \theta)$ contains all itemsets of $\dataset$ with $f(X) \geq \theta$: 
$\textsc{FI}(\dataset , \theta) = \{ X \subseteq \mathcal{I} : f(X) \geq \theta \}$. 
For all $c \in [1,d]$ we define the set $\textsc{FI}(\dataset , \theta , c)$ of frequent itemsets with cardinality $c$ as  
$\textsc{FI}(\dataset , \theta , c) = \{ X \subseteq \mathcal{I} : f(X) \geq \theta , |X| = c \}$. 
In this experiment we evaluate the significance of the set of frequent patterns $\textsc{FI}(\dataset , \theta)$ w.r.t. two different null models for the dataset $\dataset$.
The first model, called GMMT~\cite{gionis2007assessing}, defines $\probdist_0$ as the uniform distribution over the set of all datasets with the same marginals of $\dataset$:
for each $\dataset^0$ sampled from $\probdist_0$, the number of $1$'s in each row and column is the same as in $\dataset$. 
The second model \cite{preti2024alice}, denoted as ALICE, in addition to the column and row marginals, preserves the Bipartite Joint Degree Matrix of the graph representation of $\dataset$.
To evaluate the statistical significance of collections of frequent patterns, we define the test statistic $A(\dataset) = | \textsc{FI}(\dataset , \theta) |$, and we employ \algname\ to generate resamples from $\probdist_0$ to evaluate if the number of frequent patterns in the data is higher than what expected under the null distribution. 
To do so, \algname\
generates resamples $\sample = \{ \dataset^0_1 , \dots , \dataset^0_m \}$ from $\probdist_0$ and computes the significance threshold $\hat{\delta}(\sample) = \max_i A(\dataset^0_i)$. 
We fix $\alpha = 0.05$. 
Recall that, when $A(\dataset) > \hat{\delta}(\sample)$, we conclude that this measure is significant with FWER $\leq \alpha$. 
For both GMMT and ALICE, sampling from $\probdist_0$ is performed using algorithms based on the Markov Chain Monte Carlo (MCMC) and Metropolis-Hastings methods~\cite{peskun1973optimum,mitzenmacher2017probability,preti2024alice}.
For the MCMC, we use default setting for the number of iterations (which is in the order of $\mathcal{O}(|\dataset|)$, see Table~\ref{tab:datasets}), and impose a time limit of $4$ days to traverse the chain; when the time limit is exceeded, the current dataset is returned in output. 
We use the values of $\theta$ shown in Table~\ref{tab:datasets}. 
We remark that for this task there are no analytical procedures to compute the significance of the results, since no closed form of the distribution of $| \textsc{FI}(\dataset , \theta) |$ under these complex null models is available. 

Figure~\ref{fig:numfiandgraphs} shows the results for this experiment. 
The plots compare the number of frequent patterns $| \textsc{FI}(\dataset , \theta) |$ in the dataset $\dataset$
and the values of $| \textsc{FI}(\dataset^0_i , \theta) |$
computed from the resamples drawn from the null distributions, using both the GMMT and ALICE models. 
\ifextversion
The table of Figure~\ref{fig:restables} in Appendix reports 
\else
In the extended version~\cite{fewrsextended} we report 
\fi
the significance thresholds $\hat{\delta}(\sample)$ computed by \algname\ and the average $\overline{\text{FI}} = \frac{1}{m} \sum_{i=1}^m A(\dataset^0_i)$ of the test statistics over the resamples. 
First, we observe that in both models the values of the test statistic measured on the resamples are very concentrated and stable across different random draws. 
Regarding GMMT, from Figure~\ref{fig:numfiandgraphs} we clearly conclude that almost all datasets have a significant number of frequent patterns (since $A(\dataset) > \hat{\delta}(\sample)$). 
The only exception is the dataset retail, which contains a lower number of frequent patterns than what expected under the null. 
As observed by previous work \cite{gionis2007assessing}, this dataset contains structure that may be explained by the row and column marginals alone. 
Considering the ALICE model, we observed similar results, with some interesting differences. 
First, the distribution of $| \textsc{FI}(\dataset^0_i , \theta) |$ changes in some cases, e.g., for the dataset kosarak. 
Furthermore, the number of frequent patterns in retail is significant w.r.t. ALICE; 
in such a case, we may conclude that preserving additional properties of the input dataset in the null distribution $\probdist_0$ highlights additional structure that could not be detected with GMMT. 
Finally, note that \algname\ is in perfect agreement with the WY procedure; all hypotheses that WY would reject are also rejected by \algname, since their statistics $A(\dataset)$ are sensibly higher than the treshold $\hat{\delta}(\sample)$, which provides a guaranteed upper bound to the quantile $\delta(\alpha)$. 
Moreover, all hypotheses that would not be rejected by WY are also not provided in output by our approach, since they fall well below the average resampled values $\overline{\text{FI}}$ (thus also well below the $\alpha$-quantile). 

We then analyze the distribution of frequent patterns with a higher granularity by considering different itemsets' cardinalities. 
For each $c \in [1,10]$, we define the statistic $A_c(\dataset) = | \textsc{FI}(\dataset , \theta , c) |$ which measures the number of frequent itemsets with size $c$, i.e., containing $c$ items. 
As discussed in Section~\ref{sec:analysis}, 
computing $\hat{\delta}(\sample)$ using the test statistics $A_c$ directly may not be very effective, as we expect such functions to have different scale and range;
therefore, we consider the centered and range-normalized test statistics $A_c^\odot$ from Equation~\eqref{eq:teststatnormalization} (Appendix~\ref{sec:appendixnorm}), 
where $\sample^\prime$ is an independent set of $m^\prime = 10$ resamples from $\probdist_0$.
Thus, we define $\hat{\delta}(\sample) = \max_{i , c} A_c^\odot(\dataset_i^0)$
and report as significant all $c$ with $A_c^\odot(\dataset) > \hat{\delta}(\sample)$.

\begin{figure*}[ht]
\begin{subfigure}{.16\textwidth}
  \centering
  \includegraphics[width=\textwidth]{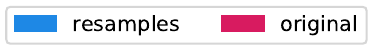}
\end{subfigure} \\
\begin{subfigure}{.2465\textwidth}
  \centering
  \includegraphics[width=\textwidth]{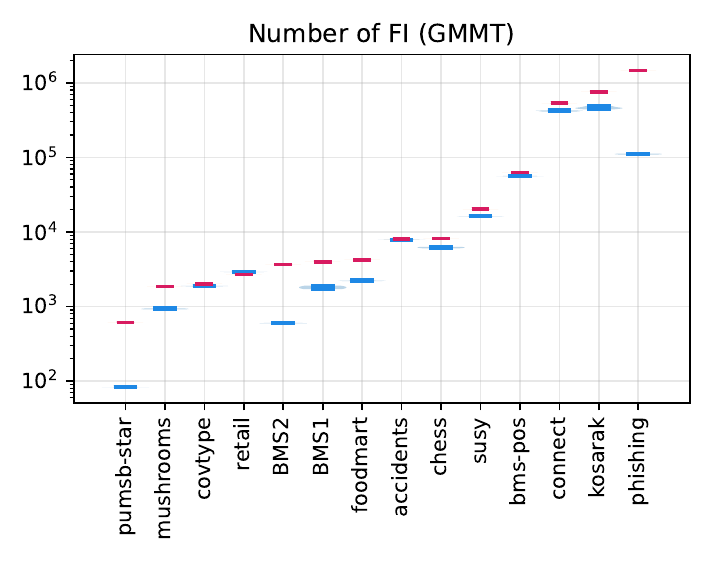} 
    %\caption{}
\end{subfigure}
\begin{subfigure}{.2465\textwidth}
  \includegraphics[width=\textwidth]{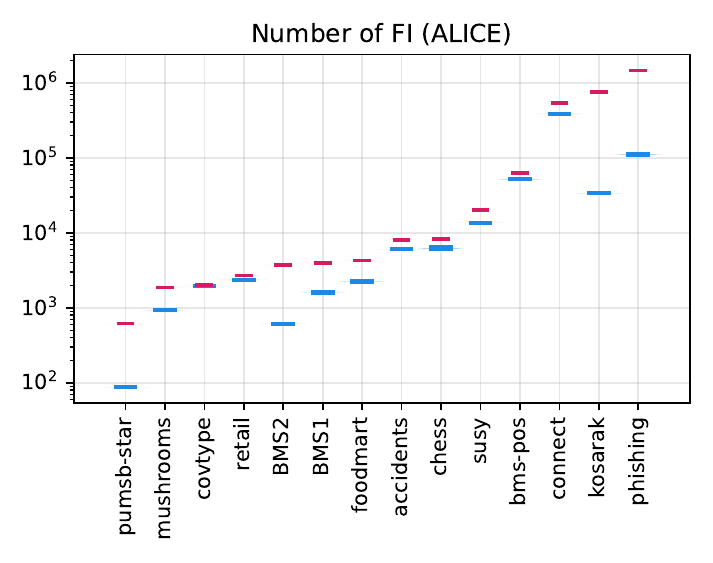}
  %\caption{}
\end{subfigure}
\begin{subfigure}{.2465\textwidth}
  \centering
  \includegraphics[width=\textwidth]{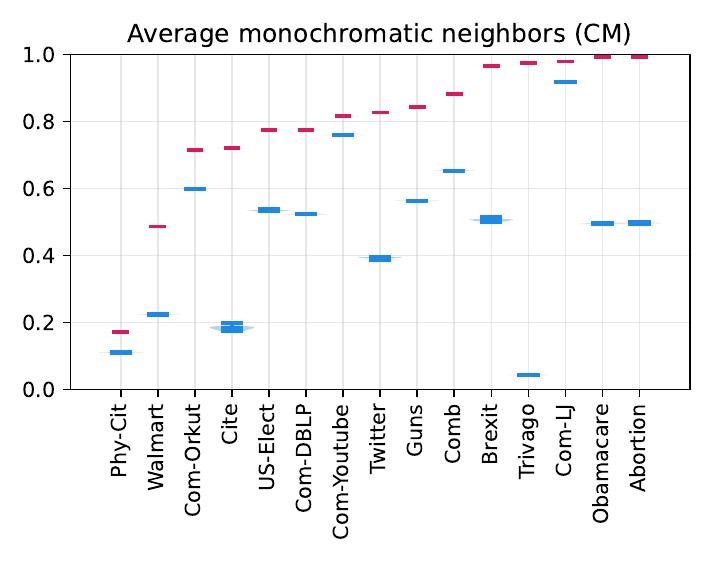} 
\end{subfigure}
\begin{subfigure}{.2465\textwidth}
  \centering
  \includegraphics[width=\textwidth]{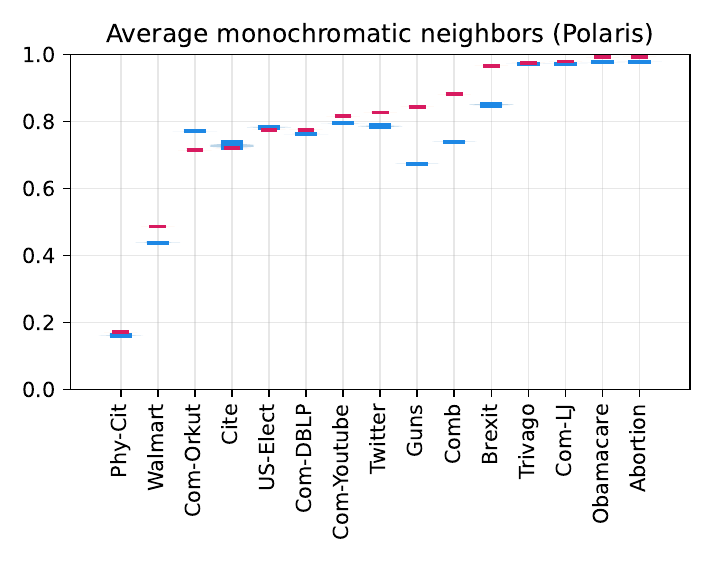}
  %\caption{}
\end{subfigure}
\caption{ 
Left: testing the significance of the number of FI under the GMMT and ALICE models.
Right: testing the significance of $M(G)$ under the CM and Polaris models.
}
\Description{Left: testing the significance of the number of FI under the GMMT and ALICE models.
Right: testing the significance of $M(G)$ under the CM and Polaris models.}
\label{fig:numfiandgraphs}
\end{figure*}

\begin{figure*}[ht]
\begin{subfigure}{.16\textwidth}
  \centering
  \includegraphics[width=\textwidth]{./figures/power-legend.pdf}
\end{subfigure} \\
\begin{subfigure}{.2465\textwidth}
  \centering
  \includegraphics[width=\textwidth]{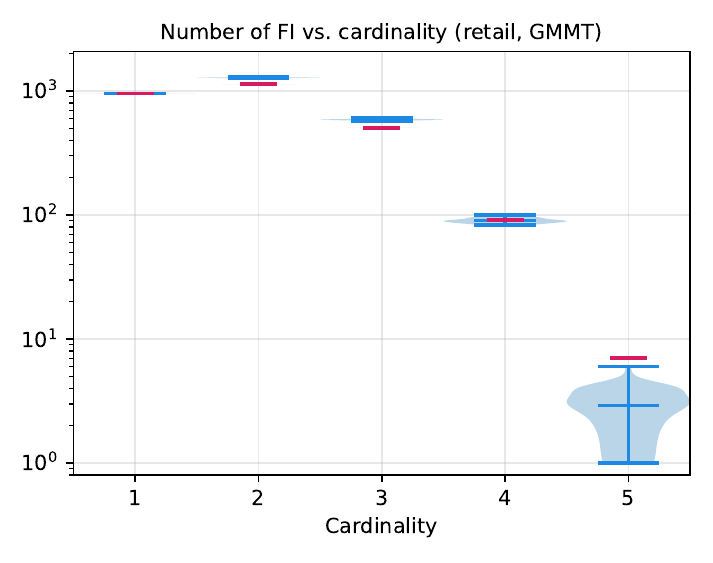} 
    %\caption{}
\end{subfigure}
\begin{subfigure}{.2465\textwidth}
  \includegraphics[width=\textwidth]{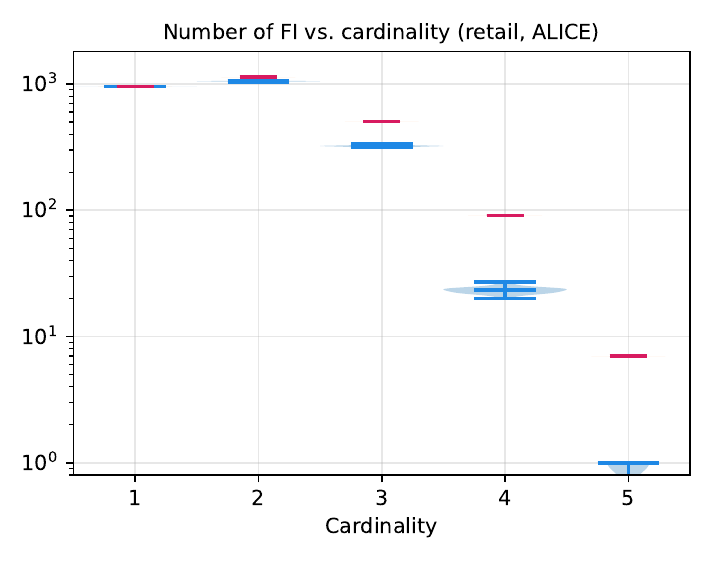}
  %\caption{}
\end{subfigure}
\begin{subfigure}{.2465\textwidth}
  \includegraphics[width=\textwidth]{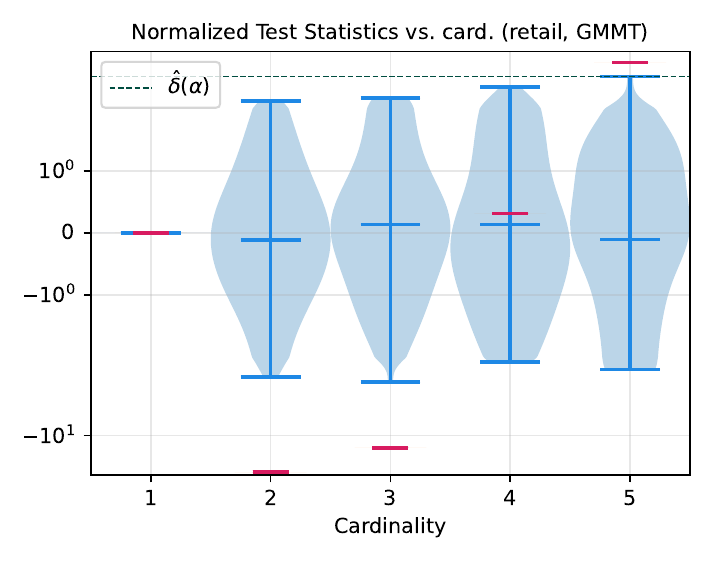}
  %\caption{}
\end{subfigure}
\begin{subfigure}{.2465\textwidth}
  \includegraphics[width=\textwidth]{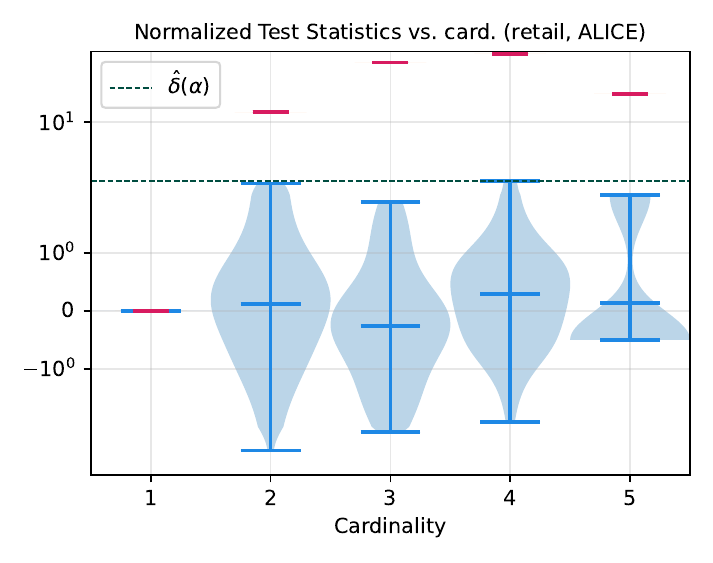}
  %\caption{}
\end{subfigure}
\caption{ 
Testing the significance of the number of FI for different cardinalities under the GMMT and ALICE models.
}
\Description{
Testing the significance of the number of FI for different cardinalities under the GMMT and ALICE models.}
\label{fig:numfilen}
\end{figure*}

\begin{figure*}%[ht]
\begin{subfigure}{.52\textwidth}
  \centering
 % \vspace{-2px}
  \includegraphics[width=.99\textwidth]{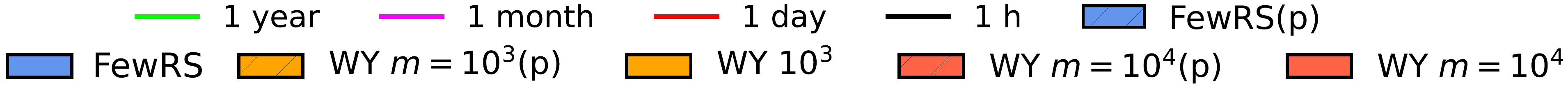}
\end{subfigure} \\
\begin{subfigure}{.455\textwidth}
  \centering
  \includegraphics[width=\textwidth]{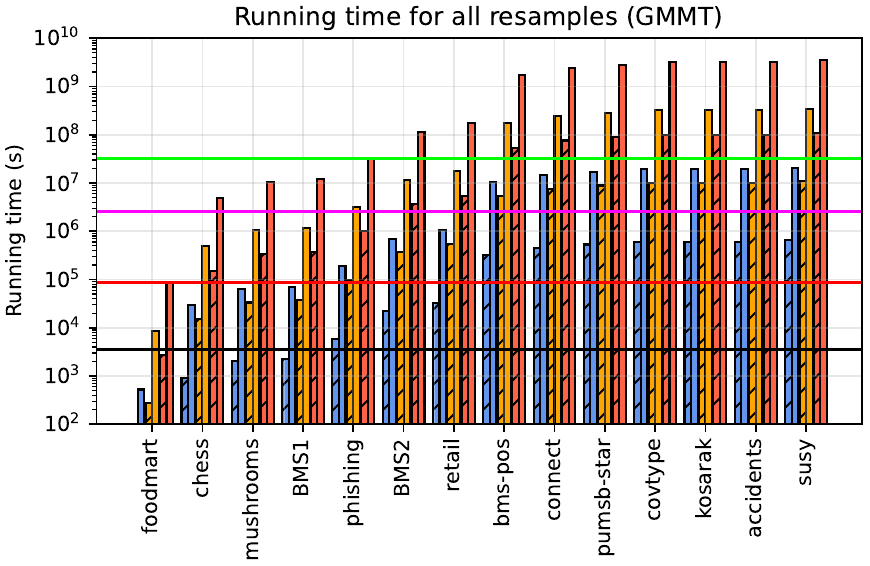}
  %\caption{}
\end{subfigure}
\begin{subfigure}{.455\textwidth}
  \centering
  \includegraphics[width=\textwidth]{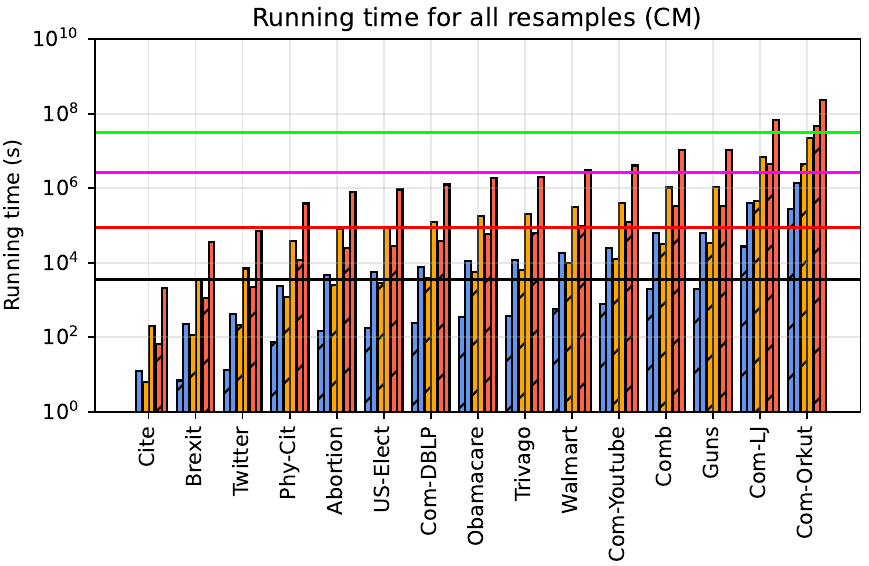}
  %\caption{}
\end{subfigure}
\caption{ 
Running times to generate and analyze all samples with \algname\ and WY for the GMMT and CM models.
The label (p) denotes parallelized approaches with $32$ cores.
}
\Description{Running times to generate and analyze all samples with \algname\ and WY for the GMMT and CM models.
The label (p) denotes parallelized approaches with $32$ cores.}
\label{fig:runningtimegmmtcm}
\end{figure*}

Figure~\ref{fig:numfilen}~reports the results for this experiment for the dataset retail 
\ifextversion
(all other datasets are shown in Figures \ref{fig:numfilengmmtappendix}-\ref{fig:numfilenalicenormappendix} in Appendix due to space constraints).
\else
(all other datasets are shown in the extended version~\cite{fewrsextended} due to space constraints).
\fi
The plots compare the distribution of the number $A_c(\dataset)$ of frequent patterns with cardinality $c$ in the dataset $\dataset$ with the values $A_c(\dataset^0_i)$ observed from the resamples, under the GMMT and ALICE models. 
Under GMMT, itemsets with cardinality $5$ are significant for retail,
an observation that may be overlooked by only considering the global number of frequent patterns.
Then, we observe that the structure of the frequent patterns for lower cardinalities is more interesting under the ALICE model. 
We also show the values of the normalized test statistics $A_c^\odot$ 
and the values of the significance threshold $\hat{\delta}(\sample)$;
the plots confirm that the employed normalization is very effective in detecting deviations from the null distribution that are comparable across different scales and ranges.

For running times, 
Figure~\ref{fig:runningtimegmmtcm} shows the resources required by \algname\ and (an estimate of) WY to generate all resamples and 
assess the statistical significance of the FI under GMMT 
\ifextversion
(ALICE is similar, see Figure~\ref{fig:runningtimealice} in the Appendix). 
\else
(ALICE is similar, shown in~\cite{fewrsextended}). 
\fi
Generating even one resample from the null models is quite costly 
\ifextversion
(see Figure~\ref{fig:runningtimealice} in the Appendix), 
\else
(see Figure~\ref{fig:runningtimegmmtsingle} in the Appendix), 
\fi
in particular for large datasets, due to a high number of MCMC iterations needed to reach convergence and each iteration being fairly expensive.
The immediate consequence is that generating and mining a high number of resamples, as done by the WY procedure, is extremely expensive, and infeasible for larger instances.
In fact, WY would need several months to analyze $7$ out of the $14$ datasets we considered, using $10^3$ resamples (and parallelizing over $32$ cores). 
Using the recommended amount of resamples ($10^4$) would require more than a year of computation.
Instead, \algname\ is faster by at least one order of magnitude, up to two orders, thus it enables the analysis of these instances by requiring at most a few days of computation. 
Therefore, this experiment clearly shows that \algname\ is very effective and powerful in assessing significance for frequent pattern mining tasks, while greatly reducing the computational burden of state-of-the-art resampling procedures.

\textbf{Analysis of labelled networks.}
Next, we apply \algname\ to analyze large real-world labelled networks. 
We consider $15$ graphs, where each node $v$ has a categorial label $c(v)$, i.e., a color. 
For instance, in graphs from online social networks the color of a node denotes the view of the corresponding profile w.r.t. a polarizing topic (e.g., the graph Guns encodes the opinion of each user w.r.t. gun control~\cite{garimella2018quantifying}). Table~\ref{tab:graphs} in the Appendix shows their statistics. 

\begin{figure*}[ht]
\begin{subfigure}{.16\textwidth}
  \centering
  \includegraphics[width=\textwidth]{./figures/power-legend.pdf}
\end{subfigure} \\
\begin{subfigure}{.2465\textwidth}
  \centering
  \includegraphics[width=\textwidth]{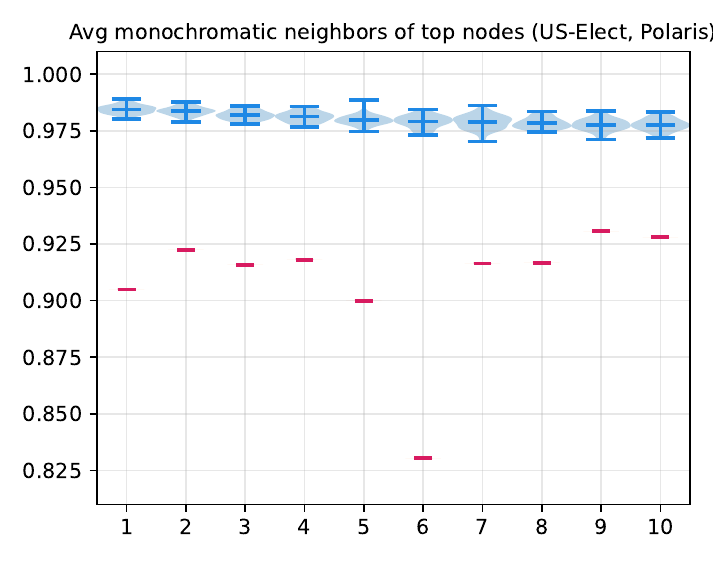} 
\end{subfigure}
\begin{subfigure}{.2465\textwidth}
  \centering
  \includegraphics[width=\textwidth]{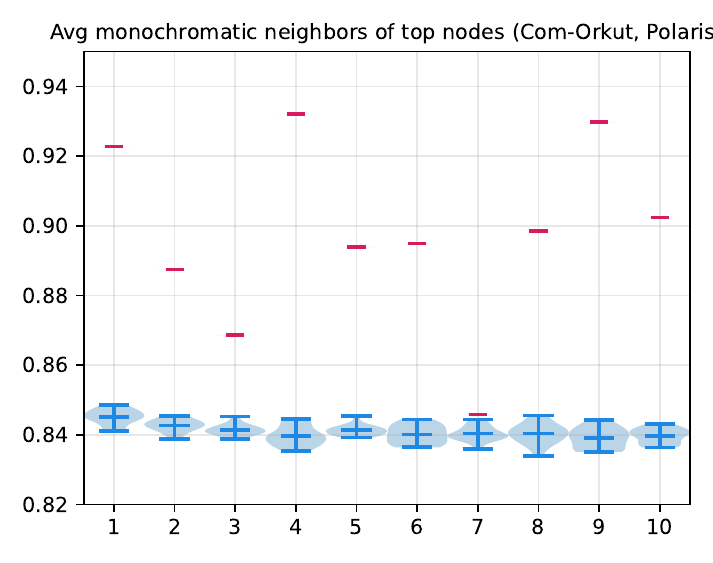} 
\end{subfigure}
\begin{subfigure}{.2465\textwidth}
  \centering
  \includegraphics[width=\textwidth]{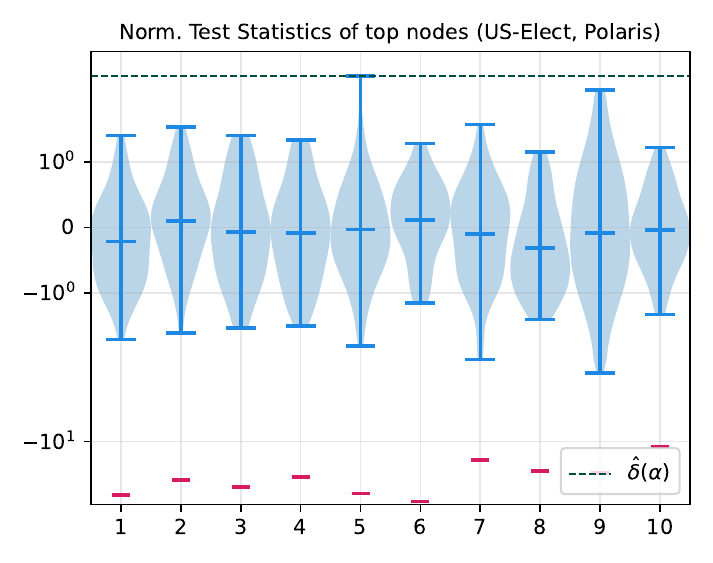} 
\end{subfigure}
\begin{subfigure}{.2465\textwidth}
  \centering
  \includegraphics[width=\textwidth]{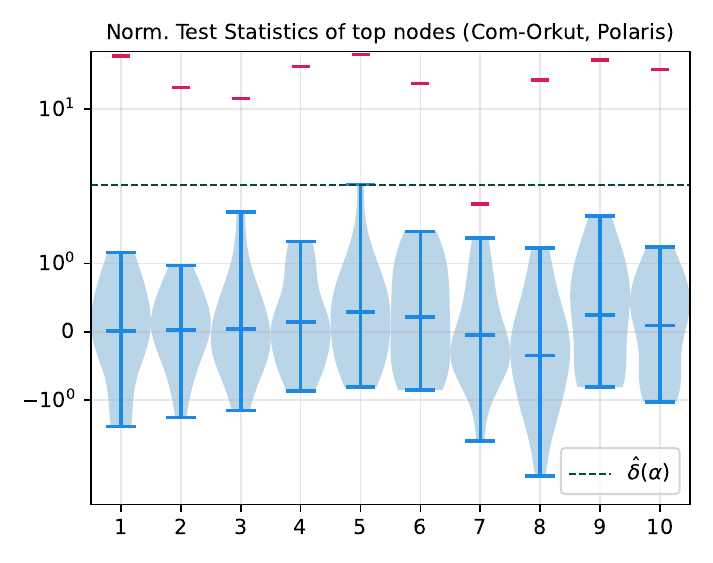} 
\end{subfigure}
\caption{ 
Testing the significance of $M_v(G)$ of the $10$ nodes with highest degree under the Polaris model.
}
\Description{
Testing the significance of $M_v(G)$ of the $10$ nodes with highest degree under the Polaris model.}
\label{fig:avgcolpolaristopnodes}
\end{figure*}

For each graph $G = (V , E)$ we consider a relatively simple measure $M(G)$ of the level of polarization of the network, which computes the average fraction of neighbors of a node $v$ that has the same color of $v$. 
More precisely, let $N(v)$ be the multiset of neighbors of the node $v \in V$. 
We define   
$M(G) = \frac{1}{|V|} \sum_{v \in V} \frac{ | \{ u \in N(v) : c(u) = c(v) \} | }{|N(v)|} $. 
A large value of $M(G)$ denotes that most nodes of the network are connected to nodes of the same color, e.g., that have the same side on a polarizing topic, meaning that they have a low exposure to diverse opinions. 
We also evaluate a more local measure of neighbor diversity by testing the connections of the most relevant nodes of the graph. 
We consider the $10$ nodes with highest degree in the graph $G$; for each such a $v$, we define the measure $M_v(G) = | \{ u \in N(v) : c(u) = c(v) \} | / |N(v)|$ which only considers the colors of the neighbors of $v$. 
We evaluate the statistical significance of these measures using resamples from two null distributions $\probdist_0$.
First, we consider the Configuration Model (CM) \cite{bollobas1980probabilistic,fosdick2018configuring}, where a random graph $G^0 \sim \probdist_0$ is taken uniformly at random from the set of all graphs with the same degree sequence of the input graph $G$.
Then, we consider Polaris~\cite{preti2025polaris}, that is an extension of CM to labelled graphs which preserves also the color assortativity of the graph, that is the total number of edges between all pairs of colors~\cite{newman2003mixing}. 
As for the previous experiment, we sample from the null models with state-of-the-art MCMC algorithms~\cite{peskun1973optimum,mitzenmacher2017probability,fosdick2018configuring,preti2025polaris}.

To evaluate the statistical significance of $M(G)$, for each graph $G$, we apply \algname\ with $\alpha = 0.05$, which generates resamples $\sample = \{ G^0_1 , \dots , G^0_m \}$ from $\probdist_0$ and computes the significance threshold $\hat{\delta}(\sample) = \max_i M(G^0_i)$. 
When $M(G) > \hat{\delta}(\sample)$, we conclude that this measure is significant with FWER $\leq \alpha$. 
Regarding $M_v(G)$, we define centralized and range-normalized statistics $A_v^\odot(G)$ using an independent set of resamples $\sample^\prime$ as done for the frequent patterns setting. 
While we fix $m^\prime = 10$, we test other choices  
\ifextversion
in Appendix~\ref{sec:appxnormalization}. 
\else
in~\cite{fewrsextended}. 
\fi
Note that also for this task we can't use any analytical correction procedure, since no closed form of the distribution of $M(G^0)$ and $M_v(G)$ under these complex null models is available.

Figure~\ref{fig:numfiandgraphs} shows the results for this experiment. 
The plots show a comparison between the values of $M(G)$ and the values of $M(G^0)$ from the resamples $G^0$ generated from $\probdist_0$, for both the CM and Polaris models.
\ifextversion
The table in Figure~\ref{fig:restables} in Appendix reports 
\else
In the extended version~\cite{fewrsextended} we report  
\fi
the values of $M(G)$, the means $\overline{M} = \frac{1}{m} \sum_{i=1}^m M(G^0_i)$, and the thresholds $\hat{\delta}(\sample)$ computed by \algname.
Note that, for all graphs and under CM, it holds $M(G) > \hat{\delta}(\sample)$; in all networks $M(G)$ is significantly higher compared to random graphs from CM. 
This may be 
a sign that the fraction of neighbors with the same color is not explained by the degree sequence of the nodes.
Then, we clearly observe that preserving the color assortativity with Polaris has a significant impact on the distribution of $M(G^0)$: the values of $M(G^0)$ are considerably higher in Polaris compared to the CM model.
Nevertheless, in all but $3$ graphs (Cite, US-Elect, and Com-Orkut) we observe $M(G) > \hat{\delta}(\sample)$ also under the Polaris model. 
In such $3$ cases, 
we observed $M(G) < \overline{M}$, i.e., a fraction of mono-chromatic neighbors lower than what expended under the Polaris null model.
These findings imply that the users in most of the networks we considered are exposed to less diverse content than expected under the null;
on the other hand, in some cases the color assortativity of the network may explain the observed overall fraction of mono-chromatic neighbors. 
 Figure~\ref{fig:avgcolpolaristopnodes} reports the values $M_v(G)$, that quantify the 
the fraction of neighbors with the same color for the nodes with highest degree in the graphs,
and compare them with $M_v(G^0_i)$ computed from the resampled graphs,
for US-Elect and Com-Orkut under Polaris 
\ifextversion
(all other results are in Figures~\ref{fig:avgcolcmappendix}-\ref{fig:avgcolpolarisnormappendix} in Appendix due to space constraints). 
\else
(all other results are in~\cite{fewrsextended}). 
\fi
Interestingly, the fraction of mono-chromatic neighbors of the top nodes of US-Elect aligns with the corresponding global average, i.e., it is lower than expected; 
on the other hand, for Com-Orkut the top nodes have a significantly higher fraction of mono-chromatic neighbors compared to the overall network. 
These observations provide additional insights on the structure of the overall network, and the role played by the most relevant nodes. 
Finally, note that, as in the previous setting, \algname\ leads to the same results of the WY procedure: all hypotheses that WY would reject are also rejected by \algname, and all hypotheses that would not be rejected by WY are also not provided in output by \algname. 

\balance

For computational resources, 
\ifextversion
Figures~\ref{fig:runningtimegmmtcm}, \ref{fig:runningtimecm},  and~\ref{fig:runningtimepol} (in the Appendix) 
show the time to generate one resample under CM and Polaris models, the time needed by \algname, and the time for the WY procedure. 
\else
Figure~\ref{fig:runningtimegmmtcm} shows the total time needed by \algname\ and the WY procedure for the CM model (the time to generate one resample, and the time for Polaris are shown in~\cite{fewrsextended}). 
\fi
Similarly to previous experiments, sampling from the null distribution is costly  even for a single resample, e.g., $\approx 3$ hours for the largest graph. 
This implies that WY does not scale to the most challenging instances, even when parallelized.
In fact, for the two largest graphs, generating $10^3$ resamples requires almost a week and more than one month, respectively.
Drawing $10^4$ resamples, which is the suggested amount to accurately estimate the $\alpha$-quantile, is clearly infeasible: for the largest graph more than $1$ year of computation would be necessary.
Instead, \algname\ analyzes such instances in at most a couple of days, with a speedup of at least $1$ order of magnitude, up to $2$ orders. 
It is clear that considering larger instances, or more expensive analysis (e.g., evaluating measures more complex than $M(G)$), is not possible with WY. 
Therefore, we conclude that \algname\ significantly accelerates the statistical assessment of graph analysis tasks, while yielding rigorous guarantees and high statistical power; moreover, it enables the analysis of challenging instances, that are out of reach for state-of-the-art methods.

\textbf{Comparison with FSR.} 
Finally, we compare our method with FSR~\cite{pellegrina2024efficient}, a few-shot resampling approach tailored to discover significant patterns from transactional datasets, where each transaction is labelled with a binary target.
The set of analyses $\A = \{ A_P : P \in \L \}$ contains one test statistic $A_P$ for each pattern $P$ from the language $\L$ of \emph{subgroups}; 
each $P$ is defined as (at most $z$) conjunctions of logical conditions over the $d$ features of the dataset $\dataset$ (statistics shown in Table~\ref{tab:datasetsfsr}). 
Note that the size of $\A$ is $\BT{d^z}$. 
Each test statistic $A_P$ corresponds to the $1$-quality of the subgroup $P$, which quantifies its association with the target variable~\cite{pellegrina2024efficient} 
\ifextversion
(more details in Appendix~\ref{sec:appxfsr}). 
\else
(more details in~\cite{fewrsextended}). 
\fi
We fix $\alpha=0.05$, and use $m$ resamples for both \algname\ and FSR; each resample is 
obtained by randomly permuting the target labels, 
while the maximum test statistic is computed with a depth-first search. 
\ifextversion
Figure~\ref{fig:comparisonfsrlarge} (in the Appendix) 
displays the respective thresholds, the number of significant patterns reported in output, and the running time. 
\else
Figure~\ref{fig:comparisonfsrsmall} (in the Appendix) 
displays the respective thresholds and the number of significant patterns reported in output (the running time, shown in~\cite{fewrsextended}, was always very similar, as both methods use the same number $m$ of resamples). 
\fi
We observed that the thresholds $\hat{\delta}(\sample)$ obtained by \algname\ are sensibly smaller than the thresholds $\varepsilon$ computed by FSR, regardless of the fact that FSR is tailored to the considered task, while \algname\ has wide applicability. 
Consequently, \algname\ reports in output a much higher number of significant patterns, for the same work (and running time). 
While FSR works with any number of resamples, as $\varepsilon$ is computed with advanced concentration bounds, 
our results indicate that it is more conservative than \algname. 
Overall, \algname\ achieves an excellent trade-off between power and computation resources, even in a high-dimensional setting where a large number of analyses is performed.

\section{Conclusions}
We introduced \algname, a few-shot resampling approach to assess the statistical significance of data mining results with rigorous guarantees on the probability of false discoveries. 
\algname\ builds on a novel probabilistic bound to the supremum deviation of statistics representing the quality of data mining results, and 
can be used in every scenario where resampling-based approaches are applied. \algname\  generates and mines an extremely small number of resamples, providing a highly scalable approach with wide applicability.  
Our experiments show that \algname\ results in a reduction of up to two orders of magnitude in running time compared to state-of-the-art approaches, 
enabling the identification of statistically-sound data mining results on large-scale real-world datasets.

\begin{acks}
This work is supported by 
the STARS@UNIPD 2025 program, 
project "AtHeNA: Algorithms for Heterogeneous Network Analysis", 
with the support of the University of Padova and Fondazione Cassa di Risparmio di Padova e Rovigo,
and by 
the Italian Ministry of University and Research (MUR), project “National Center for HPC, Big Data, and Quantum Computing” CN00000013. 
\end{acks}

\clearpage
\newpage

\bibliographystyle{ACM-Reference-Format}
\balance
\bibliography{bibliography}

%% file: appendix.tex
% !TEX root = main-fewshotdm.tex

\section{Appendix}
In this appendix we provide additional theoretical results, proofs, and experimental results that could not fit in the main manuscript.

\begin{proof}[Proof of Lemma \ref{lemma:fwerguaranteesres}~\cite{westfall1993resampling}]
From the definition of the null hypotheses, 
and from subset pivotality, 
we have that 
\begin{align*}
 \Pr_{\dataset \sim \probdist} \left( \max_{A \in \tnulls} \{ A( \dataset ) \}  > \delta(\alpha) \right)  
 \leq \Pr_{\dataset^0 \sim \probdist_0} \left( \max_{A \in \tnulls} \{ A( \dataset^0 ) \} > \delta(\alpha) \right) .
\end{align*}
Thus, from the fact $\tnulls \subseteq \A$, and the definition of $\delta(\alpha)$, 
it holds 
\begin{align*}
 \Pr_{\dataset^0 \sim \probdist_0} \! \left( \max_{A \in \tnulls} \{ A( \dataset^0 ) \} \! > \! \delta(\alpha) \right) \! 
 \leq \!\!\Pr_{\dataset^0 \sim \probdist_0} \! \left( \max_{A \in \A} \{ A( \dataset^0 ) \} \! > \! \delta(\alpha) \right) 
  \leq  \alpha ,
\end{align*}
which concludes the proof.
\end{proof}

\begin{proof}[Proof of Lemma \ref{lemma:oneprobbound}]
By definition of $\delta(\alpha)$, for any $\varepsilon > 0$ it holds  
\begin{align*}
\Pr_{\dataset^0 \sim \probdist_0} \left( \max_{A \in \A} \{ A( \dataset^0 ) \} > \delta(\alpha) - \varepsilon \right) > \alpha . 
\end{align*}
Taking the limit $\varepsilon \rightarrow 0^+$ on both sides, we have
\begin{align*}
\alpha \leq
& \lim_{\varepsilon \rightarrow 0^+} \Pr_{\dataset^0 \sim \probdist_0} \left( \max_{A \in \A} \{ A( \dataset^0 ) \} > \delta(\alpha) - \varepsilon \right) \\
& = 1 -  \lim_{\varepsilon \rightarrow 0^+} \Pr_{\dataset^0 \sim \probdist_0} \left( \max_{A \in \A} \{ A( \dataset^0 ) \} \leq \delta(\alpha) - \varepsilon \right) . 
\end{align*}
Let $\varepsilon_1 , \varepsilon_2 , \dots$ be a decreasing sequence $\varepsilon_i > \varepsilon_{i+1}$ converging to $0$. 
Then, let  
$X_i = \text{``} \max_{A \in \A} \{ A( \dataset^0 ) \} \leq \delta(\alpha) - \varepsilon_i \text{''}$. 
These 
form an increasing sequence of measurable sets $X_i \subseteq X_{i+1}$, whose limit is 
\begin{align*}
\lim_{i \rightarrow + \infty} X_i =  \text{``} \max_{A \in \A} \{ A( \dataset^0 ) \} < \delta(\alpha) \text{''} .
\end{align*}
Moreover, since $X_i$ is an increasing sequence, it holds 
\begin{align*}
\lim_{i \rightarrow + \infty} \Pr_{\dataset^0 \sim \probdist_0}( X_i ) 
= \Pr_{\dataset^0 \sim \probdist_0} \left( \lim_{i \rightarrow + \infty} X_i \right)  .
\end{align*}
Therefore, 
\begin{align*}
& \alpha  \leq 1 -  \lim_{\varepsilon \rightarrow 0^+} \Pr_{\dataset^0 \sim \probdist_0} \left( \max_{A \in \A} \{ A( \dataset^0 ) \} \leq \delta(\alpha) - \varepsilon \right) \\
& =
 1 - \!\! \Pr_{\dataset^0 \sim \probdist_0} \! \left( \max_{A \in \A} \{ A( \dataset^0 ) \} \! < \! \delta(\alpha) \right) 
 \! = \!\! \Pr_{\dataset^0 \sim \probdist_0} \! \left( \max_{A \in \A} \{ A( \dataset^0 ) \} \! \geq \! \delta(\alpha) \right)  ,
\end{align*}
obtaining the statement. 
\end{proof}

\begin{proof}[Proof of Lemma \ref{lemma:numresamples}]
We prove that $\Pr_{\sample} ( \hat{\delta}(\sample) < \delta(\alpha) ) \leq \alpha$. 
First, note that the event 
\qt{$\hat{\delta}(\sample) < \delta(\alpha)$} 
is equivalent to the intersection of the $m$ events 
\qt{$\max_{A \in \A} \dataset^0_i < \delta(\alpha)$}, $\forall i \in [1,m]$; 
moreover, these events are mutually independent,  
since each resample $\dataset^0_i$ is taken i.i.d. from $\probdist_0$. 
Then, from Lemma~\ref{lemma:oneprobbound}, it holds
\[
\Pr_{\dataset^0_i} \left( \max_{A \in \A} \dataset^0_i < \delta(\alpha) \right) \leq 1-\alpha , 
\]
for all $i \in [1,m]$. 
Therefore, we have 
\begin{align*}
 \Pr_{\sample} \left( \hat{\delta}(\sample) < \delta(\alpha) \right)   
 = \prod_{i=1}^m \Pr_{\dataset^0_i} \left( \max_{A \in \A} \dataset^0_i < \delta(\alpha) \right) 
 \leq (1-\alpha)^m .
\end{align*}
We now show that, if $m$ is set as in the statement, then $(1-\alpha)^m \leq \alpha$. 
This follow from the fact  
\begin{align*}
 (1-\alpha)^m \leq \alpha \iff  m \geq \frac{ \ln \bigl( \frac{1}{\alpha} \bigr) }{ \ln \bigl( \frac{1}{1 - \alpha} \bigr) } ,
\end{align*}
completing the proof. 
\end{proof}

\subsection{Power Analysis of \algname} 
\label{sec:app_power}
This section presents additional results on the guarantees to the statistical power of \algname.

\begin{proof}[Proof of Lemma \ref{thm:powerguarantees}]
From the definition of $\delta(\alpha)$, it holds 
\begin{align*}
\Pr \left( \max_{A \in \A} A(\dataset^0) > \delta(\alpha / k) \right) \leq \alpha / k .
\end{align*} 
Therefore, we have
\begin{align*}
 & \Pr \left( \hat{\delta}(\sample) \leq \delta(\alpha / k) \right) 
= \prod_{i=1}^m \Pr \left( \max_{A \in \A} A(\dataset^0_i) \leq \delta(\alpha / k) \right) \\
& = \! \prod_{i=1}^m \! \left( 1 - \Pr \left( \max_{A \in \A} A(\dataset^0_i) \! > \! \delta(\alpha / k) \right) \right) 
\geq \prod_{i=1}^m \! \left( 1 - \frac{\alpha}{k}  \right) 
\! = \! \left( 1 - \frac{\alpha}{k} \right)^m \! .
\end{align*} 
Setting $( 1 - \frac{\alpha}{k} )^m \geq 1 - t$ yields 
\begin{align*}
m \leq \frac{\ln \left( \frac{1}{1-t} \right) }{ \ln \left( \frac{k}{k-\alpha} \right)  } 
\leq \frac{k \ln \left( \frac{1}{1-t} \right) }{ \alpha } ,
\end{align*} 
which always holds when $k = \frac{m \alpha}{ \ln(\frac{1}{1-t}) }$, proving the statement.
\end{proof}

\ifextversion
The following results provide a generalization of \algname\ to improve the statistical power when the generation of a higher number of random resamples is possible.
Recall that $\sample = \{ \dataset^0_1 , \dataset^0_2 , \dots , \dataset^0_m  \}$, and define 
$d_i = \max_{A \in \A} \dataset^0_i$. 
For any $\ell \in [1,m]$, let $r(\sample , \ell)$ be the $\ell$-th largest value of $\{ d_i \}$, i.e., $r(\sample , \ell) = \max \{ x : \sum_{i=1}^m \ind{d_i \geq x} \geq \ell \}$.
Finally, define $B(i, m , p)$ as the (left) tail of the Binomial distribution:
\begin{align*}
B(i , m ,  p) = \sum_{j=0}^{i} \binom{m}{j} p^j (1-p)^{m-j} .
\end{align*}

\begin{lemma}
\label{lemma:numresamplesgen}
Let $m \geq \left\lceil \ln(\frac{1}{\alpha})/\ln ( \frac{1}{1 - \alpha} ) \right\rceil$,
and define 
\begin{align*}
\ell = \max \left\{ i \in [1,m] : B(i-1 , m ,  \alpha) \leq \alpha \right\} .
\end{align*} 
If line~\ref{alg:deltahat} of \algname\ is replaced by ``$\hat{\delta}(\sample) \gets r(\sample , \ell)$'',
then the output of \algname\ has FWER $\leq \alpha$. 
\end{lemma}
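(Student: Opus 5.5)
The plan mirrors the proof of Lemma~\ref{lemma:numresamples}, with the ``all $m$ resamples fall below $\delta(\alpha)$'' event replaced by a binomial tail event. As in the main result, it suffices to show
\[
\Pr_{\sample}\bigl( r(\sample,\ell) \geq \delta(\alpha) \bigr) \geq 1-\alpha .
\]
On this event, the output set $R=\{A : A(\dataset) > r(\sample,\ell)\}$ is contained in $\{A : A(\dataset) > \delta(\alpha)\}$. Lemma~\ref{lemma:fwerguaranteesres} then bounds the FWER, arguing exactly as for the corollary following Lemma~\ref{lemma:numresamples}.

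\emph{Step 1: reduce to a counting event.} Let $Z = \sum_{i=1}^m \ind{d_i \geq \delta(\alpha)}$. From the definition $r(\sample,\ell) = \max\{x : \sum_i \ind{d_i \geq x} \geq \ell\}$, if $Z \geq \ell$ then $x=\delta(\alpha)$ is feasible, so $r(\sample,\ell) \geq \delta(\alpha)$. Equivalently, the failure event $r(\sample,\ell) < \delta(\alpha)$ is contained in $\{Z \leq \ell-1\}$. The maximum is attained because $\{d_i\}$ is finite: $r$ is simply the $\ell$-th largest $d_i$.

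\emph{Step 2: stochastic domination.} The $d_i$ are i.i.d., so the indicators in $Z$ are i.i.d. Bernoulli variables with parameter $p = \Pr(\max_{A\in\A} A(\dataset^0) \geq \delta(\alpha)) \geq \alpha$ by Lemma~\ref{lemma:oneprobbound}. Hence $Z \sim \mathrm{Bin}(m,p)$ and
\[
\Pr(Z \leq \ell-1) = B(\ell-1,m,p).
\]
The binomial left tail $B(i,m,p)$ is non-increasing in $p$, by the standard coupling of Bernoulli variables via uniforms. Therefore $B(\ell-1,m,p) \leq B(\ell-1,m,\alpha)$, and this is $\leq \alpha$ by the choice of $\ell$. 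Combining with Step 1 gives $\Pr(r(\sample,\ell)<\delta(\alpha)) \leq \alpha$.

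\emph{Step 3: well-definedness of $\ell$.} The set defining $\ell$ must be nonempty. For $i=1$ we have $B(0,m,\alpha) = (1-\alpha)^m \leq \alpha$ under the assumed lower bound on $m$, as already shown in the proof of Lemma~\ref{lemma:numresamples}. So $\ell \geq 1$, and $\ell=1$ recovers the original algorithm.

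The main subtlety is the monotonicity of the binomial tail in $p$ together with the non-strict inequality in Lemma~\ref{lemma:oneprobbound}. The indicator must use $\geq \delta(\alpha)$, matching Lemma~\ref{lemma:oneprobbound}, rather than $>$. With that choice, the argument for discrete null distributions goes through unchanged.
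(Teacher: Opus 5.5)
Your Steps 1--3 are correct and match the paper's proof. The paper couples $\ind{d_i\ge\delta(\alpha)}$ with i.i.d.\ Bernoulli$(\alpha)$ variables, which is your monotonicity of $B(i,m,p)$ in $p$. It also checks that $B(0,m,\alpha)\le\alpha$, and it stops at $\Pr(r(\sample,\ell)<\delta(\alpha))\le B(\ell-1,m,\alpha)\le\alpha$.

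\textbf{The gap.} The flaw is your opening reduction, ``it suffices to show $\Pr(r(\sample,\ell)\ge\delta(\alpha))\ge 1-\alpha$''. The paper uses the same reduction tacitly. The inclusion $R\subseteq\{A : A(\dataset)>\delta(\alpha)\}$ holds only on the event $E=\{r(\sample,\ell)\ge\delta(\alpha)\}$, so the complement $E^c$ must be paid for separately. Conditioning on $\sample$, which is independent of $\dataset$, and applying Lemma~\ref{lemma:fwerguaranteesres} gives only
\[
\Pr(R\cap\tnulls\neq\emptyset)\le\alpha\Pr(E)+\Pr(E^c)\le 2\alpha-\alpha^2 .
\]
This loss is real: the statement is false as written. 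Take a single analysis under the complete null ($\probdist=\probdist_0$, $\tnulls=\A$), with $A(\dataset^0)$ uniform on $[0,1]$, $\alpha=0.65$ and $m=2$ (the hypothesis only requires $m\ge 1$). Since $B(1,2,0.65)=1-0.65^2=0.5775\le 0.65$, we get $\ell=2$, so the threshold is $\min(d_1,d_2)$. The FWER is then $\Pr(A(\dataset)>\min(d_1,d_2))=2/3>\alpha$, even though both ingredients of your argument hold.

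\textbf{What actually controls the FWER.} A rank argument is needed. Let $X=\max_{A\in\A}A(\dataset^0)$ be independent of $\sample$; as in the proof of Lemma~\ref{lemma:fwerguaranteesres}, $X$ stochastically dominates $\max_{A\in\tnulls}A(\dataset)$. Then $X>r(\sample,\ell)$ exactly when $|\{i : d_i\ge X\}|\le\ell-1$. By exchangeability of $(X,d_1,\dots,d_m)$, breaking ties with independent uniforms, this event has probability at most $\ell/(m+1)$.

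A valid proof therefore needs the extra fact $\ell\le(m+1)\alpha$, which does not follow from $B(\ell-1,m,\alpha)\le\alpha$ in general, as the example shows. It does hold for small $\alpha$. For instance, take $\alpha\le 1/4$ and suppose $\ell>(m+1)\alpha$. Then $\ell-1\ge\lfloor m\alpha\rfloor$, and the known bound $\Pr(\mathrm{Bin}(m,\alpha)\le m\alpha)>1/4\ge\alpha$ contradicts the choice of $\ell$.

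For $\ell=1$ the same rank argument gives FWER $\le 1/(m+1)\le\alpha$ for every admissible $m$. This is also the sound justification of the original corollary, which your proof and the paper both derive through the same invalid reduction.
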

\begin{proof}
First, note that the set $\left\{ i \in [1,m] : B(i-1 , m ,  \alpha) \leq \alpha \right\}$ is always not empty since $B(0 , m ,  \alpha) \leq \alpha$ given the choice of $m$.

Recall that, from Lemma~\ref{lemma:oneprobbound}, it holds 
\[
\Pr_{\dataset^0 \sim \probdist_0} \Bigl( \max_{A \in \A} \{ A( \dataset^0 ) \} \geq \delta(\alpha) \Bigr) \geq \alpha .
\]
For $i \in [1,m]$, define the i.i.d. Bernoulli random variables $Y_i$
with $\E[Y_i] = \alpha$.

We have 
\begin{align*}
& \Pr \Bigl( r(\sample , \ell) < \delta(\alpha) \Bigr) 
= \Pr \Bigl( \sum_{i=1}^m \ind{ d_i \geq \delta(\alpha) } < \ell \Bigr) \\
& \leq \Pr \Bigl( \sum_{i=1}^m Y_i < \ell \Bigr) 
= \sum_{j=0}^{\ell-1} \binom{m}{j} \alpha^j (1-\alpha)^{m-j} 
= B( \ell-1 , m , \alpha)
\leq \alpha,
\end{align*} 
where the first inequality follows from observing that 
$\ind{ d_i \geq \delta(\alpha) }$ is a Bernoulli random variable with  
$\E[\ind{ d_i \geq \delta(\alpha) }] \geq \E[Y_i] =\alpha$, 
obtaining the statement. 
\end{proof}

The following result shows how to bound the quantile $\delta(\alpha)$ 
by the estimate $r(\sample , \ell)$, 
while simultaneously guaranteeing that $r(\sample , \ell)$ does not exceed $\delta(\alpha/k)$ for any $k > 1$. 

\begin{lemma}
\label{lemma:powertimetradeoff}
Given $k > 1$, let $m , \ell \geq 1$, such that 
\begin{align*}
 B( \ell - 1 , m , \alpha)  \leq \alpha/2 , \text{\hspace{0.3cm}}
 B( \ell - 1 , m , \alpha/k)  \geq 1 - \alpha/2 .
\end{align*}
Then, it holds
\begin{align*}
\Pr ( \delta(\alpha) \leq r(\sample , \ell) \leq \delta(\alpha / k) ) \geq 1- \alpha .
\end{align*} 
\end{lemma}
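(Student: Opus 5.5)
Split the complement into two failure events and show each has probability at most $\alpha/2$; a union bound then gives the claim. The events are
\[
E_1 = \text{``} r(\sample , \ell) < \delta(\alpha) \text{''}, \qquad E_2 = \text{``} r(\sample , \ell) > \delta(\alpha/k) \text{''},
\]
and $\Pr(\delta(\alpha) \leq r(\sample,\ell) \leq \delta(\alpha/k)) \geq 1 - \Pr(E_1) - \Pr(E_2)$.

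For $E_1$, repeat the argument in the proof of Lemma~\ref{lemma:numresamplesgen}. By definition of $r(\sample,\ell)$, $E_1$ holds iff fewer than $\ell$ of the $d_i$ satisfy $d_i \geq \delta(\alpha)$. By Lemma~\ref{lemma:oneprobbound}, each indicator $\ind{d_i \geq \delta(\alpha)}$ is an independent Bernoulli variable with mean at least $\alpha$. So the count stochastically dominates a Binomial$(m,\alpha)$ variable, and
\[
\Pr(E_1) \leq B(\ell-1,m,\alpha) \leq \alpha/2 .
\]

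For $E_2$, note that $r(\sample,\ell) > \delta(\alpha/k)$ holds iff at least $\ell$ of the $d_i$ satisfy $d_i > \delta(\alpha/k)$. The forward direction is immediate since $r(\sample,\ell)$ is the $\ell$-th largest value. For the converse, if $\ell$ values exceed $\delta(\alpha/k)$, then the $\ell$-th largest value does too. By the definition of $\delta(\cdot)$, as used in the proof of Lemma~\ref{thm:powerguarantees}, $\Pr(d_i > \delta(\alpha/k)) \leq \alpha/k$. Hence the count $\sum_i \ind{d_i > \delta(\alpha/k)}$ is stochastically dominated by a Binomial$(m,\alpha/k)$ variable, giving
\[
\Pr(E_2) \leq 1 - B(\ell-1,m,\alpha/k) \leq \alpha/2 .
\]

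Both binomial comparisons use the monotonicity of the binomial tail in $p$, which can be justified with a standard coupling of independent uniforms.

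The delicate step is $E_2$, because it needs the infimum in the definition of $\delta(\alpha/k)$ to be attained, so that $\Pr(d_i > \delta(\alpha/k)) \leq \alpha/k$ holds exactly and not only for $x > \delta$. This follows from right-continuity of the CDF, with a limit argument mirroring the proof of Lemma~\ref{lemma:oneprobbound}. One should also handle the strict versus non-strict inequalities consistently with the definition of $r(\sample,\ell)$. Beyond these points, the proof is the two tail bounds combined by the union bound.
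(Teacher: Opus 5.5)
Your proposal is correct and follows the paper's proof essentially step for step. Both arguments use the same two failure events, rewrite each as a count of the $d_i$ crossing the threshold, and compare those counts with independent Bernoulli variables of mean $\alpha$ and $\alpha/k$. The only cosmetic differences are these: the paper writes the split as an equality, because the two events are disjoint when $\delta(\alpha) \le \delta(\alpha/k)$; and it takes $\Pr(d_i > \delta(\alpha/k)) \le \alpha/k$ directly from the $\arg\min$ definition of $\delta$, whereas you justify it via right-continuity of the CDF.
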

\begin{proof}
For $i \in [1,m]$, 
define $Y_i, X_i$ 
as mutually independent Bernoulli random variables with 
$\E[Y_i] = \alpha$ and $\E[X_i] = \alpha/k$. 
Recall that 
$\Pr(d_i \geq \delta(\alpha)) \geq \alpha$
and that 
$\Pr(d_i > \delta(\alpha/k)) \leq \alpha/k$. 
It holds 
\begin{align*}
&\Pr ( r(\sample , \ell) < \delta(\alpha) \vee r(\sample , \ell) > \delta(\alpha / k) ) \\
&= \Pr ( r(\sample , \ell) < \delta(\alpha) ) + \Pr( r(\sample , \ell) > \delta(\alpha / k) ) \\
&= \Pr \Bigl( \sum_{i=1}^m \ind{ d_i \geq \delta(\alpha) } < \ell \Bigr) + \Pr \Bigl( \sum_{i=1}^m \ind{ d_i > \delta(\alpha/k) } \geq \ell \Bigr) \\
&\leq \Pr ( \sum_{i=1}^m Y_i < \ell ) + \Pr ( \sum_{i=1}^m X_i \geq \ell ) \\
&= B(\ell - 1 , m , \alpha) + 1 - B(\ell - 1 , m ,  \alpha/k)
\leq \alpha ,
\end{align*}
obtaining the statement. 
\end{proof}

We remark that, for any choice of $k > 1$, there always exist a sufficiently large value of $m$ (and thus, a valid choice of $\ell$), such that the conditions in the statement of Lemma~\ref{lemma:powertimetradeoff} holds.
\fi

\begin{table}%[ht]% h asks to places the floating element [h]ere.
  \caption{Statistics of the datasets considered in our experiments.  
  $n$ is the number of transactions, $d$ is the number of items, $|\dataset|$ is the sum of the transaction lengths, $\ell$ is the average transaction length, $\theta$ is the frequency threshold used to mine frequent itemsets (set similarly to previous works~\cite{preti2024alice}). 
  }
\label{tab:datasets}
\center
  \begin{tabular}{lrrrrc}
    \toprule
    $\dataset$              & $n$      & $d$ & $|\dataset|$  & $\ell$  & $\theta$   \\
    \midrule
    foodmart & 4141 & 1559 & 18319 & 4.4 & $3 \cdot 10^{-4}$ \\
    chess & 3196 & 75 & 118252 & 37.0 & 0.8 \\
	BMS1 & 59602 & 497 & 149639 & 2.5 & $1 \cdot 10^{-3}$ \\
	mushroom     & 8124 & 117 & 178728 & 22.0 & 0.3  \\
	phishing & 11055 & 68 & 331650 & 30.0 & 0.3 \\
	BMS2 & 77512 & 3340 & 358278 & 4.6 & $2 \cdot 10^{-3}$ \\
	retail & 88162 & 16470 & 908576 & 10.3 & $2 \cdot 10^{-3}$ \\
	pumsb-star & 49046 & 2087 & 2448577 & 49.9 & 0.5 \\
    connect & 67557 & 128 & 2870784 & 42.5 & 0.8 \\
    bms-pos & 515420 & 1656 & 3058363 & 5.9 & $1 \cdot 10^{-3}$ \\
    covtype & 581012 & 64 & 6940438 & 11.9 & 0.1 \\
    kosarak & 990002 & 41270 & 8018988 & 8.1 & $1 \cdot 10^{-3}$ \\
    accidents & 340183 & 467 & 11327561 & 33.3 & 0.5 \\
	susy & 5000000 & 178 & 90363253 & 18.1 & 0.1 \\
  \bottomrule
\end{tabular}
\end{table}

\begin{table}%[ht]% h asks to places the floating element [h]ere.
  \caption{Statistics of the graphs considered in our experiments.  
  $|V|$ is the number of vertices, $|E|$ is the number of edges, $|\mathcal{L}|$ is the number of distinct nodes' colors. 
  }
\label{tab:graphs}
\center
  \begin{tabular}{lrrrrrr}
    \toprule
    $G$              & $|V|$      & $|E|$ & $|\mathcal{L}|$    \\
    \midrule
    Cite & 3264 & 4611 & 6 \\
    Brexit & 22745 & 48830 & 2 \\
    Twitter & 22405 & 77920 & 3 \\
    Phy-Cit & 30501 & 347268 & 6 \\
    Abortion & 279505 & 671144 & 2 \\
    US-Elect & 23832 & 845152 & 3 \\
    Com-DBLP & 317080 & 1049866 & 2 \\
    Trivago & 172738 & 1327092 & 160 \\
    Obamacare & 334617 & 1511670 & 2 \\
    Walmart & 88860 & 2267396 & 11 \\
    Com-Youtube & 1134890  & 2987624 & 2 \\
    Comb & 677753 & 6666018 & 2 \\
    Guns & 632659 & 7478993 & 2 \\
    Com-LJ & 3997962  & 34681189 & 2 \\
    Com-Orkut & 3072441  & 117185083 & 2 \\
  \bottomrule
\end{tabular}
\end{table}

\begin{table}%[ht]% h asks to places the floating element [h]ere.
  \caption{Statistics of the labelled datasets considered in our experiments.  
  $n$ is the number of transactions, $d$ is the number of features (categorical/continuous), $\mu(\dataset)$ is the fraction of transactions with target equal to $1$, $z$ is the maximum number of conjunction terms in the language $\lang$. 
  }
\label{tab:datasetsfsr}
\center
  \begin{tabular}{lrrrrrr}
    \toprule
    $\dataset$              & $n$      & $d$ & $\mu(\dataset)$  & $z$     \\
    \midrule
    brain-cancer     & 862 & 22/1  & 0.421 & 5 \\
    theorem-prover     & 3059 & 0/51 & 0.420 & 3  \\
    abalone     & 4177 & 1/7 & 0.663 & 5 \\
    gisette     & 7000 & 0/5000 & 0.500 &  2 \\
    mushroom     & 8124 & 22/0 & 0.482 & 5  \\
    adult     & 32561 & 8/6  & 0.241 & 5 \\
    bank     & 41188 &  10/10 & 0.113 & 3 \\
    kdd-cup & 95370 & 73/405 & 0.050 &  2 \\
    covtype     & 581012 & 0/54  & 0.365 & 3 \\
  \bottomrule
\end{tabular}
\end{table}

\subsection{Normalization}
\label{sec:appendixnorm}
We describe two types of normalizations based on the holdout set of resampled datasets $\sample^\prime$, with $m^\prime = |\sample^\prime|$.
The first type of normalization, that we use in our experiments, is based on the empirical standard score for $A$; 
define $\tilde{\mu}_A(\sample^\prime)$ and $\tilde{\sigma}_A(\sample^\prime)$ as 
\begin{align*}
\tilde{\mu}_A(\sample^\prime) &= \frac{1}{m^\prime} \sum_{\dataset^\prime \in \sample^\prime} A(\dataset^\prime) , \\
\tilde{\sigma}^2_A(\sample^\prime) &= \frac{1}{m^\prime} \sum_{\dataset^\prime \in \sample^\prime} \left( A(\dataset^\prime) - \tilde{\mu}_A(\sample^\prime) \right)^2 ,
\end{align*}
as, respectively, empirical estimates of the expectation and variance of $A$.
The normalized statistic $A^\odot(\dataset)$ is defined as:
\begin{align}
A^\odot(\dataset) = \frac{ A(\dataset) - \tilde{\mu}_A(\sample^\prime) }{ \sqrt{\tilde{\sigma}^2_A(\sample^\prime)} + \tau } , \label{eq:teststatnormalization}
\end{align}
where $\tau$ is a small constant (e.g., $\tau = 0.01$).
The second type uses an estimate of the range of $A$:
\begin{align*}
A^\odot(\dataset) = \frac{ A(\dataset) - \tilde{\mu}_A(\sample^\prime) }{ \max_{\dataset^\prime \in \sample^\prime} A(\dataset^\prime) - \min_{\dataset^\prime \in \sample^\prime} A(\dataset^\prime) + \tau } . 
\end{align*}

\ifextversion
\subsection{Effect of normalization} 
\label{sec:appxnormalization}
In this experiment we evaluated the accuracy of the normalization employed in our experiments. 
Note that this step is needed by both \algname\ and WY only when the number of analyses is $>1$.
To do so, we vary the size $m^\prime$ of the set $\sample^\prime$ of independently generated resamples, showing the behavior of the estimates $\tilde{\mu}_A(\sample^\prime)$ and $\tilde{\sigma}^2_A(\sample^\prime)$, that are, respectively, empirical estimates of the expectation and variance of the statistic $A$ (see Section~\ref{sec:analysis} and Appendix~\ref{sec:appendixnorm}).
We consider the labelled network experiment, in particular the test statistic $M_v$ used for evaluating the fraction of same-color neighbors of the highest degree node $v$, under the Polaris model. 
In Figure~\ref{fig:normalizationparameterspolappendix}, 
we show the results of this experiment.
Each plot shows the behavior of $\tilde{\mu}$ and $\tilde{\sigma}^2$ for $m^\prime \in [5 , 25]$, displaying the mean and std over $10$ runs. 
Overall, we observe that both empirical estimates were very stable; 
therefore, our choice of $m^\prime = 10$ is a good trade-off between accuracy of the estimates and computational overhead. 
\fi

\ifextversion
\subsection{Comparison with FSR}
\label{sec:appxfsr}
This section provides additional details regarding the experimental comparison of \algname\ with FSR.

We are given a dataset $\dataset = \{ (t_1 , \ell_1) , (t_2 , \ell_2) , \dots , (t_n , \ell_n) \}$ with $n$ labelled transactions, 
where each $t_i$ is a realization of $d$ features (each either categorical or continuous), and $\ell_i \in \{ 0 , 1 \}$ is a binary label. 
A pattern $P$ is defined as (at most $z$) conjunctions of logical conditions,
where each condition is over one the features of the dataset. 
We say that a pattern $P$ is observed on a transaction $t_i$ if $P \in t_i$.
The pattern language $\L$ contains all patterns that may be observed on a dataset $\dataset$. 
For each pattern $P \in \L$
we define the test statistic $A_P$, that corresponds to the $1$-quality of the subgroup $P$, which quantifies its association with the target variable~\cite{pellegrina2024efficient}. 
Let $\mu(\dataset) = \frac{1}{n} \sum_{i=1}^n \ell_i$ be the mean target value. 
The test statistic $A_P(\dataset)$ is defined as
\begin{align*}
A_P(\dataset) = \frac{1}{n} \sum_{i=1}^n \ind{ P \in t_i }(\ell_i - \mu(\dataset)) .
\end{align*}
The set of test statistics is $\A = \{ A_P : P \in \L \}$. 
A resampled dataset $\dataset^0$ is drawn from the null distribution $\probdist_0$ by randomly permuting the labels of $\dataset$, i.e., $\dataset^0 = \{ (t_1 , \ell_{\sigma_1}) , (t_2 , \ell_{\sigma_2}) , \dots , (t_n , \ell_{\sigma_n}) \}$, where $\sigma_1 , \sigma_2 , \dots , \sigma_n$ is a uniform random permutation of $[1 , n]$. 

In our experiment we focus on the set $T(k , \dataset)$ of $k$ patterns with highest test statistic on $\dataset$, with $k= 10^5$. 
We report as significant all patterns $P \in T(k , \dataset)$ with $A_P(\dataset) > \hat{\delta}(\sample)$ for \algname,
and all patterns $P \in T(k , \dataset)$ with $A_P(\dataset) > \varepsilon$ for FSR.
\fi

\ifextversion
\begin{figure*}[ht]
\begin{subfigure}{.38\textwidth}
  \centering
  \includegraphics[width=\textwidth]{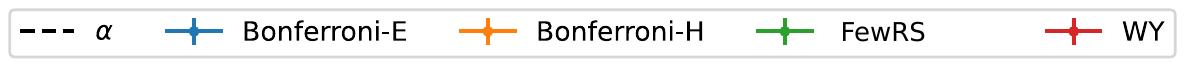}
\end{subfigure} \\
\begin{subfigure}{.245\textwidth}
  \centering
  \includegraphics[width=\textwidth]{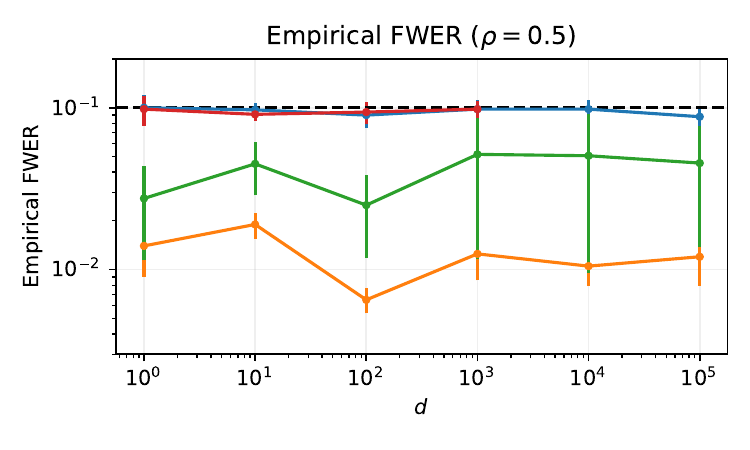}
  %\caption{}
\end{subfigure}
\begin{subfigure}{.245\textwidth}
  \centering
  \includegraphics[width=\textwidth]{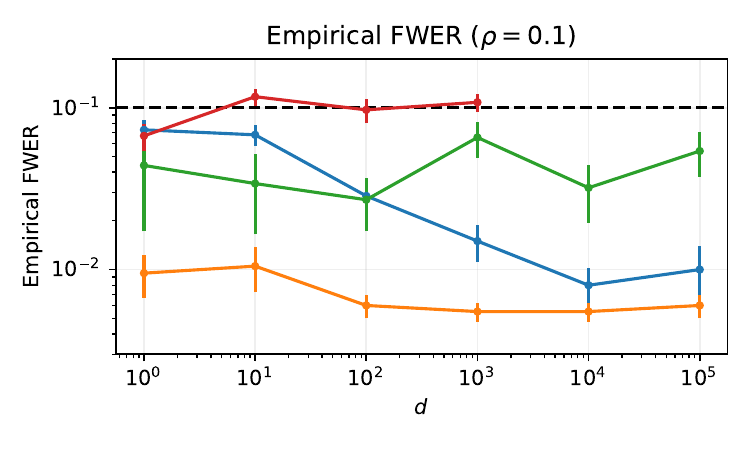}
  %\caption{}
\end{subfigure}
\begin{subfigure}{.245\textwidth}
  \centering
  \includegraphics[width=\textwidth]{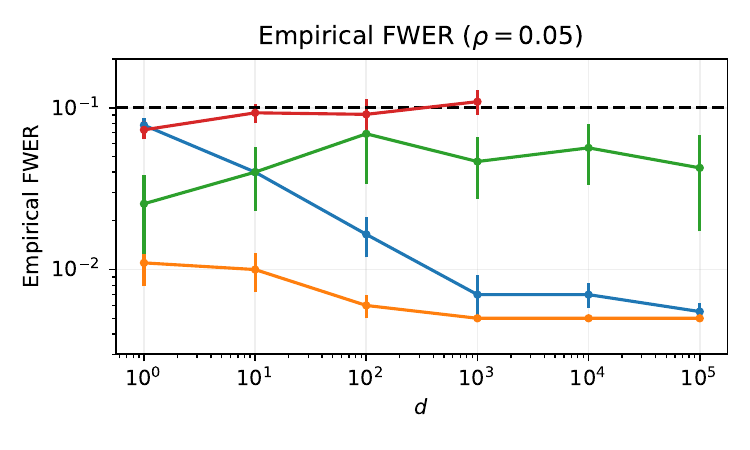}
  %\caption{}
\end{subfigure}
\begin{subfigure}{.245\textwidth}
  \centering
  \includegraphics[width=\textwidth]{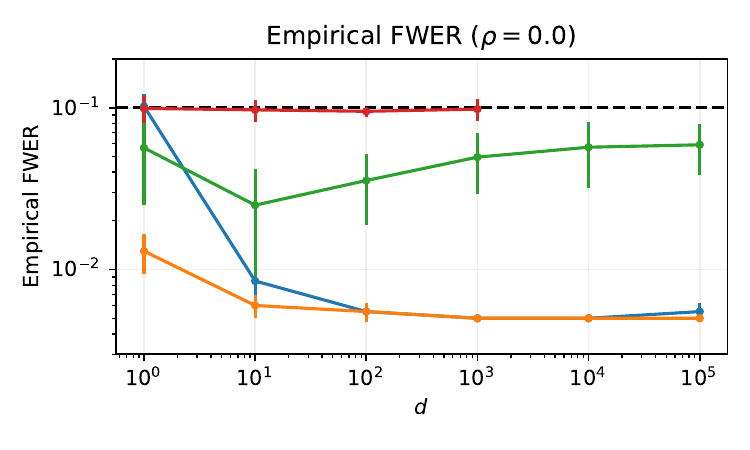}
  %\caption{}
\end{subfigure}
\caption{ Statistical power (empirical FWER) of resampling-based methods (WY and \algname) and Bonferroni w.r.t. the correlation $\rho$ and the number of tests $d$. 
}
\Description{Statistical power (empirical FWER) of resampling-based methods (WY and \algname) and Bonferroni w.r.t. the correlation $\rho$ and the number of tests $d$. }
\label{fig:empiricalfwerfull}
\end{figure*}

\else

\begin{figure}[ht]
\begin{subfigure}{.45\textwidth}
  \centering
  \includegraphics[width=\textwidth]{./figures/legend-synth.pdf}
\end{subfigure} \\ 
\begin{subfigure}{.33\textwidth}
  \centering
  \includegraphics[width=\textwidth]{./figures/emp-FWER-0.05.pdf}
  %\caption{}
\end{subfigure}
\caption{ Statistical power (empirical FWER) of resampling-based methods (WY and \algname) and Bonferroni w.r.t. the number of tests $d$ (other plots in~\cite{fewrsextended}). 
}
\Description{Statistical power (empirical FWER) of resampling-based methods (WY and \algname) and Bonferroni w.r.t. the number of tests $d$ (other plots in~\cite{fewrsextended}). }
\label{fig:empiricalfwerappendix}
\end{figure}
\fi

\ifextversion
\begin{figure*}[ht]
\begin{subfigure}{.38\textwidth}
  \centering
  \includegraphics[width=\textwidth]{./figures/legend-synth.pdf}
\end{subfigure} \\
\begin{subfigure}{.245\textwidth}
  \centering
  \includegraphics[width=\textwidth]{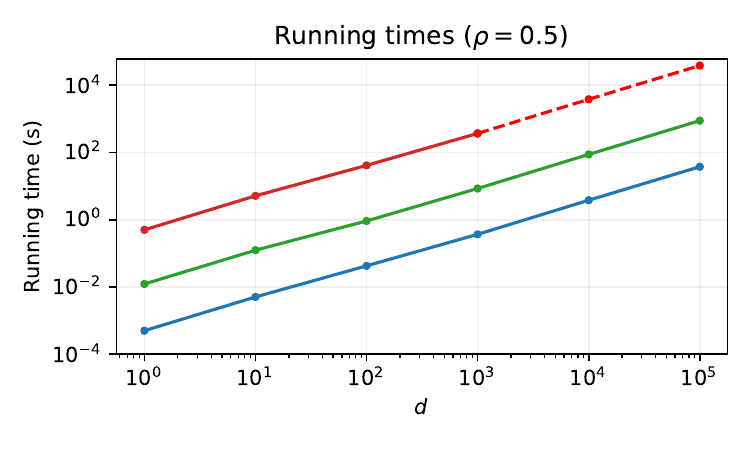}
  %\caption{}
\end{subfigure}
\begin{subfigure}{.245\textwidth}
  \centering
  \includegraphics[width=\textwidth]{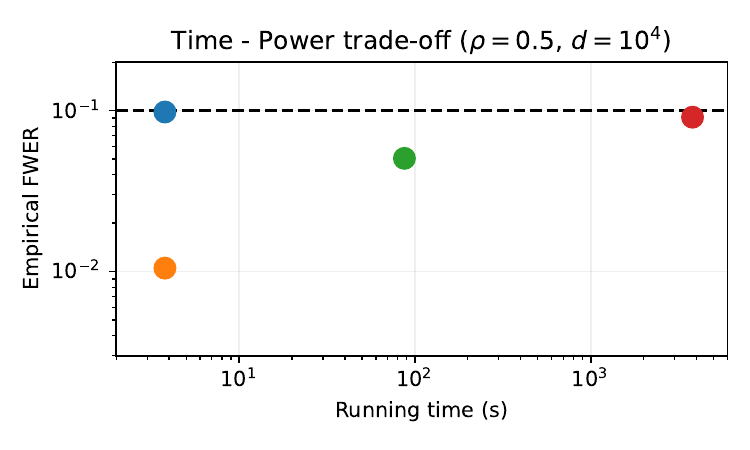}
  %\caption{}
\end{subfigure}
\begin{subfigure}{.245\textwidth}
  \centering
  \includegraphics[width=\textwidth]{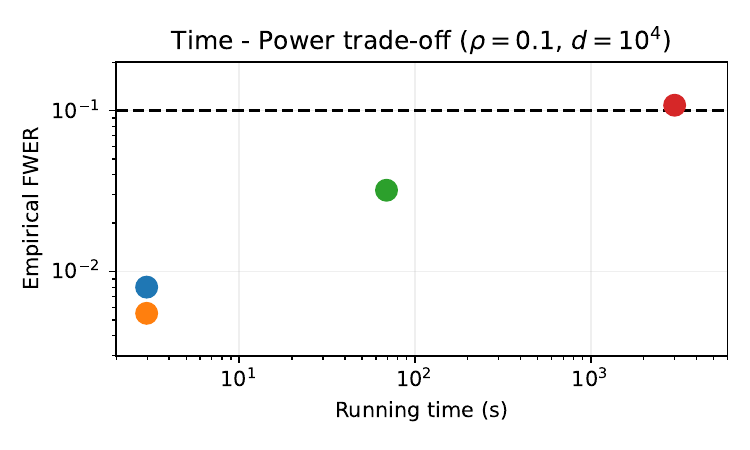}
  %\caption{}
\end{subfigure}
\begin{subfigure}{.245\textwidth}
  \centering
  \includegraphics[width=\textwidth]{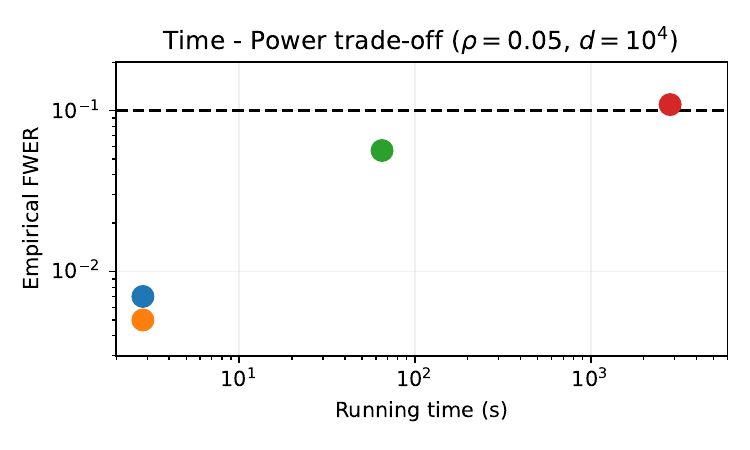}
  %\caption{}
\end{subfigure}
\caption{ Left: running time of resampling-based methods (WY and \algname) and Bonferroni for $\rho=0.5$ (other values are very similar) vs. $d$. 
Other plots: trade-off between running time and statistical power between all methods for different values of $\rho$. 
}
\Description{Left: running time of resampling-based methods (WY and \algname) and Bonferroni for $\rho=0.5$ (other values are very similar) vs. $d$. 
Other plots: trade-off between running time and statistical power between all methods for different values of $\rho$.  }
\label{fig:empiricalfwervstime}
\end{figure*}
\fi

\ifextversion
\begin{figure*}[ht]
\begin{subfigure}{.38\textwidth}
  \centering
  \includegraphics[width=\textwidth]{./figures/legend-synth.pdf}
\end{subfigure} \\
\begin{subfigure}{.245\textwidth}
  \centering
  \includegraphics[width=\textwidth]{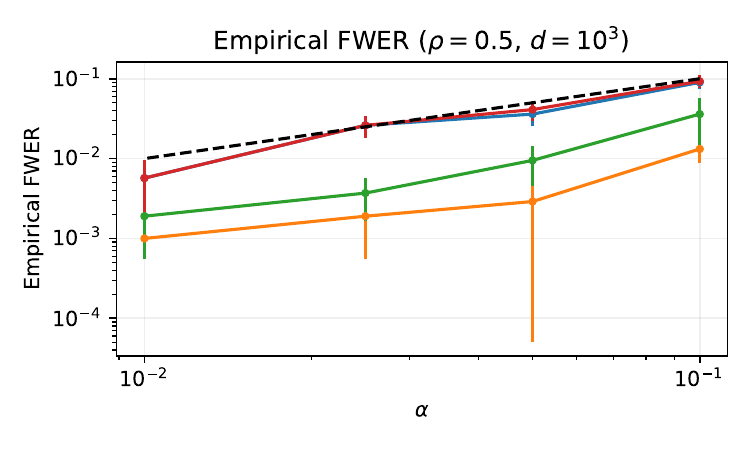}
  %\caption{}
\end{subfigure}
\begin{subfigure}{.245\textwidth}
  \centering
  \includegraphics[width=\textwidth]{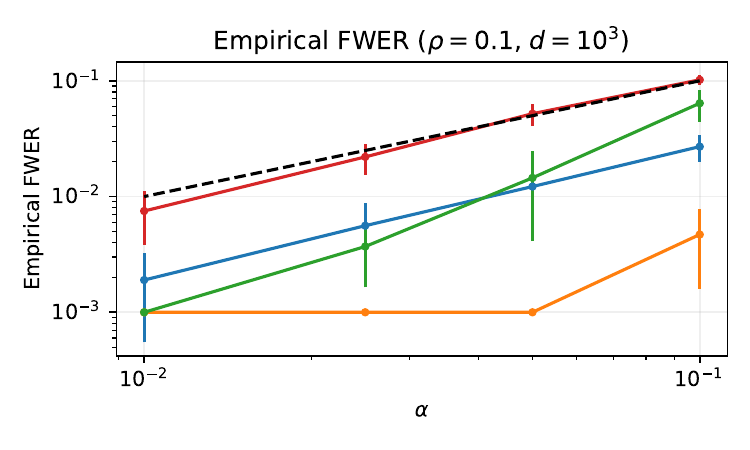}
  %\caption{}
\end{subfigure}
\begin{subfigure}{.245\textwidth}
  \centering
  \includegraphics[width=\textwidth]{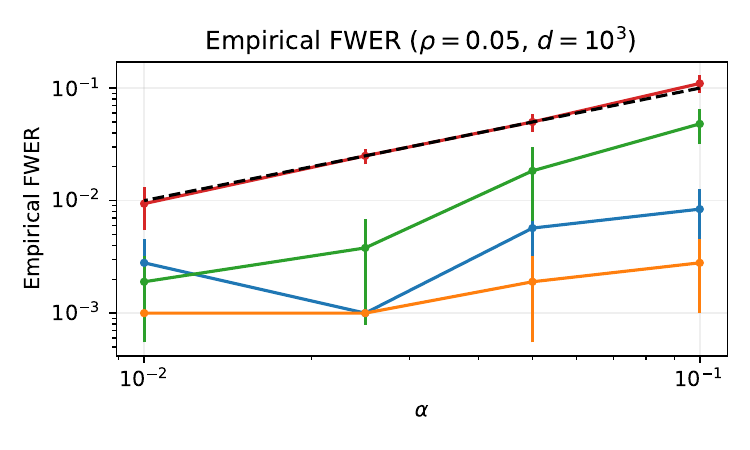}
  %\caption{}
\end{subfigure}
\begin{subfigure}{.245\textwidth}
  \centering
  \includegraphics[width=\textwidth]{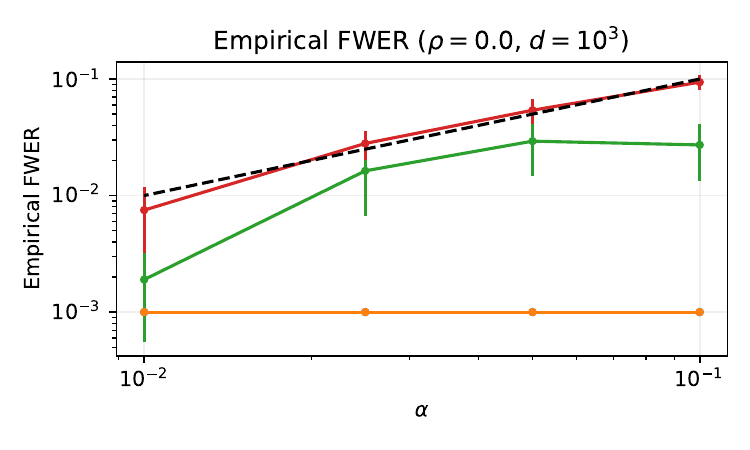}
  %\caption{}
\end{subfigure}
\caption{ Statistical power (empirical FWER) of resampling-based methods (WY and \algname) and Bonferroni w.r.t. the correlation $\rho$ and the FWER bound $\alpha$. 
}
\Description{Statistical power (empirical FWER) of resampling-based methods (WY and \algname) and Bonferroni w.r.t. the correlation $\rho$ and the FWER bound $\alpha$.   }
\label{fig:empiricalfwervsalpha}
\end{figure*}
\fi

\ifextversion
\begin{figure*}[ht]
\begin{subfigure}{.38\textwidth}
  \centering
  \includegraphics[width=\textwidth]{./figures/legend-synth.pdf}
\end{subfigure} \\
\begin{subfigure}{.245\textwidth}
  \centering
  \includegraphics[width=\textwidth]{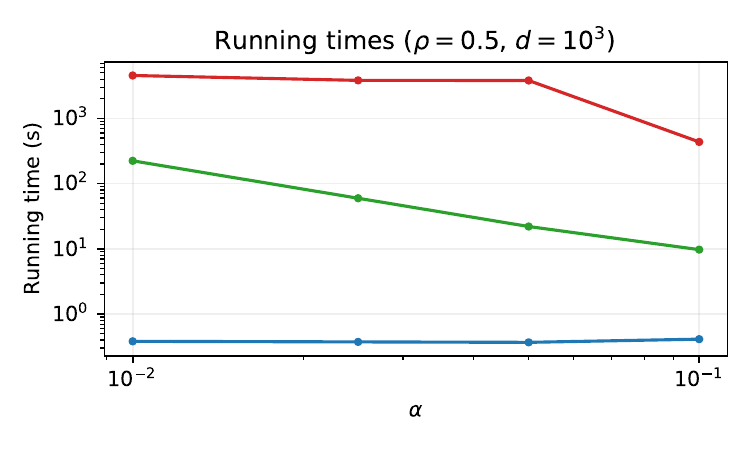}
  %\caption{}
\end{subfigure}
\begin{subfigure}{.245\textwidth}
  \centering
  \includegraphics[width=\textwidth]{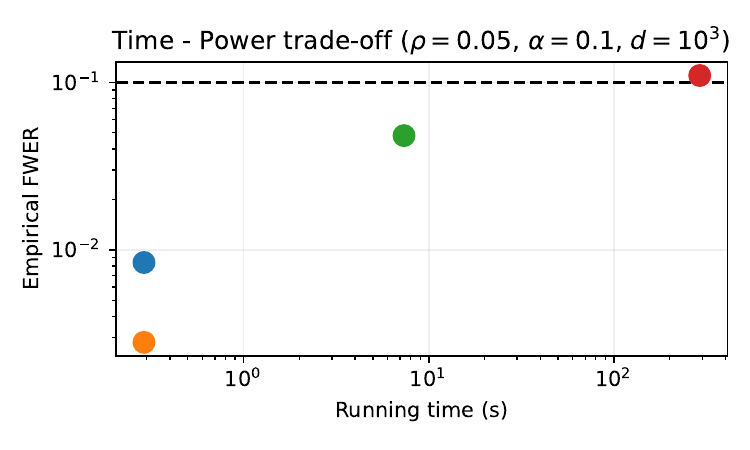}
  %\caption{}
\end{subfigure}
\begin{subfigure}{.245\textwidth}
  \centering
  \includegraphics[width=\textwidth]{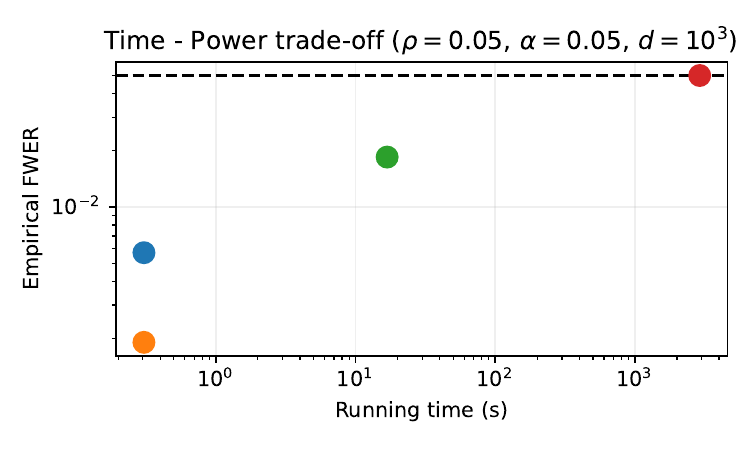}
  %\caption{}
\end{subfigure}
\begin{subfigure}{.245\textwidth}
  \centering
  \includegraphics[width=\textwidth]{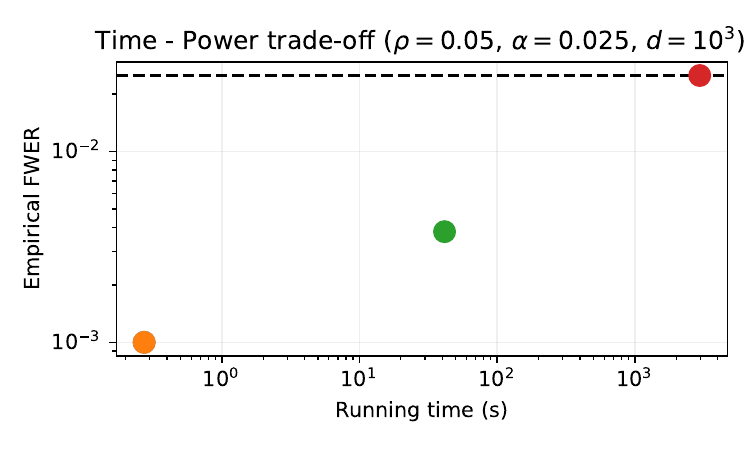}
  %\caption{}
\end{subfigure}
\caption{ Left: running time of resampling-based methods (WY and \algname) and Bonferroni for $\rho=0.5$ (other values are very similar) vs. $\alpha$. 
WY uses $10^3$ resamples for $\alpha=0.1$, and $10^4$ for other values. 
Other plots: trade-off between running time and statistical power between all methods for different values of $\alpha$. 
}
\Description{Left: running time of resampling-based methods (WY and \algname) and Bonferroni for $\rho=0.5$ (other values are very similar) vs. $\alpha$. 
WY uses $10^3$ resamples for $\alpha=0.1$, and $10^4$ for other values. 
Other plots: trade-off between running time and statistical power between all methods for different values of $\alpha$.   }
\label{fig:empiricalfwervstimealpha}
\end{figure*}
\fi

\ifextversion
\begin{figure*}[ht]
\begin{subfigure}{.5085\textwidth}
  \centering
  \resizebox{.98\columnwidth}{!}{%
   \begin{tabular}{lrrrrrrr}
    \toprule
    $\dataset$       &   $|FI(\dataset , \theta)|$    &   GMMT: \hspace{7px} $\overline{\text{FI}}$  & $\hat{\delta}(\sample)$  &  ALICE: \hspace{7px} $\overline{\text{FI}}$  & $\hat{\delta}(\sample)$     \\
    \midrule
    pumsb-star  &  615  &    83.2  &   84 &   87.0  &  87  \\ 
mushrooms  &  1869  &    932.3  &  949 &   928.1  &  941 \\ 
  covtype  &  2020   &   1884.6  &   1895 &   1928.0  &  1934 \\
retail  &  2691  &    2918.1  &  2950 &   2352.7  &  2376 \\ 
BMS2  &  3683  &    600.3  &  606 &  613.6  &  619 \\ 
BMS1  &  3991  &    1808.6  &  1905 &  1618.5  &  1654 \\ 
foodmart  &  4247    &  2221.9  &  2266 &   2228.9  &  2279 \\ 
accidents  &  8049    &  7876.4  &  7888 &   6054.6  &  6061  \\ 
chess  &  8227  &    6187.6  &  6332 &   6188.1  &  6401 \\ 
susy  &  20178  &    16267.6  &  16327 &  13525.6  &  13540 \\ 
bms-pos  &  62642    &  56105.9  &  56876 &    51589.2  &  51954 \\ 
connect  &  533975    &  421387.5  &  431110 &   385061.6  &  388014  \\ 
kosarak  &  759400   &   463243.0  &  490279 &   33617.9  &  33771  \\
phishing  &  1469655  &    110872.0 &  112690 &  110757.0  &  113458 \\ 
  \bottomrule
\end{tabular}
}
  %\caption{}
\end{subfigure}
\hspace{11px}
\begin{subfigure}{.46\textwidth}
  \centering
  \resizebox{.98\columnwidth}{!}{%
  \begin{tabular}{llllllll}
    \toprule
    $G$       &   $M(G)$    &  ~~CM:      & $\overline{M}$  & $\hat{\delta}(\sample)$  & Polaris: & $\overline{M}$  & $\hat{\delta}(\sample)$     \\
    \midrule
    Phy-Cit  &  0.171  &  &  0.11  &   0.112 & &  0.161  &  0.162  \\ 
Walmart  &  0.487  &  &  0.224  &   0.225 & &  0.438  &  0.439 \\ 
  Com-Orkut  &  0.714   & &  0.6  &   0.6 & &  0.771  &  0.771 \\
Cite  &  0.721  &  &  0.183  &  0.2 & &  0.728  &  0.738 \\ 
US-Elect  &  0.775  &  &  0.534  &  0.538 & & 0.783  &  0.785 \\ 
Com-DBLP  &  0.775  &  &  0.524  &  0.525 & & 0.762  &  0.763 \\ 
Com-Youtube  &  0.816  &  &  0.76  &  0.76 & &  0.795  &  0.795 \\ 
Twitter  &  0.827  &  &  0.393  &  0.397 & &  0.786  &  0.792  \\ 
Guns  &  0.844  &  &  0.563  &  0.564 & & 0.675  &  0.675 \\ 
Comb  &  0.882  &  &  0.653  &  0.654 & & 0.74  &  0.741 \\ 
Brexit  &  0.966  &  &  0.506  &  0.516 & & 0.85  &  0.854 \\ 
Trivago  &  0.975  &  &  0.044  &  0.045 & &  0.972  &  0.973  \\ 
Com-LJ  &  0.979   & &  0.918  &  0.918 & & 0.972  &  0.972  \\
Obamacare  &  0.992  &  &  0.496  &  0.498 &&  0.977  &  0.978 \\ 
Abortion  &  0.993  &  &  0.497  &  0.5 & &  0.978  &  0.978 \\
  \bottomrule
\end{tabular}
}
  %\caption{}
\end{subfigure}
\caption{ 
Left: results for FI.
Right: results for labelled networks.  
}
\Description{
Left: results for FI.
Right: results for labelled networks.  }
\label{fig:restables}
\end{figure*}
\fi

\ifextversion
\begin{figure*}[ht]
\begin{subfigure}{.8\textwidth}
  \centering
  \includegraphics[width=\textwidth]{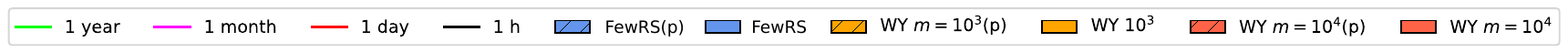}
\end{subfigure} \\
\begin{subfigure}{.28\textwidth}
  \centering
  \includegraphics[width=\textwidth]{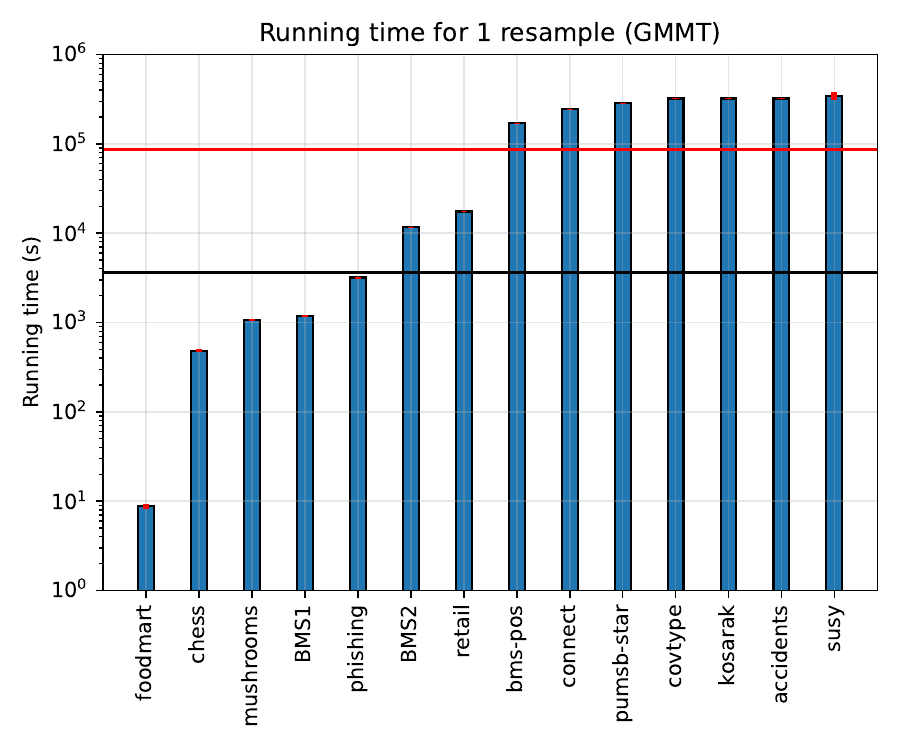}
  %\caption{}
\end{subfigure}
\begin{subfigure}{.28\textwidth}
  \centering
  \includegraphics[width=\textwidth]{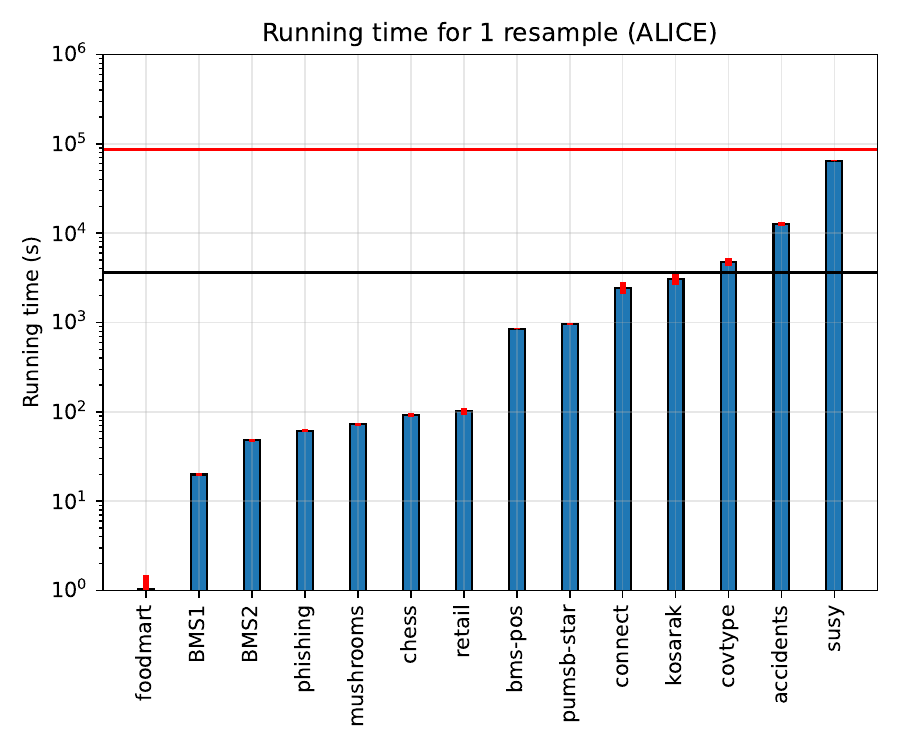}
  %\caption{}
\end{subfigure}
\begin{subfigure}{.43\textwidth}
  \centering
  \includegraphics[width=\textwidth]{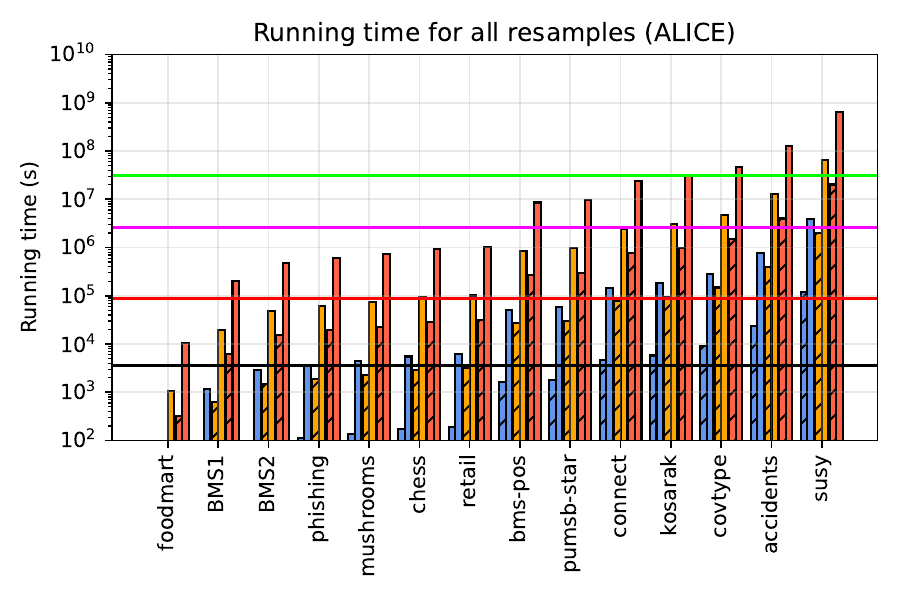}
  %\caption{}
\end{subfigure}
\caption{ 
Left: time to generate one sample from the GMMT and ALICE models. 
Right: time to generate and analyze all samples with \algname\ and WY under ALICE model. The label (p) denotes parallelized approaches with $32$ cores.
}
\Description{
Left: time to generate one sample from the GMMT and ALICE models. 
Right: time to generate and analyze all samples with \algname\ and WY under ALICE model. The label (p) denotes parallelized approaches with $32$ cores. }
\label{fig:runningtimealice}
\end{figure*}

\else

\begin{figure}[ht]
\begin{subfigure}{.14\textwidth}
  \centering
  \includegraphics[width=\textwidth]{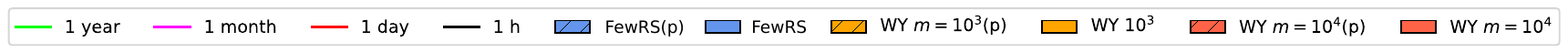}
\end{subfigure} \\
\begin{subfigure}{.33\textwidth}
  \centering
  \includegraphics[width=\textwidth]{./figures/running-time-1-GMMT.pdf}
  %\caption{}
\end{subfigure}
\caption{ 
Time to generate one sample from GMMT model. 
}
\Description{Time to generate one sample from GMMT model. }
\label{fig:runningtimegmmtsingle}
\end{figure}

\fi

\ifextversion
\begin{figure*}[ht]
\begin{subfigure}{.75\textwidth}
  \centering
  \includegraphics[width=\textwidth]{./figures/time-legend.pdf}
\end{subfigure} \\
\begin{subfigure}{.36\textwidth}
  \centering
  \includegraphics[width=\textwidth]{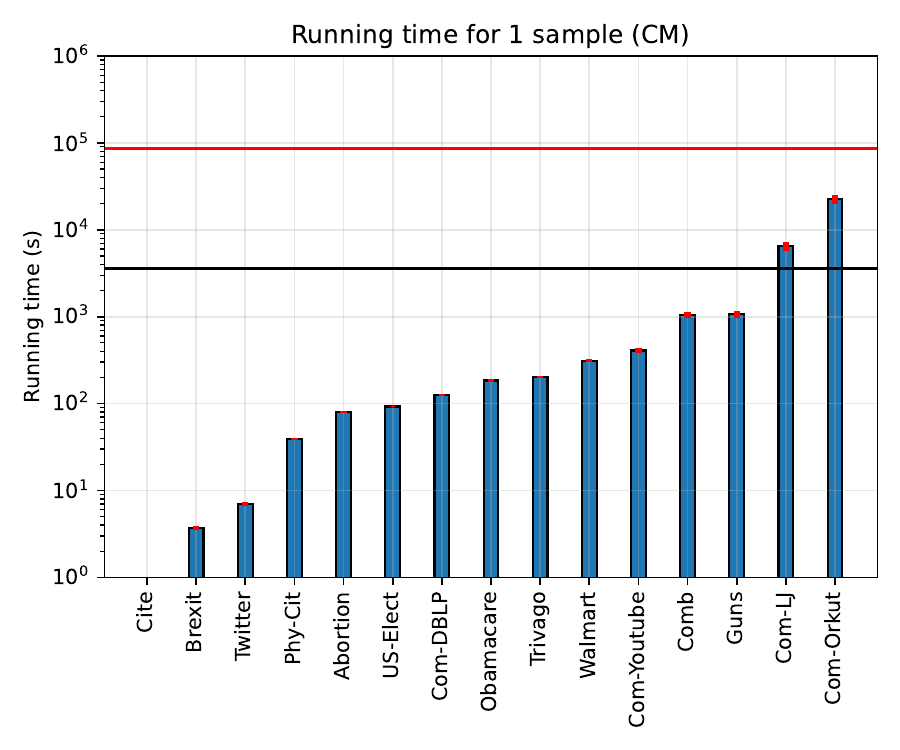}
  %\caption{}
\end{subfigure}
\begin{subfigure}{.475\textwidth}
  \centering
  \includegraphics[width=\textwidth]{./figures/running-time-all-CM.pdf}
  %\caption{}
\end{subfigure}
\caption{ 
Running times for the CM model.
Left: time to generate one sample. 
Right: time to generate and analyze all samples with \algname\ and WY. 
The label (p) denotes parallelized approaches with up to $32$ cores. 
}
\Description{Running times for the CM model.
Left: time to generate one sample. 
Right: time to generate and analyze all samples with \algname\ and WY. 
The label (p) denotes parallelized approaches with up to $32$ cores. }
\label{fig:runningtimecm}
\end{figure*}
\fi

\ifextversion
\begin{figure*}[ht]
\begin{subfigure}{.75\textwidth}
  \centering
  \includegraphics[width=\textwidth]{./figures/time-legend.pdf}
\end{subfigure} \\
\begin{subfigure}{.36\textwidth}
  \centering
  \includegraphics[width=\textwidth]{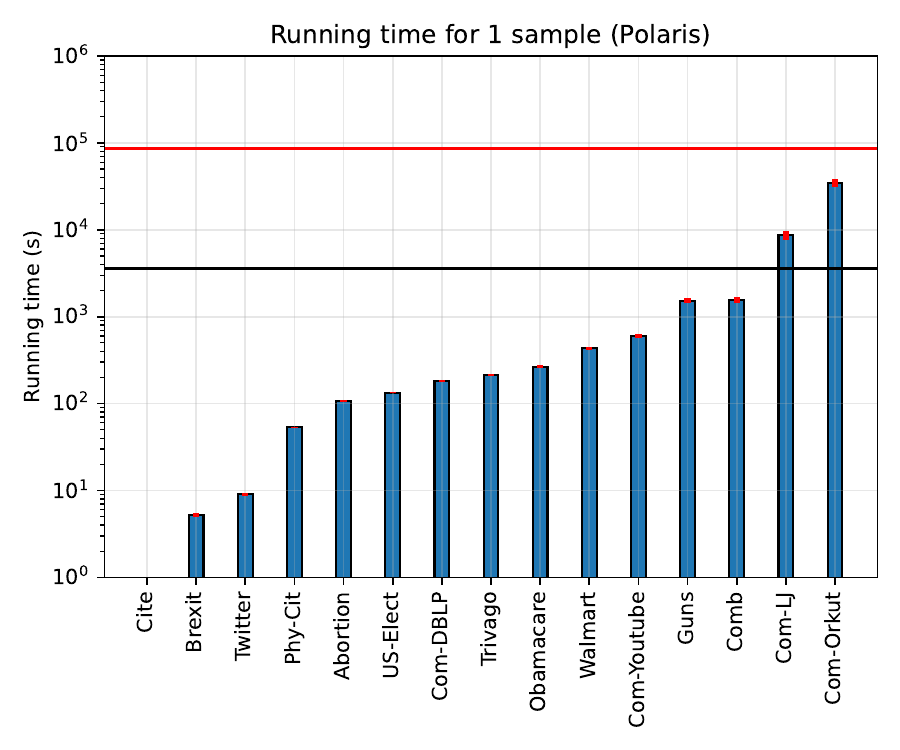}
  %\caption{}
\end{subfigure}
\begin{subfigure}{.475\textwidth}
  \centering
  \includegraphics[width=\textwidth]{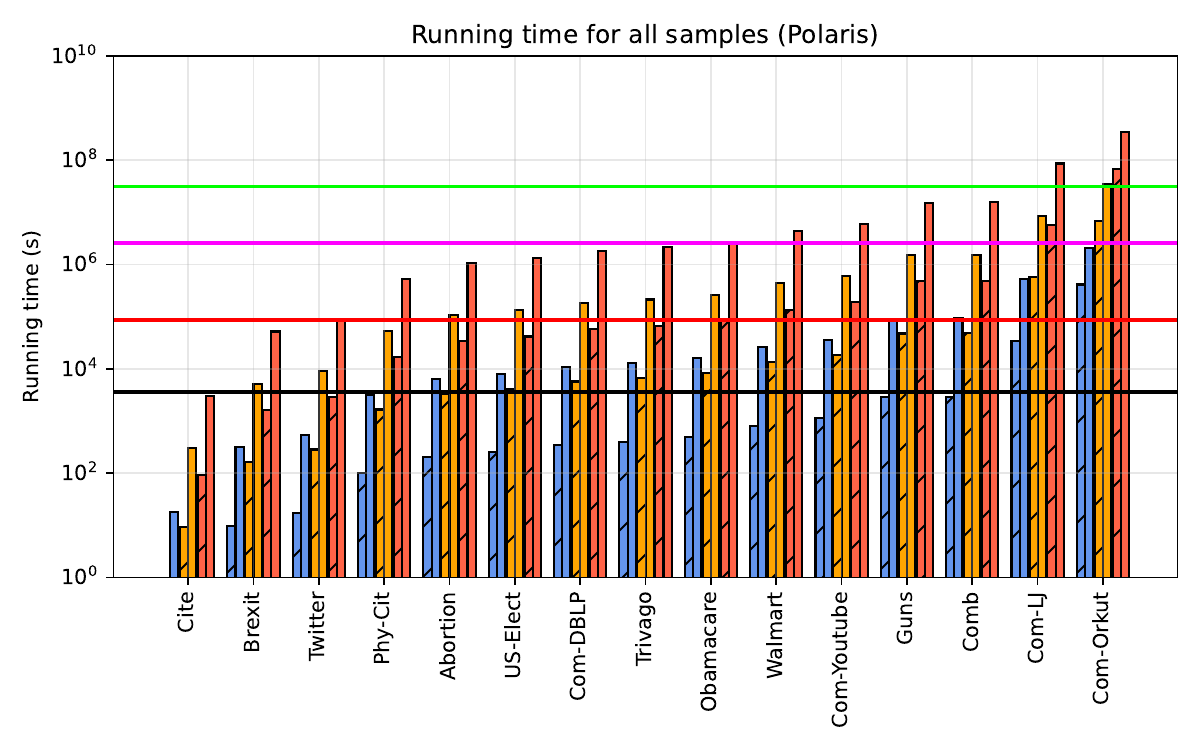}
  %\caption{}
\end{subfigure}
\caption{ 
Running times for the Polaris model.
Left: time to generate one sample. 
Right: time to generate and analyze all samples with \algname\ and WY. 
The label (p) denotes parallelized approaches with $32$ cores, apart from Com-LJ and Com-Orkut, where only $15$ and $5$ parallel executions can be performed due to high memory usage.
}
\Description{
Running times for the Polaris model.
Left: time to generate one sample. 
Right: time to generate and analyze all samples with \algname\ and WY. 
The label (p) denotes parallelized approaches with $32$ cores, apart from Com-LJ and Com-Orkut, where only $15$ and $5$ parallel executions can be performed due to high memory usage.}
\label{fig:runningtimepol}
\end{figure*}
\fi

\ifextversion
\begin{figure*}[ht]
\begin{subfigure}{.175\textwidth}
  \centering
  \includegraphics[width=\textwidth]{./figures/power-legend.pdf}
\end{subfigure} \\
\begin{subfigure}{.275\textwidth}
  \centering
  \includegraphics[width=\textwidth]{./figures/num-fi-len-retail-GMMT.pdf} 
\end{subfigure}
\begin{subfigure}{.275\textwidth}
  \centering
  \includegraphics[width=\textwidth]{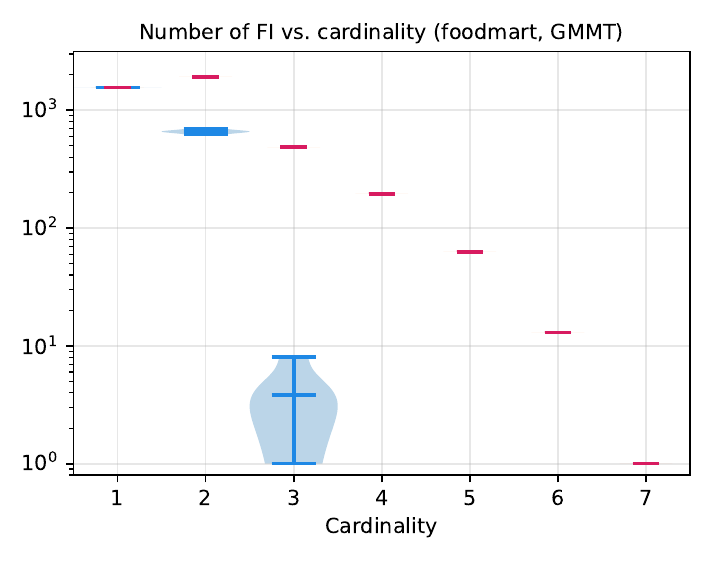} 
\end{subfigure}
\begin{subfigure}{.275\textwidth}
  \centering
  \includegraphics[width=\textwidth]{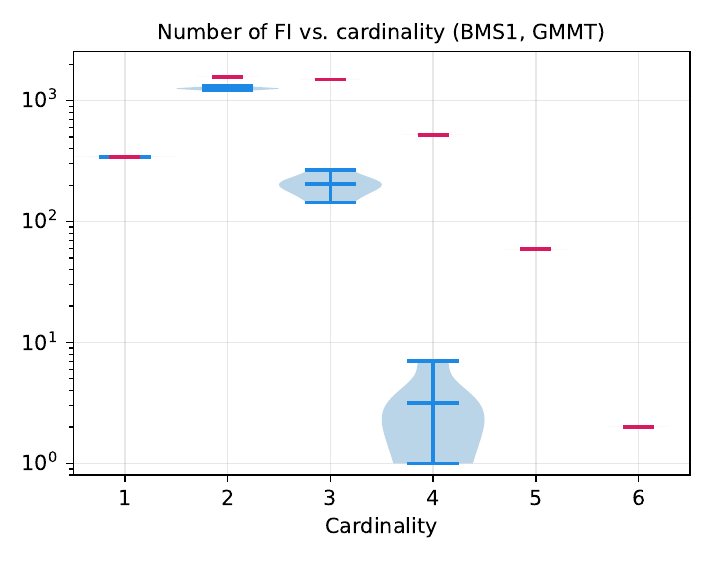} 
\end{subfigure}
\begin{subfigure}{.275\textwidth}
  \centering
  \includegraphics[width=\textwidth]{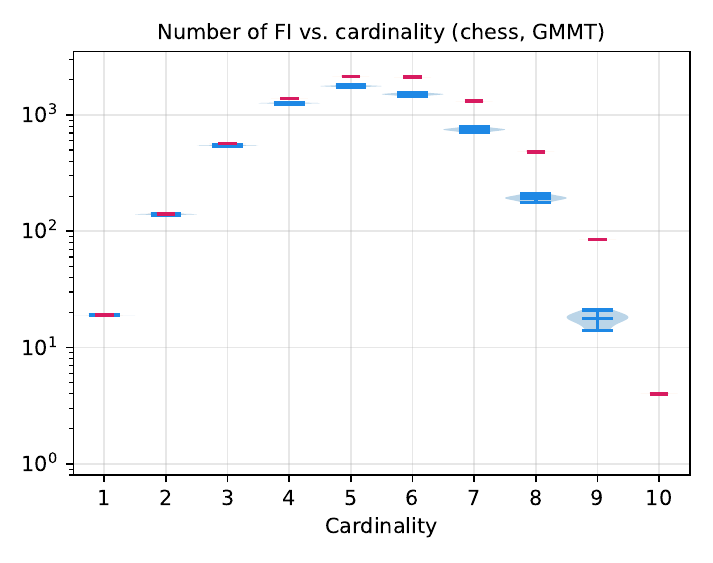} 
\end{subfigure}
\begin{subfigure}{.275\textwidth}
  \centering
  \includegraphics[width=\textwidth]{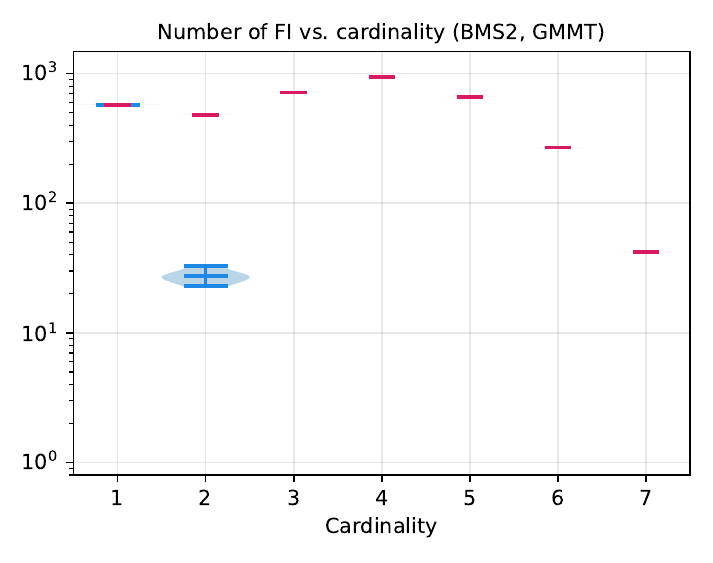} 
\end{subfigure}
\begin{subfigure}{.275\textwidth}
  \centering
  \includegraphics[width=\textwidth]{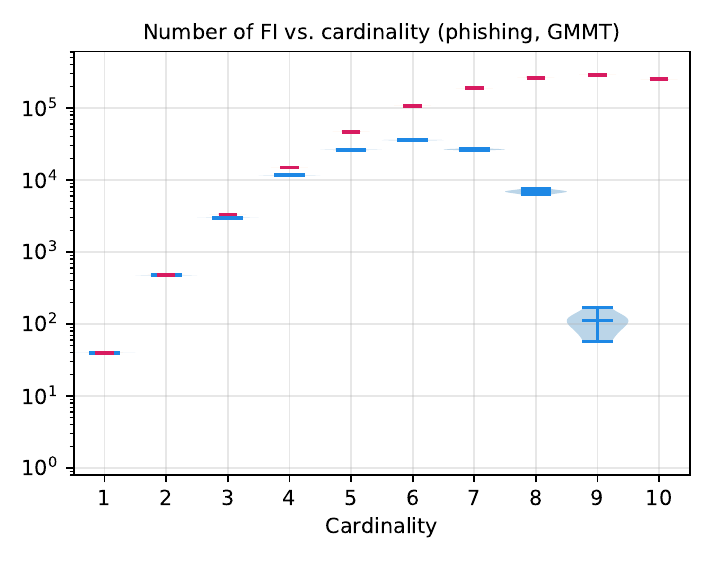} 
\end{subfigure}
\begin{subfigure}{.275\textwidth}
  \centering
  \includegraphics[width=\textwidth]{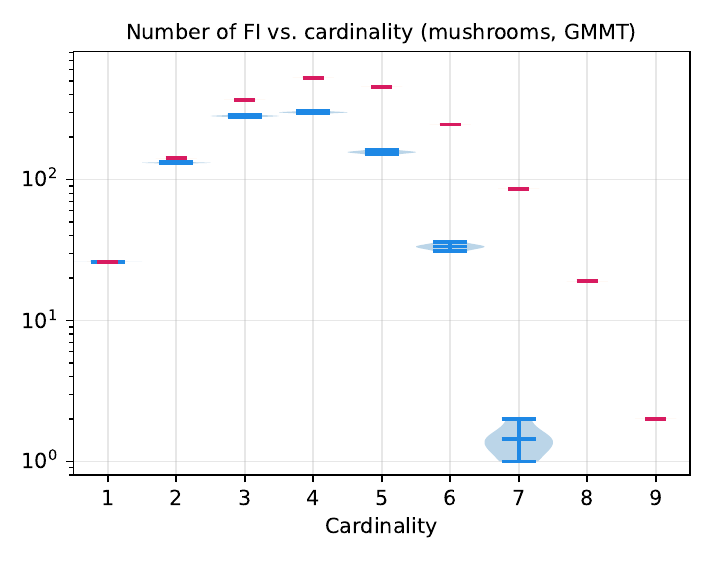} 
\end{subfigure}
\begin{subfigure}{.275\textwidth}
  \centering
  \includegraphics[width=\textwidth]{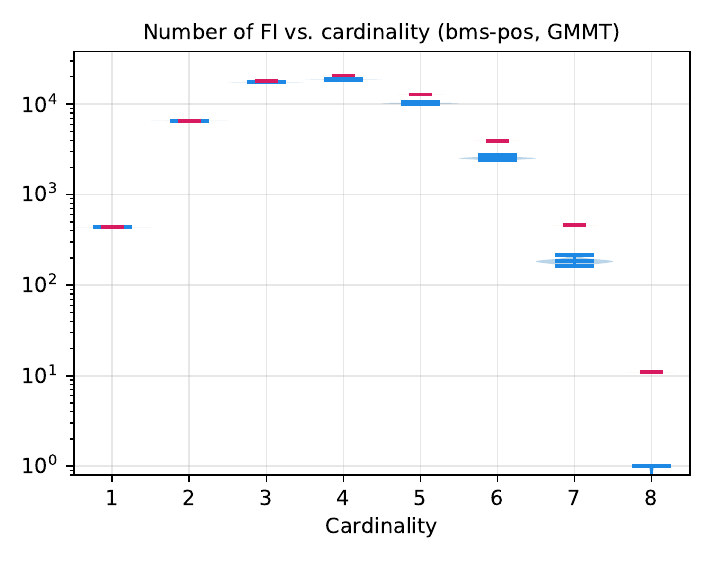} 
\end{subfigure}
\begin{subfigure}{.275\textwidth}
  \centering
  \includegraphics[width=\textwidth]{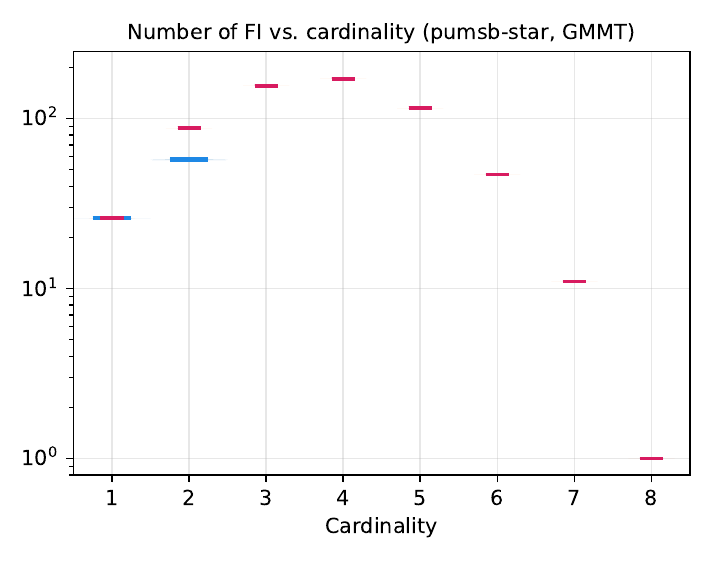} 
\end{subfigure}
\begin{subfigure}{.275\textwidth}
  \centering
  \includegraphics[width=\textwidth]{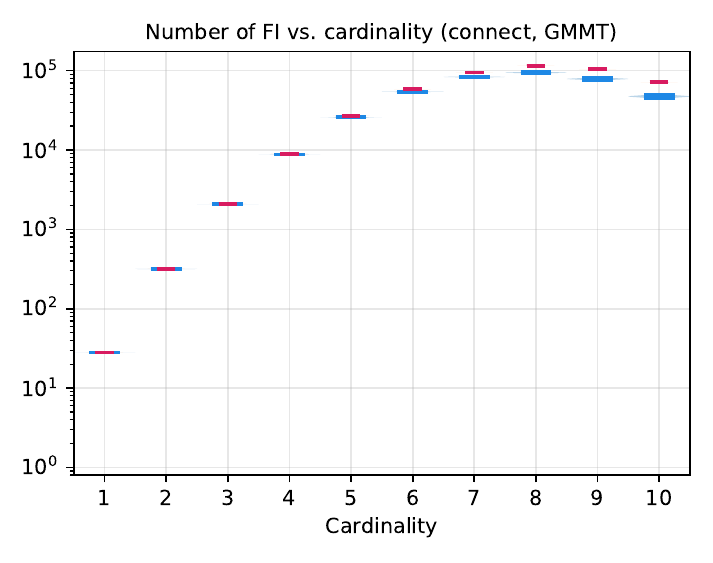} 
\end{subfigure}
\begin{subfigure}{.275\textwidth}
  \centering
  \includegraphics[width=\textwidth]{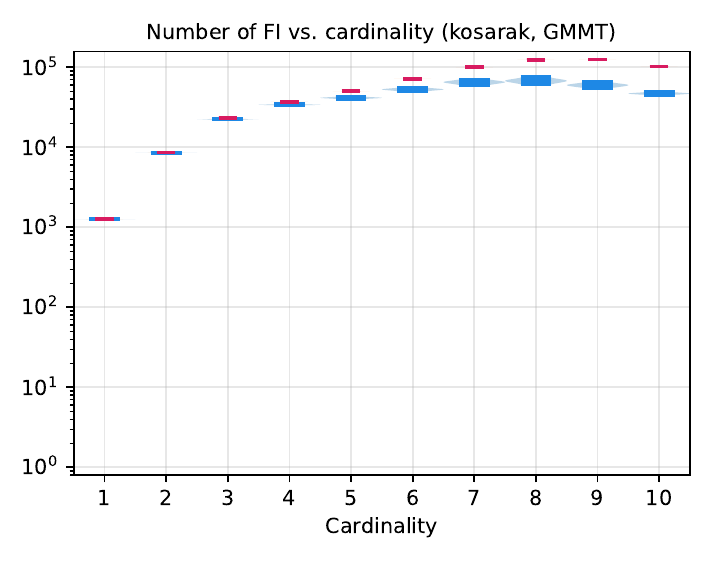} 
\end{subfigure}
\begin{subfigure}{.275\textwidth}
  \centering
  \includegraphics[width=\textwidth]{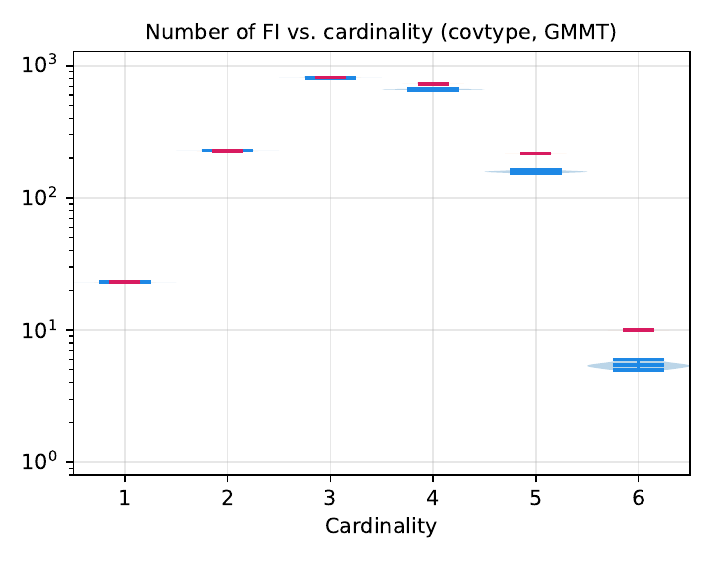} 
\end{subfigure}
\begin{subfigure}{.275\textwidth}
  \centering
  \includegraphics[width=\textwidth]{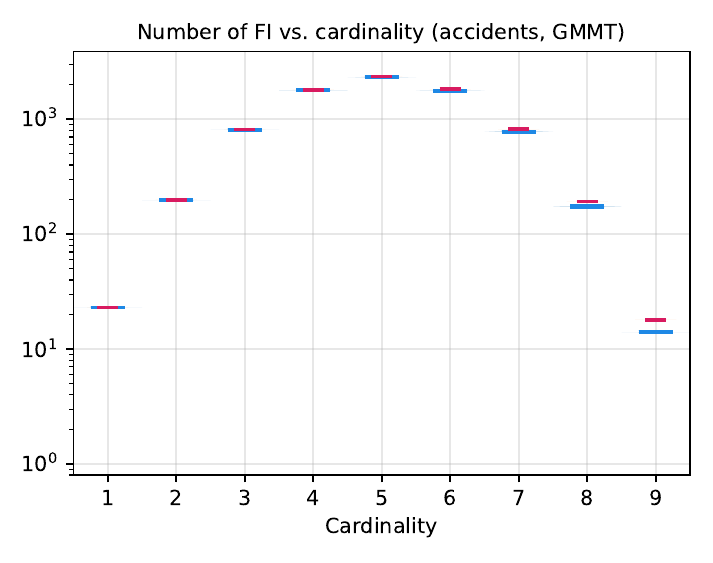} 
\end{subfigure}
\begin{subfigure}{.275\textwidth}
  \centering
  \includegraphics[width=\textwidth]{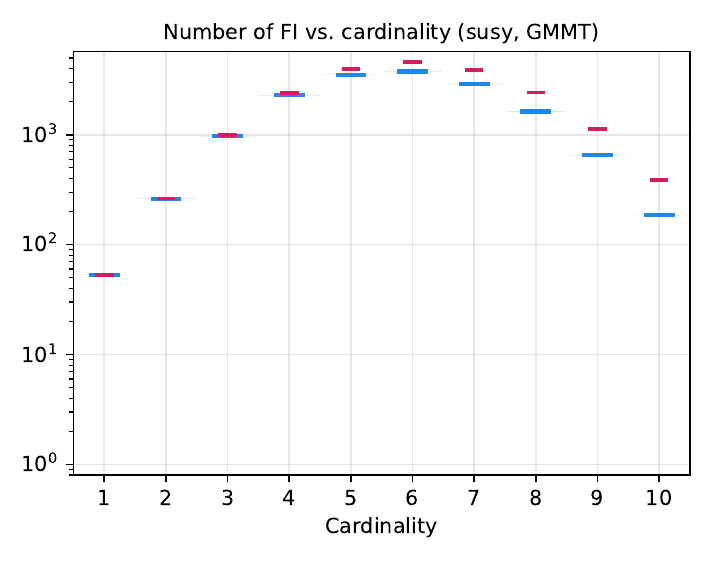} 
\end{subfigure}
\caption{ 
Testing the significance of the number of FI for different cardinalities under the GMMT model.
}
\Description{
Testing the significance of the number of FI for different cardinalities under the GMMT model.}
\label{fig:numfilengmmtappendix}
\end{figure*}
\fi

\ifextversion
\begin{figure*}[ht]
\begin{subfigure}{.175\textwidth}
  \centering
  \includegraphics[width=\textwidth]{./figures/power-legend.pdf}
\end{subfigure} \\
\begin{subfigure}{.275\textwidth}
  \centering
  \includegraphics[width=\textwidth]{./figures/num-fi-len-retail-norm-GMMT.pdf} 
\end{subfigure}
\begin{subfigure}{.275\textwidth}
  \centering
  \includegraphics[width=\textwidth]{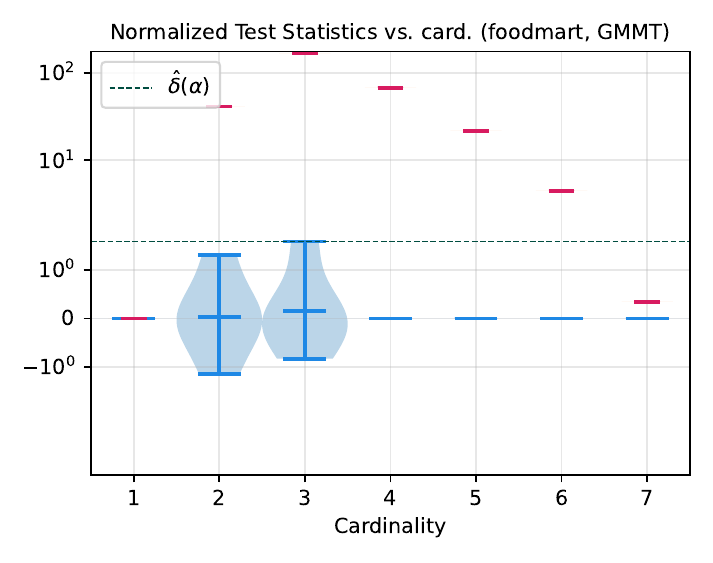} 
\end{subfigure}
\begin{subfigure}{.275\textwidth}
  \centering
  \includegraphics[width=\textwidth]{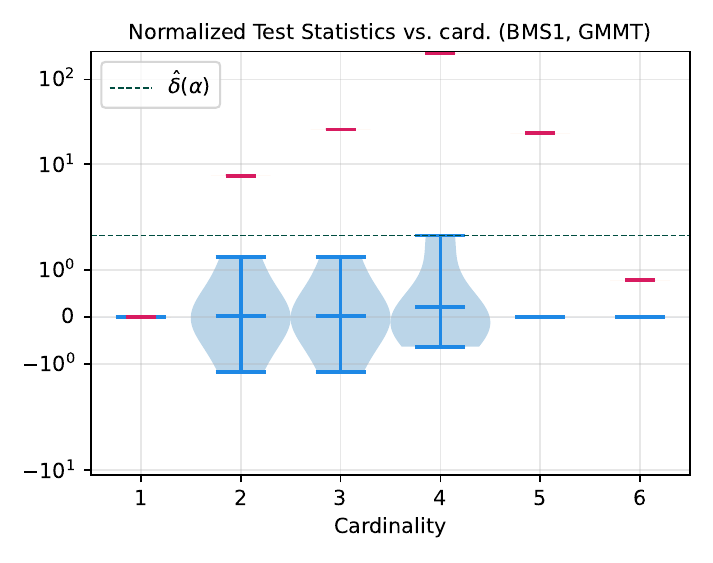} 
\end{subfigure}
\begin{subfigure}{.275\textwidth}
  \centering
  \includegraphics[width=\textwidth]{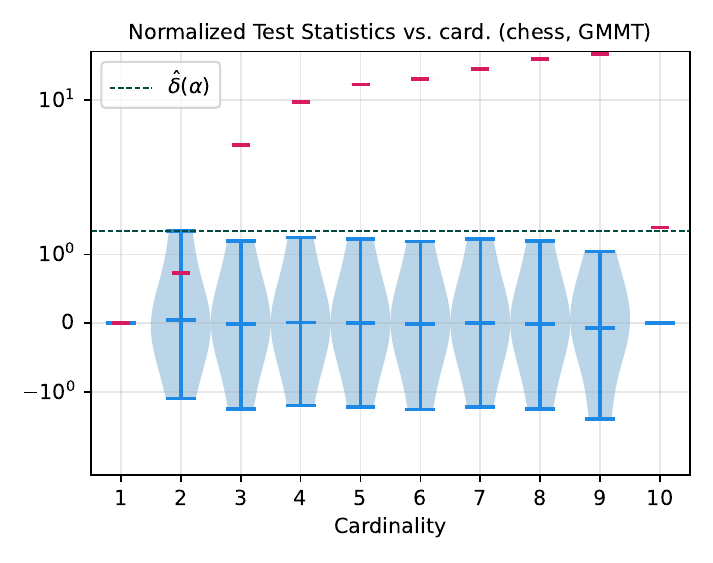} 
\end{subfigure}
\begin{subfigure}{.275\textwidth}
  \centering
  \includegraphics[width=\textwidth]{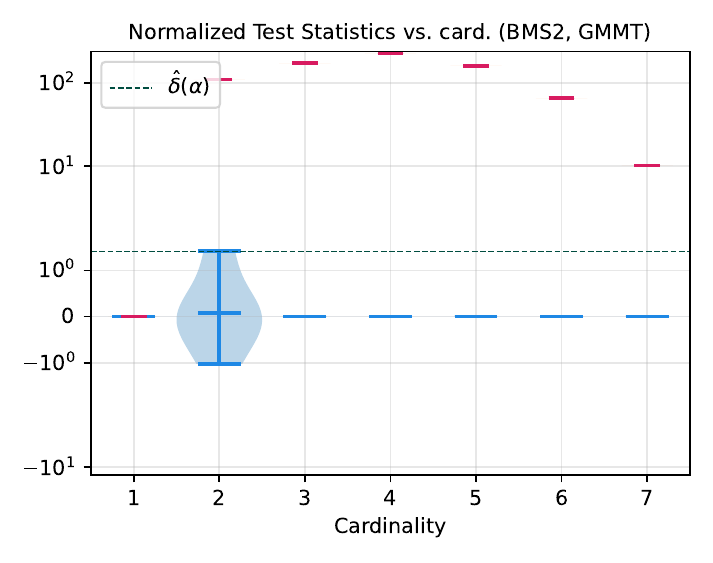} 
\end{subfigure}
\begin{subfigure}{.275\textwidth}
  \centering
  \includegraphics[width=\textwidth]{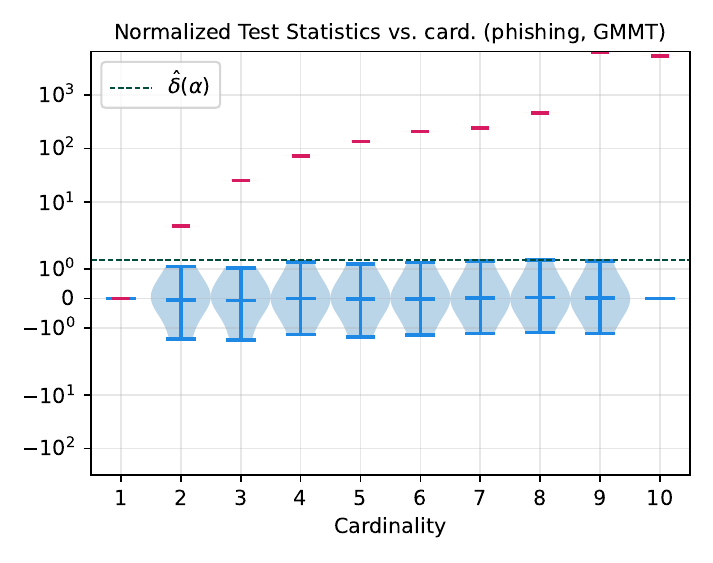} 
\end{subfigure}
\begin{subfigure}{.275\textwidth}
  \centering
  \includegraphics[width=\textwidth]{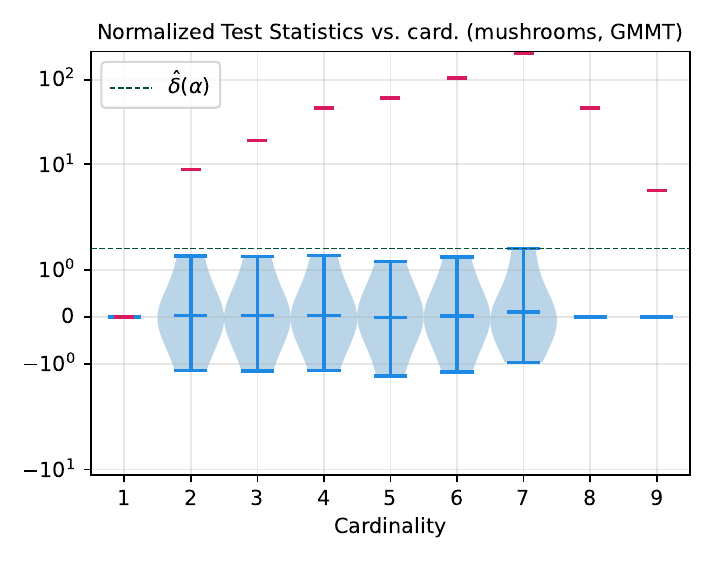} 
\end{subfigure}
\begin{subfigure}{.275\textwidth}
  \centering
  \includegraphics[width=\textwidth]{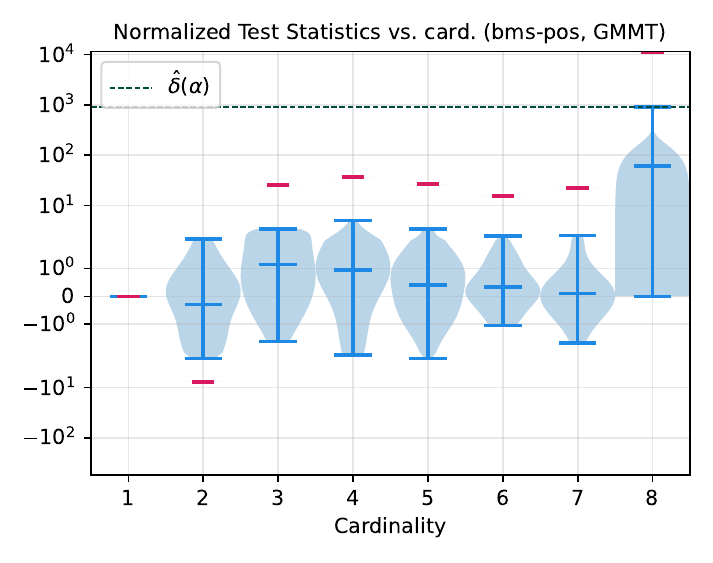} 
\end{subfigure}
\begin{subfigure}{.275\textwidth}
  \centering
  \includegraphics[width=\textwidth]{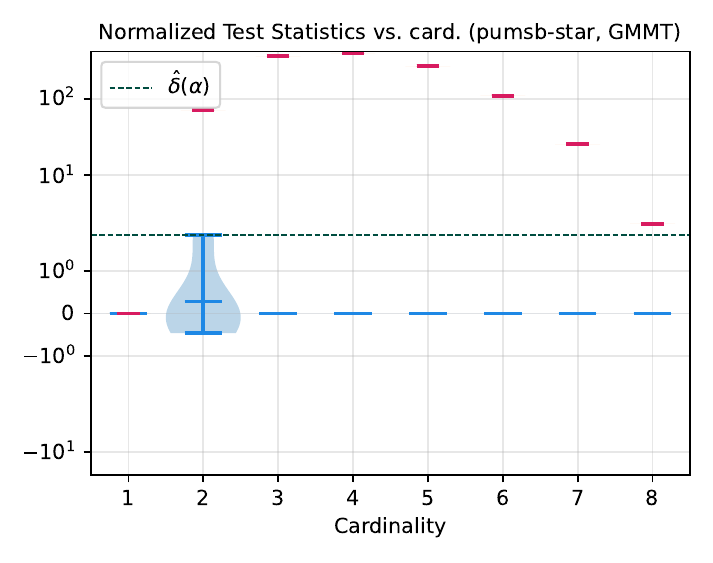} 
\end{subfigure}
\begin{subfigure}{.275\textwidth}
  \centering
  \includegraphics[width=\textwidth]{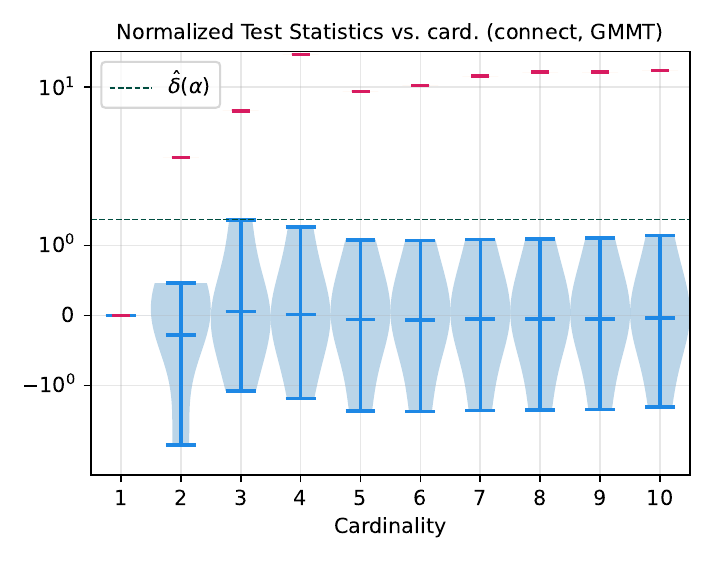} 
\end{subfigure}
\begin{subfigure}{.275\textwidth}
  \centering
  \includegraphics[width=\textwidth]{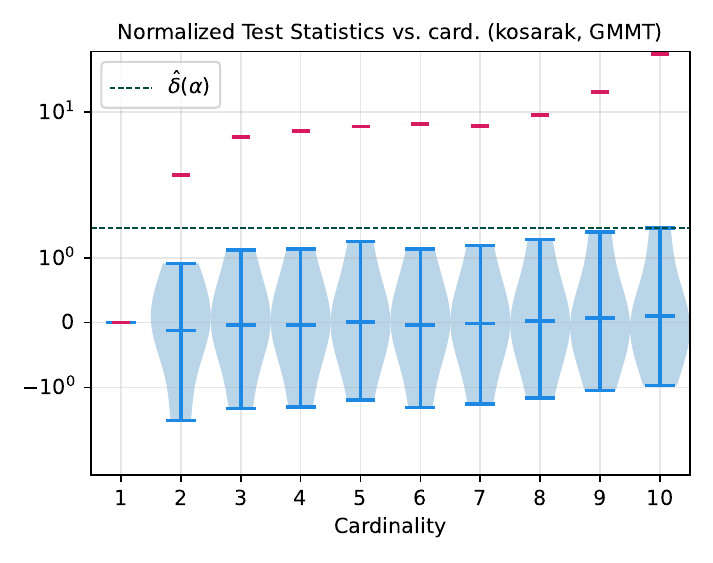} 
\end{subfigure}
\begin{subfigure}{.275\textwidth}
  \centering
  \includegraphics[width=\textwidth]{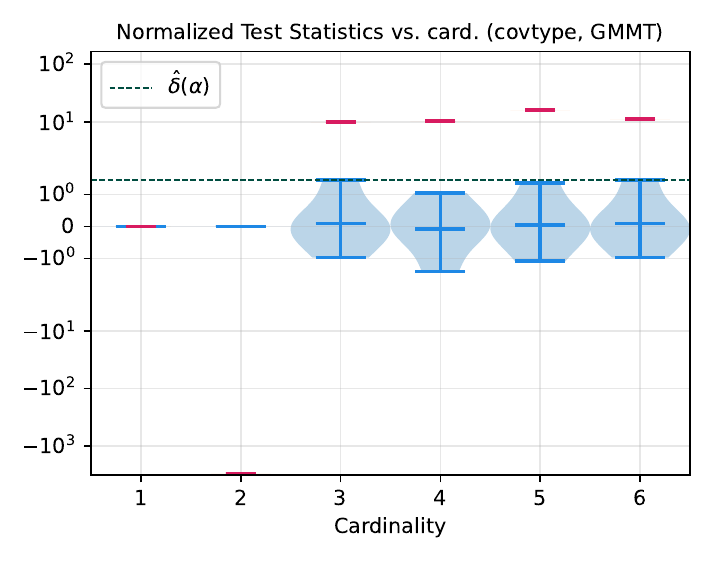} 
\end{subfigure}
\begin{subfigure}{.275\textwidth}
  \centering
  \includegraphics[width=\textwidth]{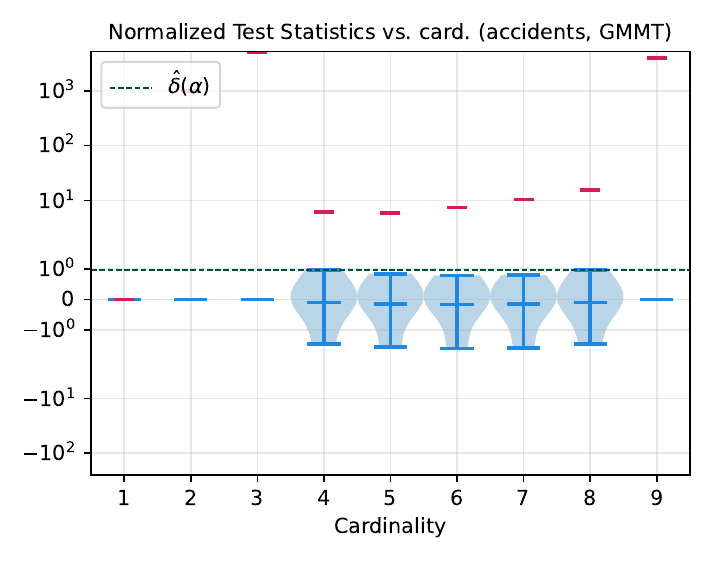} 
\end{subfigure}
\begin{subfigure}{.275\textwidth}
  \centering
  \includegraphics[width=\textwidth]{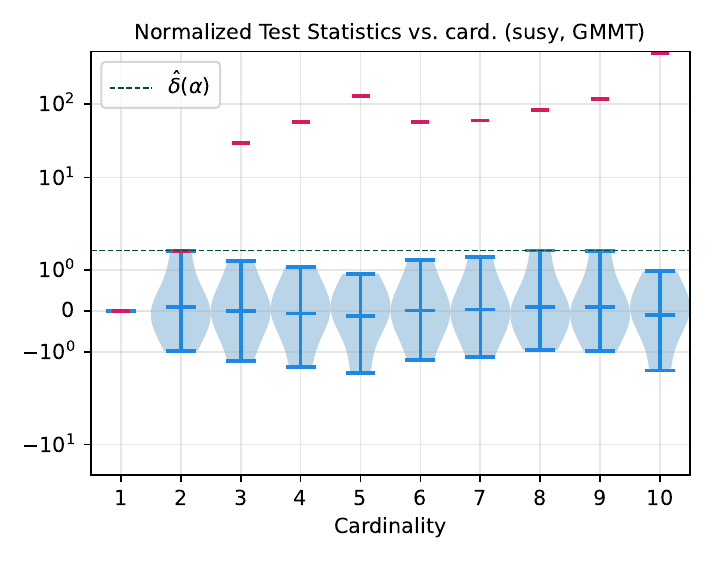} 
\end{subfigure}
\caption{ 
Testing the significance of the number of FI for different cardinalities under the GMMT model.
}
\Description{
Testing the significance of the number of FI for different cardinalities under the GMMT model.}
\label{fig:numfilengmmtnormappendix}
\end{figure*}
\fi

\ifextversion
\begin{figure*}[ht]
\begin{subfigure}{.175\textwidth}
  \centering
  \includegraphics[width=\textwidth]{./figures/power-legend.pdf}
\end{subfigure} \\
\begin{subfigure}{.275\textwidth}
  \centering
  \includegraphics[width=\textwidth]{./figures/num-fi-len-retail-ALICE.pdf} 
\end{subfigure}
\begin{subfigure}{.275\textwidth}
  \centering
  \includegraphics[width=\textwidth]{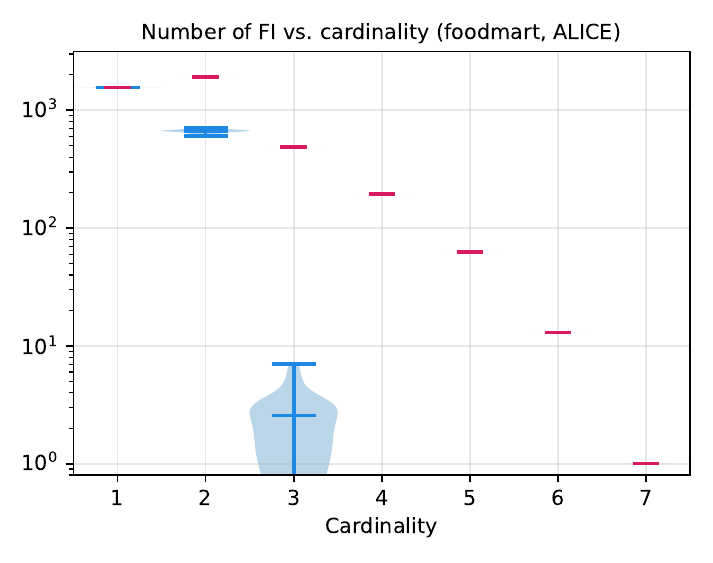} 
\end{subfigure}
\begin{subfigure}{.275\textwidth}
  \centering
  \includegraphics[width=\textwidth]{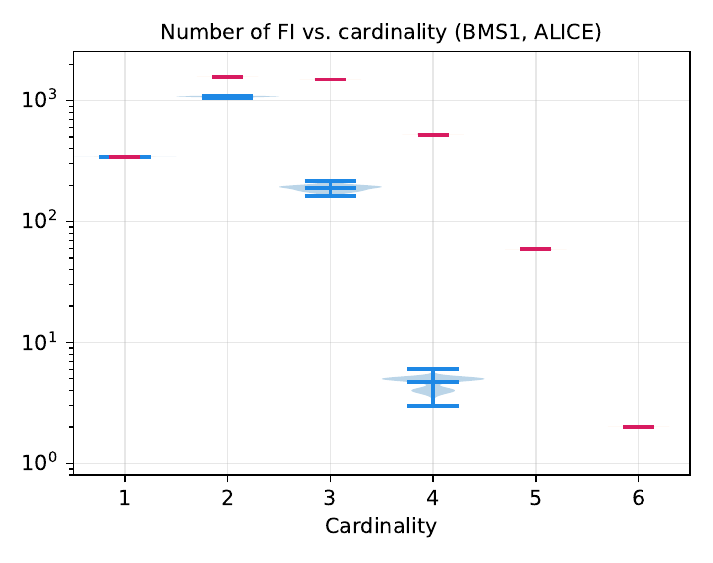} 
\end{subfigure}
\begin{subfigure}{.275\textwidth}
  \centering
  \includegraphics[width=\textwidth]{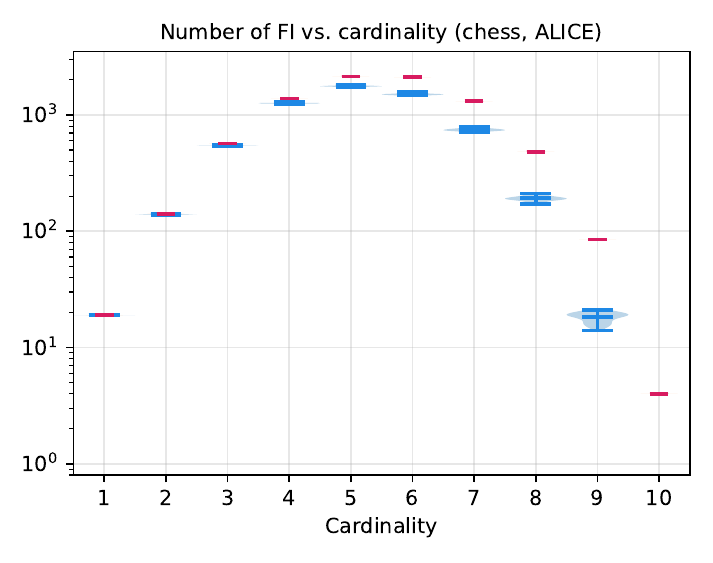} 
\end{subfigure}
\begin{subfigure}{.275\textwidth}
  \centering
  \includegraphics[width=\textwidth]{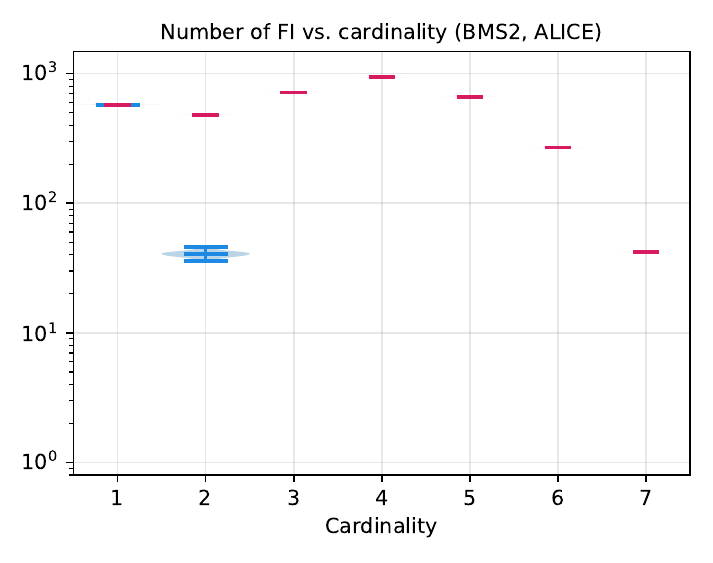} 
\end{subfigure}
\begin{subfigure}{.275\textwidth}
  \centering
  \includegraphics[width=\textwidth]{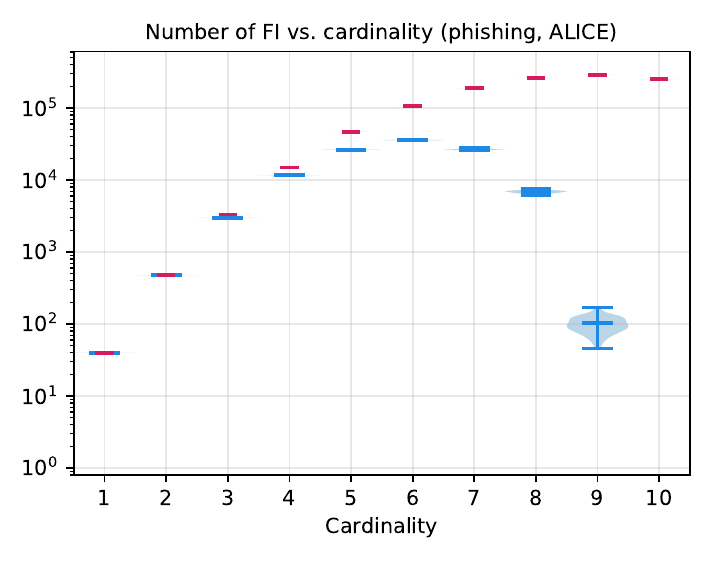} 
\end{subfigure}
\begin{subfigure}{.275\textwidth}
  \centering
  \includegraphics[width=\textwidth]{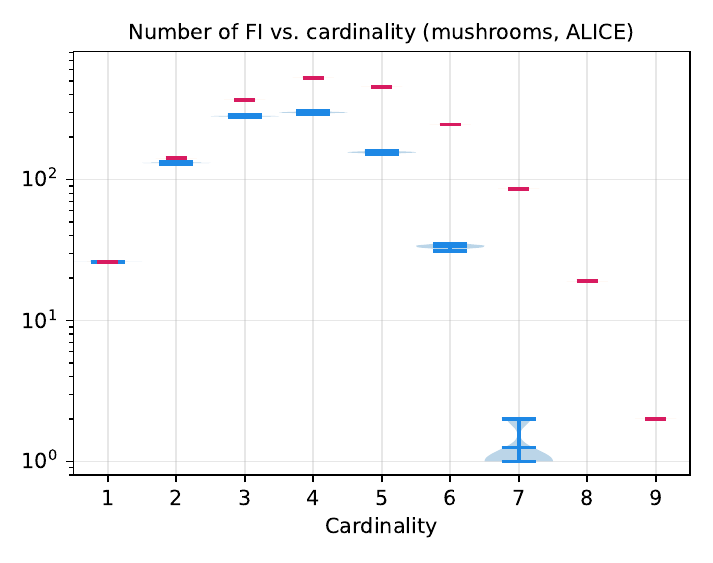} 
\end{subfigure}
\begin{subfigure}{.275\textwidth}
  \centering
  \includegraphics[width=\textwidth]{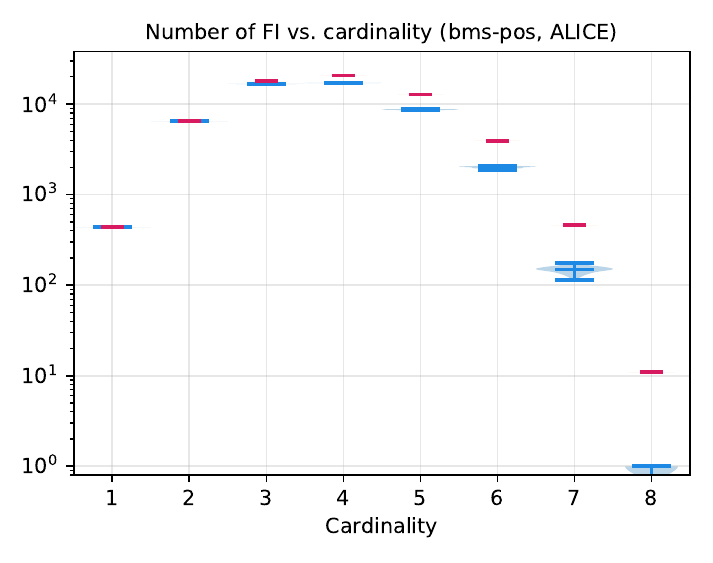} 
\end{subfigure}
\begin{subfigure}{.275\textwidth}
  \centering
  \includegraphics[width=\textwidth]{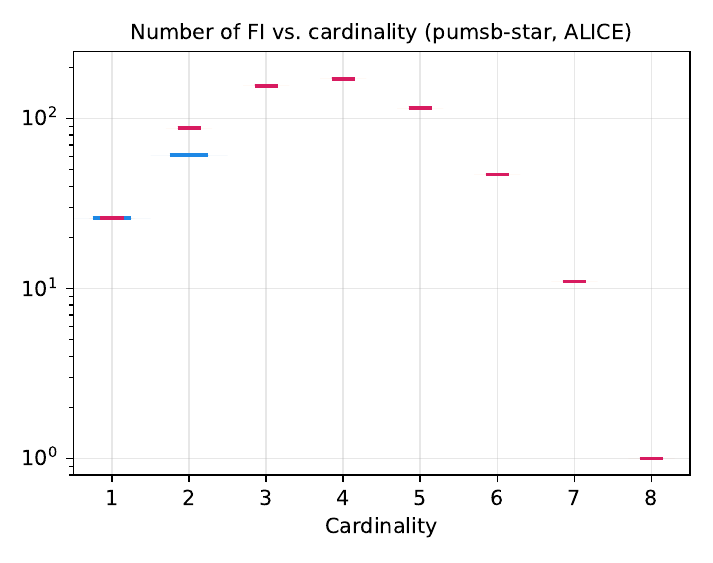} 
\end{subfigure}
\begin{subfigure}{.275\textwidth}
  \centering
  \includegraphics[width=\textwidth]{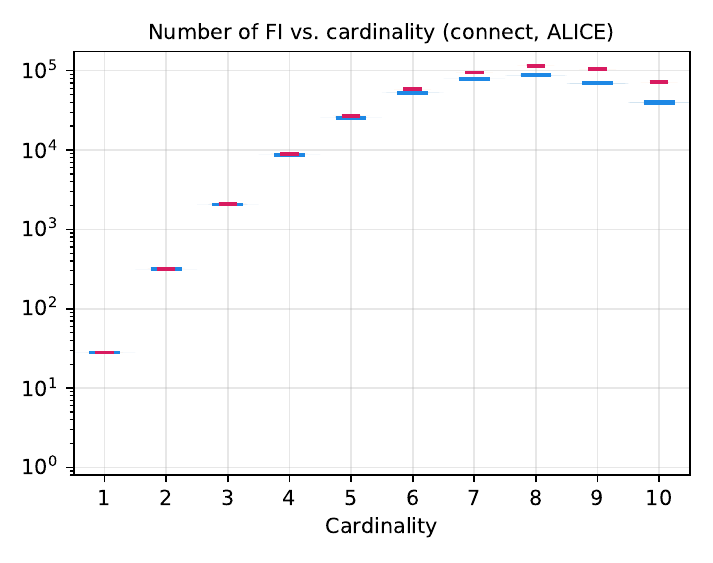} 
\end{subfigure}
\begin{subfigure}{.275\textwidth}
  \centering
  \includegraphics[width=\textwidth]{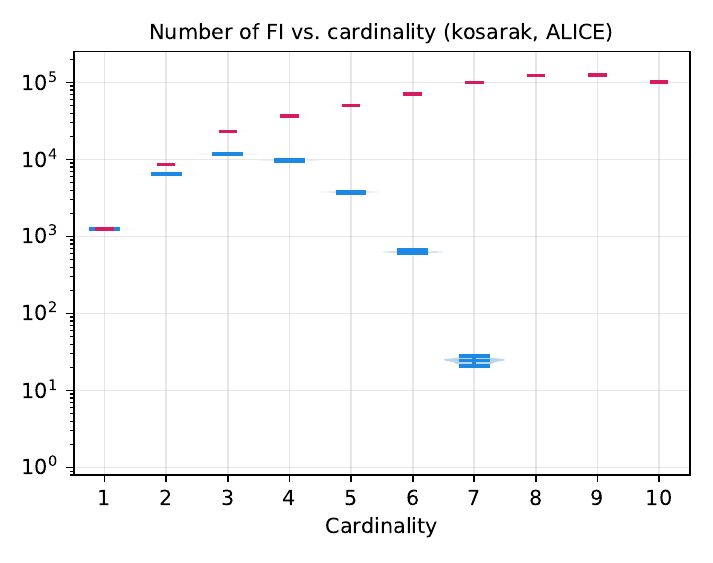} 
\end{subfigure}
\begin{subfigure}{.275\textwidth}
  \centering
  \includegraphics[width=\textwidth]{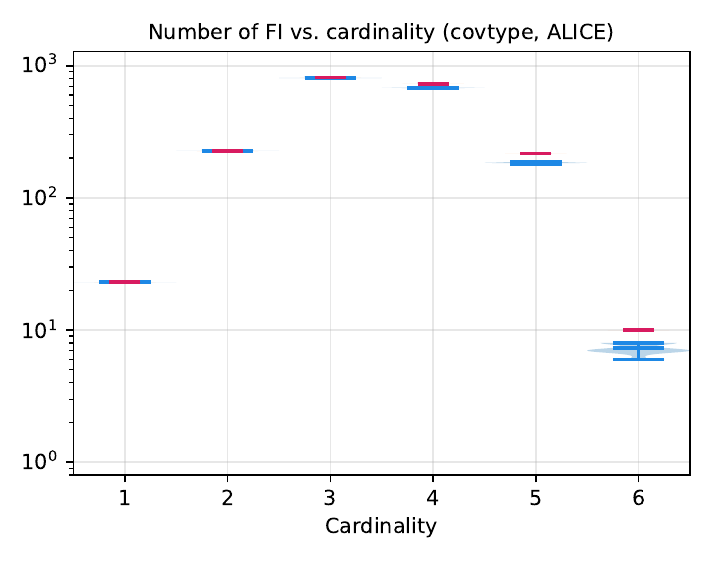} 
\end{subfigure}
\begin{subfigure}{.275\textwidth}
  \centering
  \includegraphics[width=\textwidth]{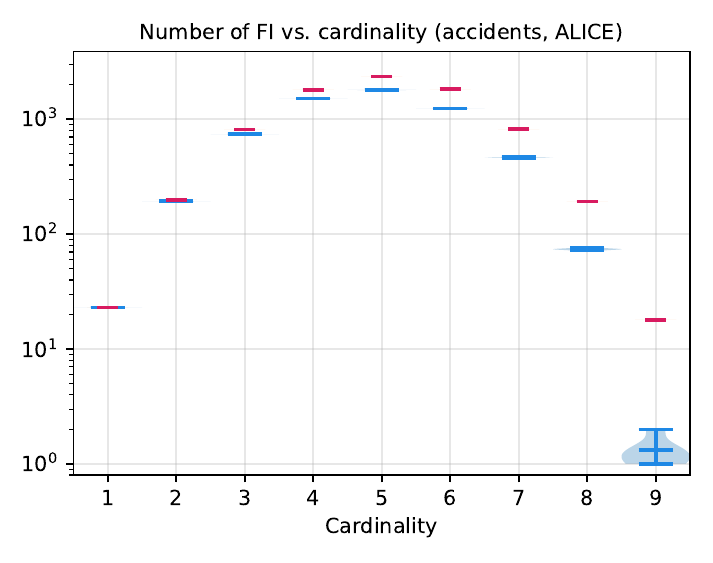} 
\end{subfigure}
\begin{subfigure}{.275\textwidth}
  \centering
  \includegraphics[width=\textwidth]{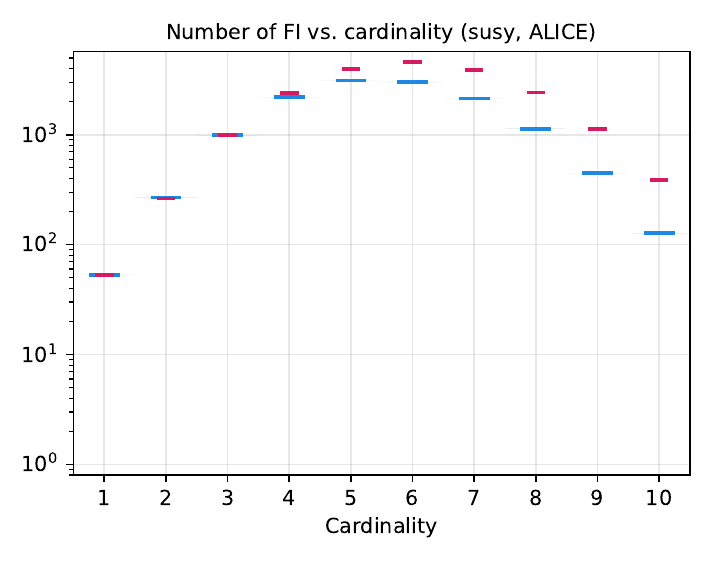} 
\end{subfigure}
\caption{ 
Testing the significance of the number of FI for different cardinalities under the ALICE model.
}
\Description{
Testing the significance of the number of FI for different cardinalities under the ALICE model.}
\label{fig:numfilenaliceappendix}
\end{figure*}
\fi

\ifextversion
\begin{figure*}[ht]
\begin{subfigure}{.175\textwidth}
  \centering
  \includegraphics[width=\textwidth]{./figures/power-legend.pdf}
\end{subfigure} \\
\begin{subfigure}{.275\textwidth}
  \centering
  \includegraphics[width=\textwidth]{./figures/num-fi-len-retail-norm-ALICE.pdf} 
\end{subfigure}
\begin{subfigure}{.275\textwidth}
  \centering
  \includegraphics[width=\textwidth]{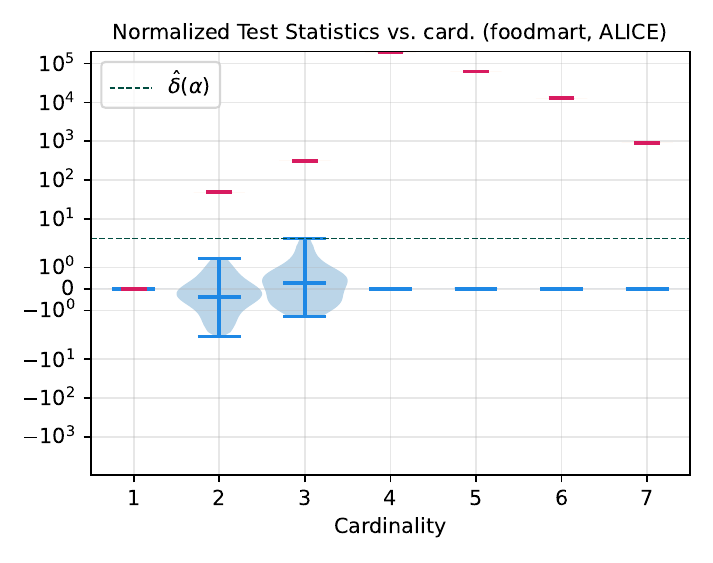} 
\end{subfigure}
\begin{subfigure}{.275\textwidth}
  \centering
  \includegraphics[width=\textwidth]{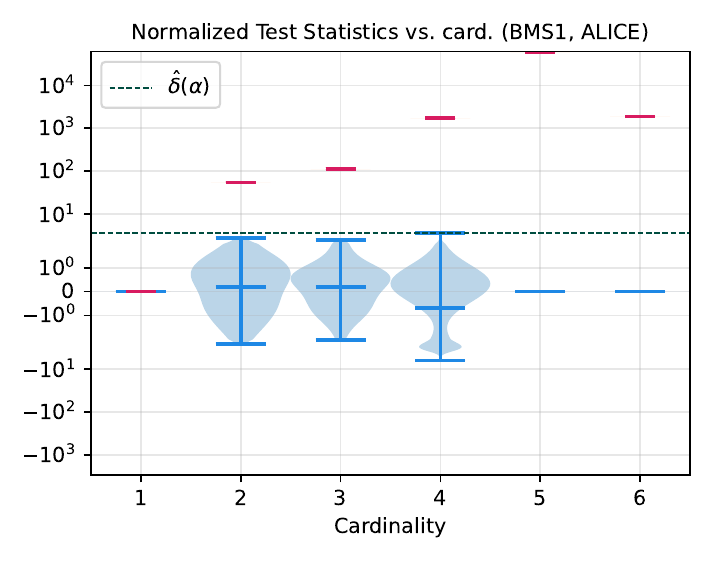} 
\end{subfigure}
\begin{subfigure}{.275\textwidth}
  \centering
  \includegraphics[width=\textwidth]{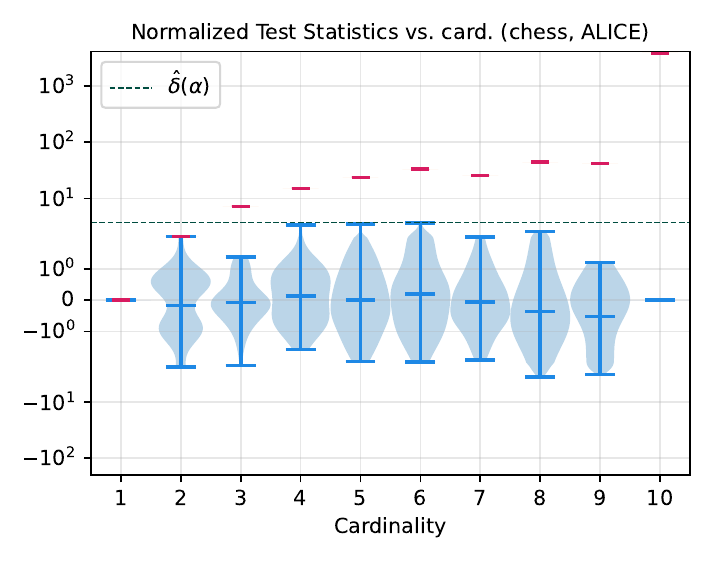} 
\end{subfigure}
\begin{subfigure}{.275\textwidth}
  \centering
  \includegraphics[width=\textwidth]{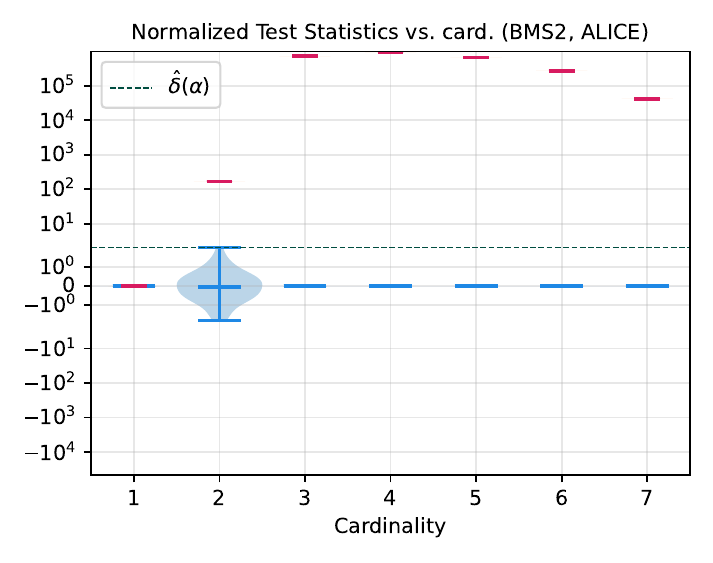} 
\end{subfigure}
\begin{subfigure}{.275\textwidth}
  \centering
  \includegraphics[width=\textwidth]{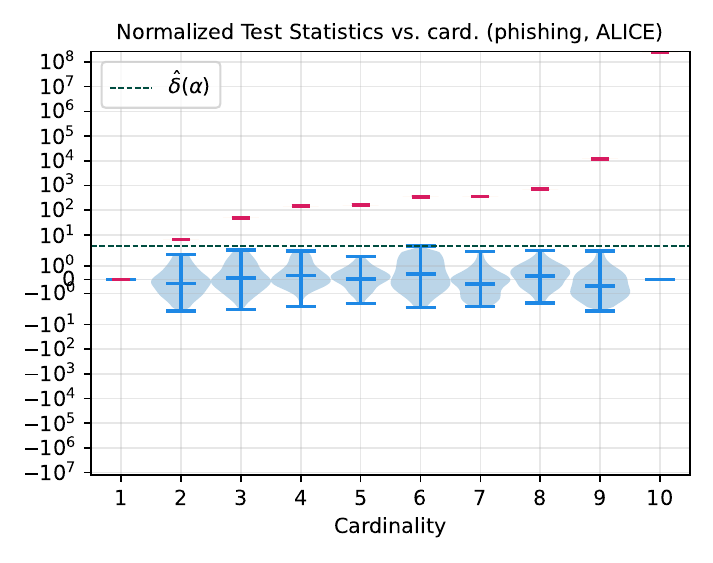} 
\end{subfigure}
\begin{subfigure}{.275\textwidth}
  \centering
  \includegraphics[width=\textwidth]{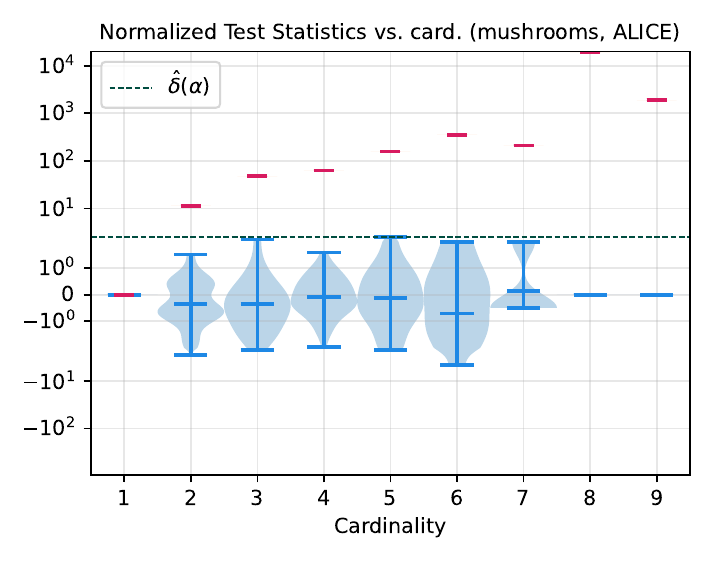} 
\end{subfigure}
\begin{subfigure}{.275\textwidth}
  \centering
  \includegraphics[width=\textwidth]{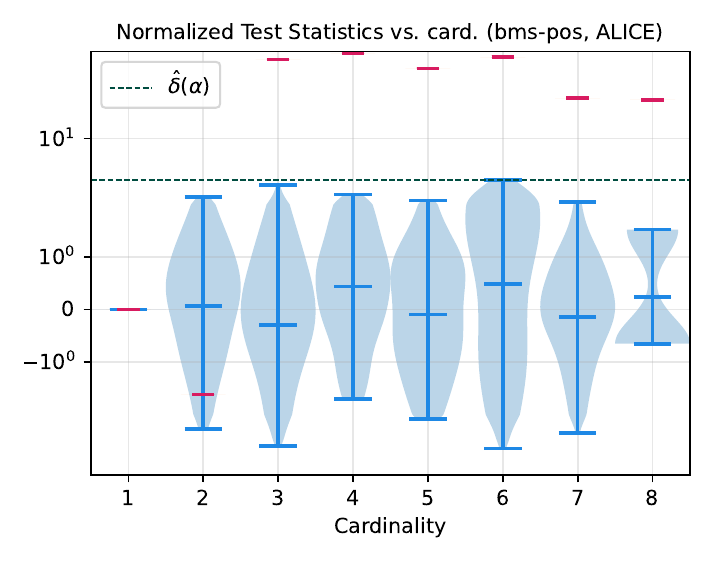} 
\end{subfigure}
\begin{subfigure}{.275\textwidth}
  \centering
  \includegraphics[width=\textwidth]{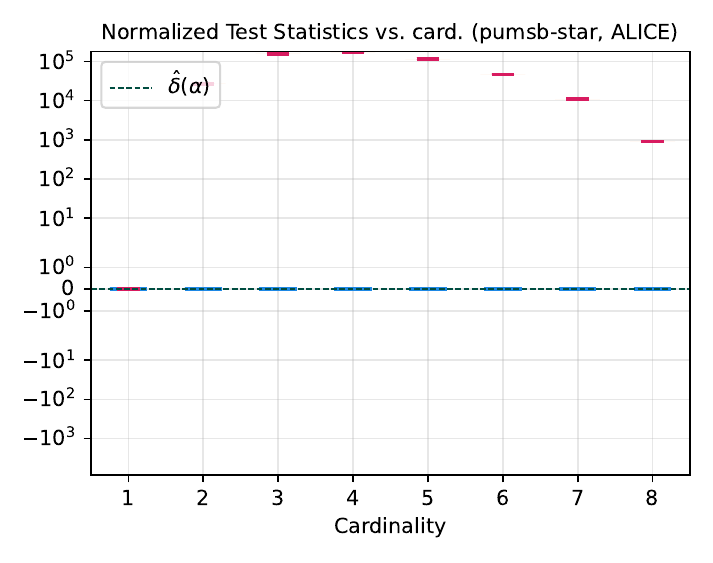} 
\end{subfigure}
\begin{subfigure}{.275\textwidth}
  \centering
  \includegraphics[width=\textwidth]{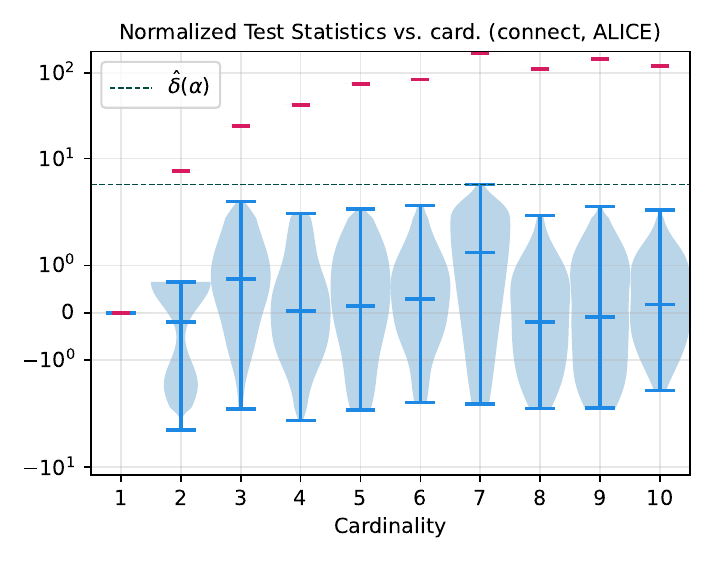} 
\end{subfigure}
\begin{subfigure}{.275\textwidth}
  \centering
  \includegraphics[width=\textwidth]{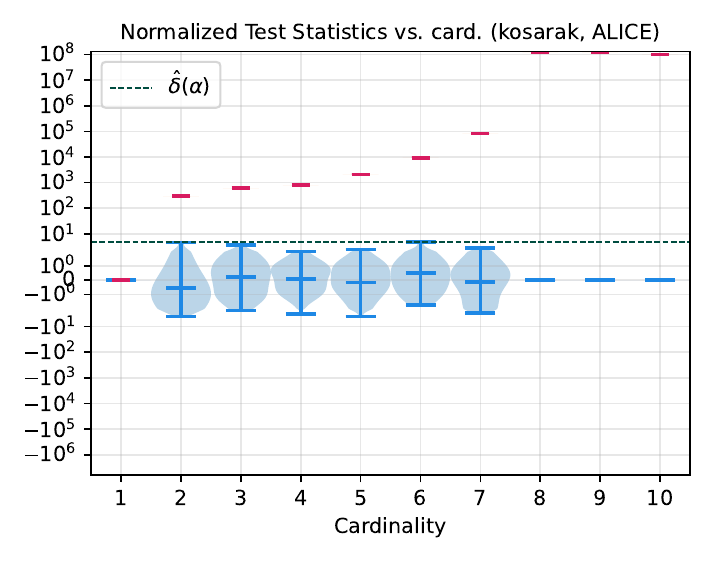} 
\end{subfigure}
\begin{subfigure}{.275\textwidth}
  \centering
  \includegraphics[width=\textwidth]{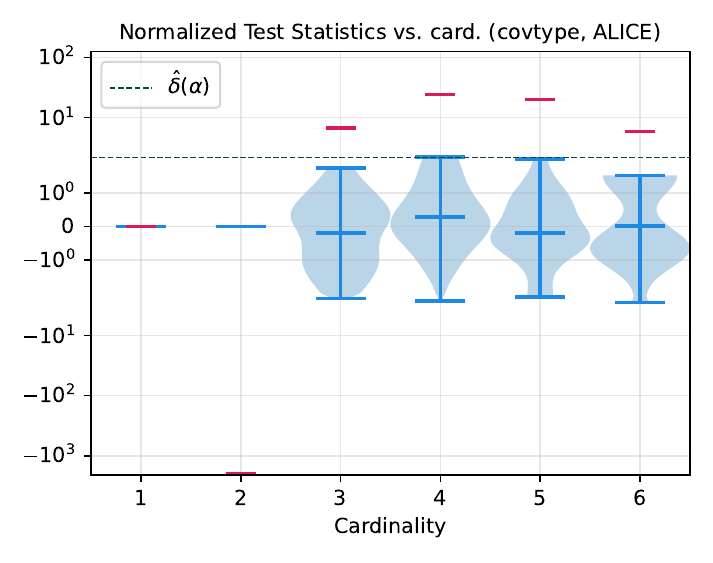} 
\end{subfigure}
\begin{subfigure}{.275\textwidth}
  \centering
  \includegraphics[width=\textwidth]{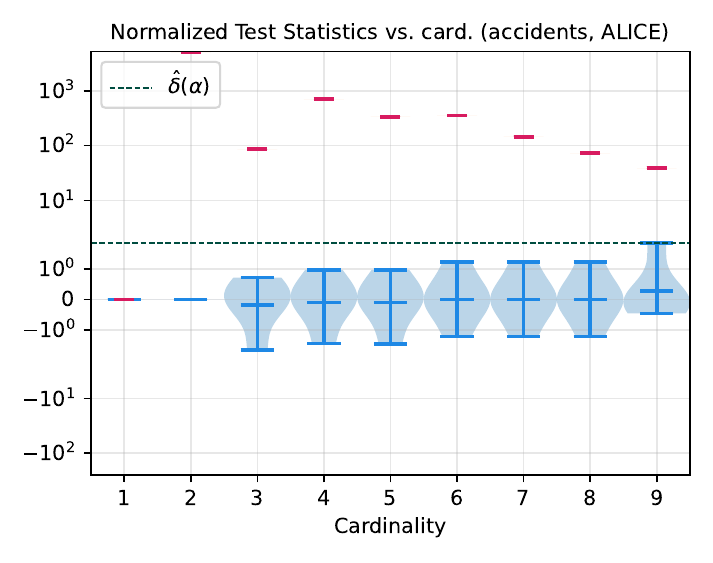} 
\end{subfigure}
\begin{subfigure}{.275\textwidth}
  \centering
  \includegraphics[width=\textwidth]{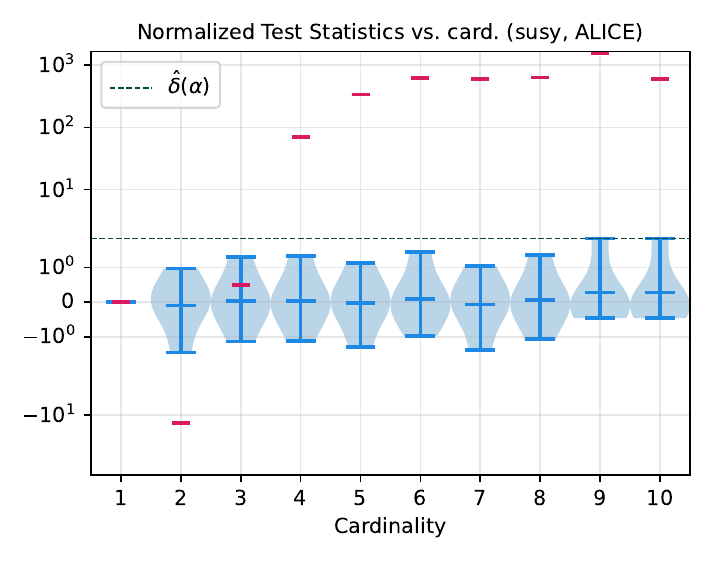} 
\end{subfigure}
\caption{ 
Testing the significance of the number of FI for different cardinalities under the ALICE model.
}
\Description{
Testing the significance of the number of FI for different cardinalities under the ALICE model.}
\label{fig:numfilenalicenormappendix}
\end{figure*}
\fi

\ifextversion
\begin{figure*}[ht]
\begin{subfigure}{.175\textwidth}
  \centering
  \includegraphics[width=\textwidth]{./figures/power-legend.pdf}
\end{subfigure} \\
\begin{subfigure}{.275\textwidth}
  \centering
  \includegraphics[width=\textwidth]{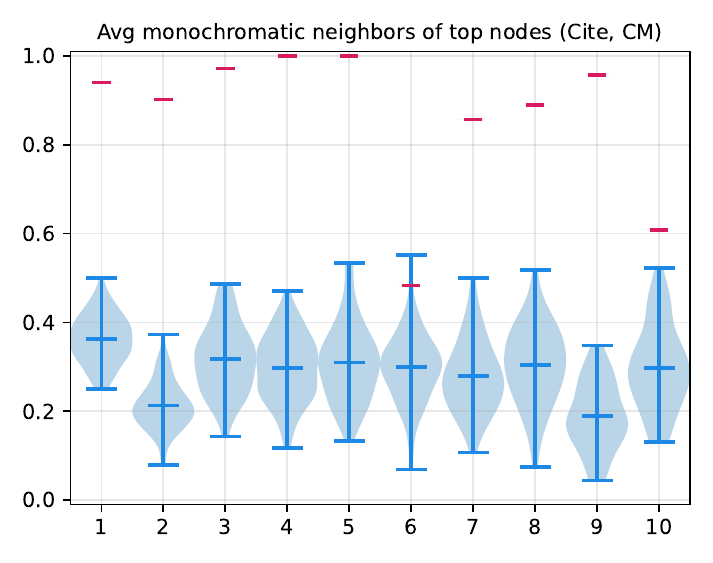} 
\end{subfigure}
\begin{subfigure}{.275\textwidth}
  \centering
  \includegraphics[width=\textwidth]{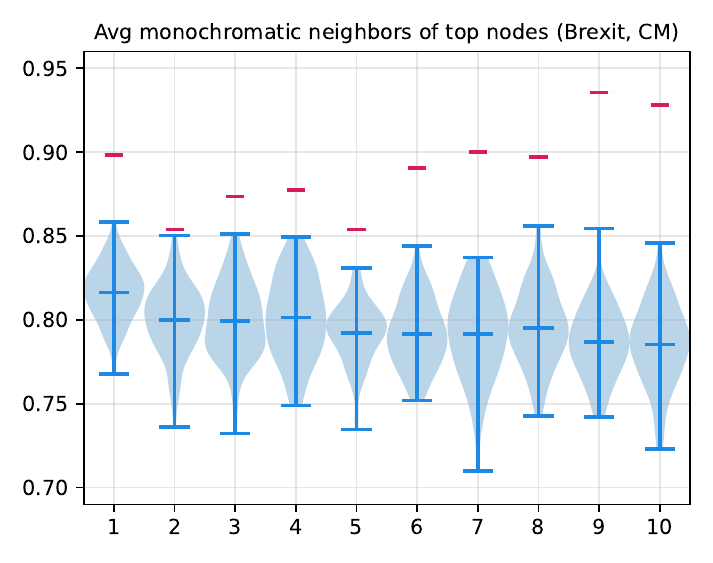} 
\end{subfigure}
\begin{subfigure}{.275\textwidth}
  \centering
  \includegraphics[width=\textwidth]{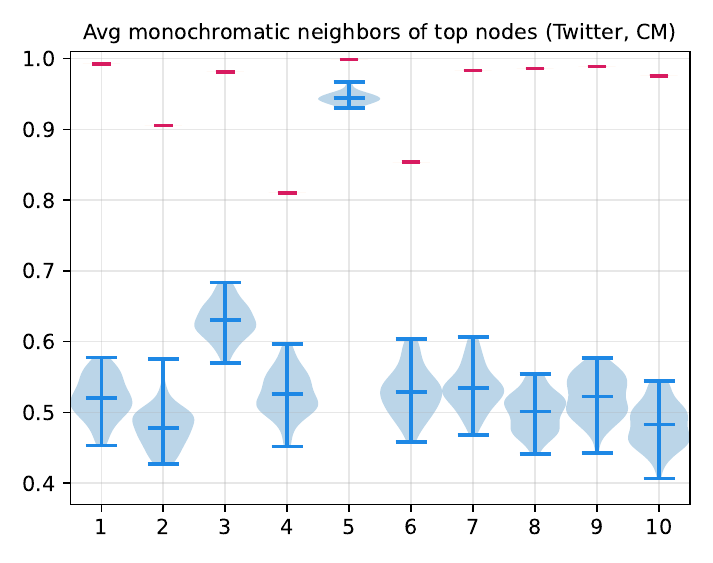} 
\end{subfigure}
\begin{subfigure}{.275\textwidth}
  \centering
  \includegraphics[width=\textwidth]{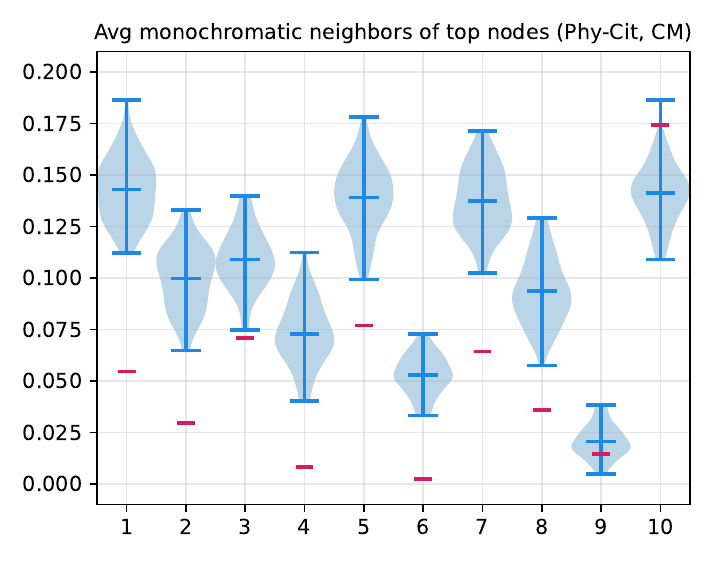} 
\end{subfigure}
\begin{subfigure}{.275\textwidth}
  \centering
  \includegraphics[width=\textwidth]{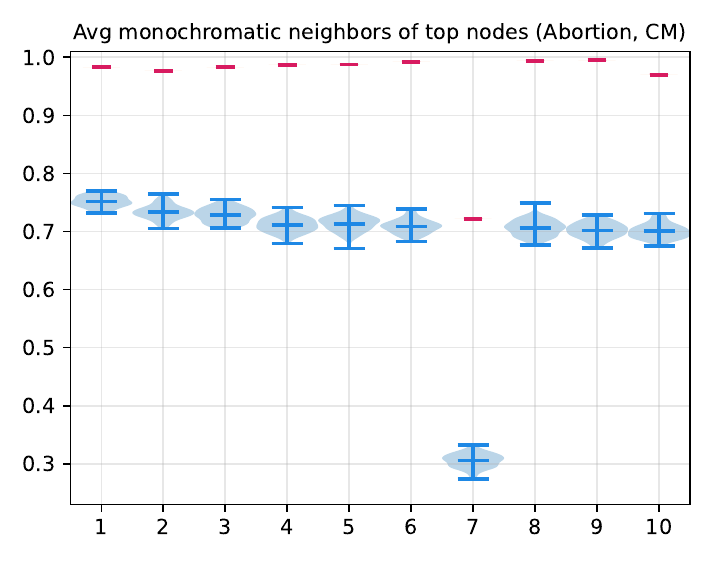} 
\end{subfigure}
\begin{subfigure}{.275\textwidth}
  \centering
  \includegraphics[width=\textwidth]{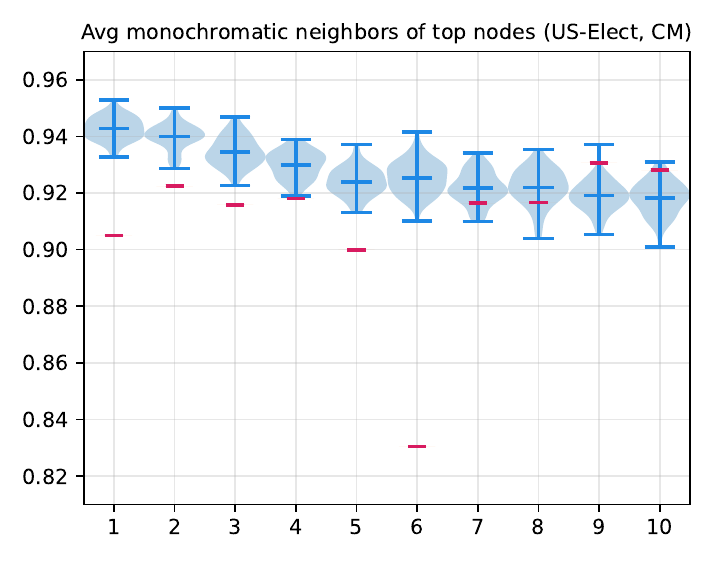} 
\end{subfigure}
\begin{subfigure}{.275\textwidth}
  \centering
  \includegraphics[width=\textwidth]{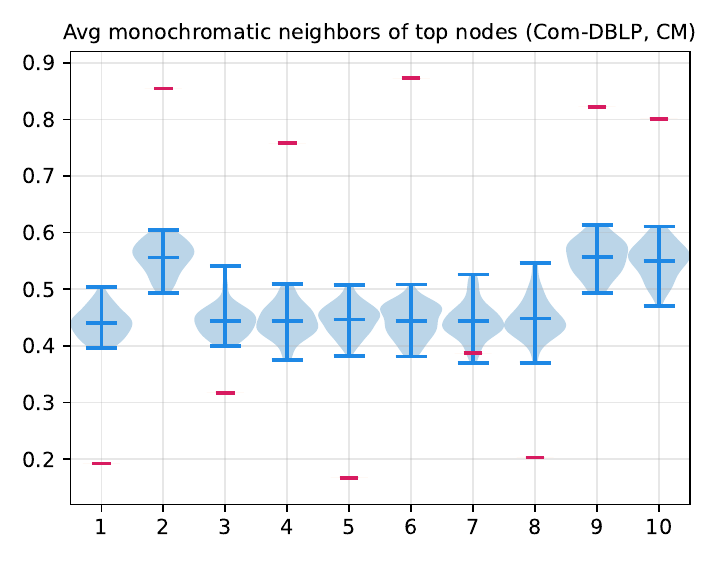} 
\end{subfigure}
\begin{subfigure}{.275\textwidth}
  \centering
  \includegraphics[width=\textwidth]{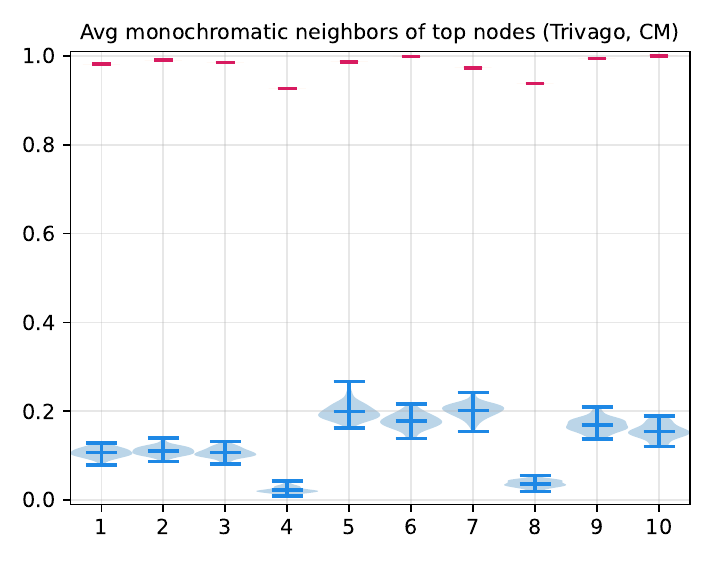} 
\end{subfigure}
\begin{subfigure}{.275\textwidth}
  \centering
  \includegraphics[width=\textwidth]{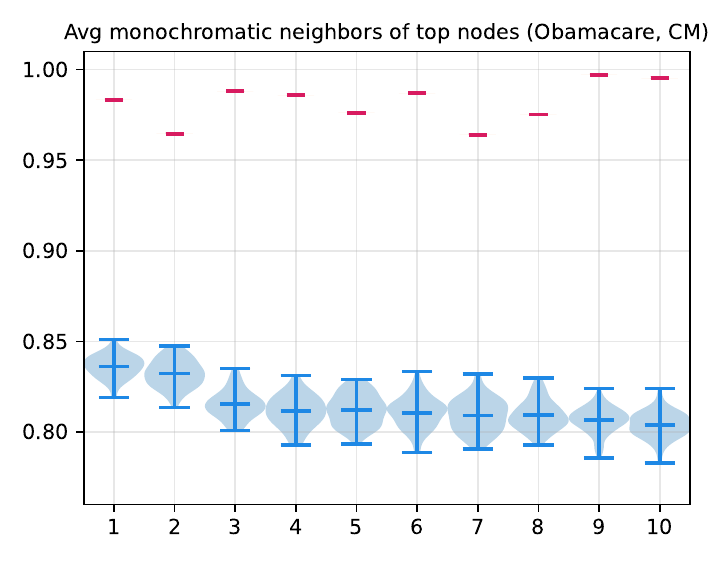} 
\end{subfigure}
\begin{subfigure}{.275\textwidth}
  \centering
  \includegraphics[width=\textwidth]{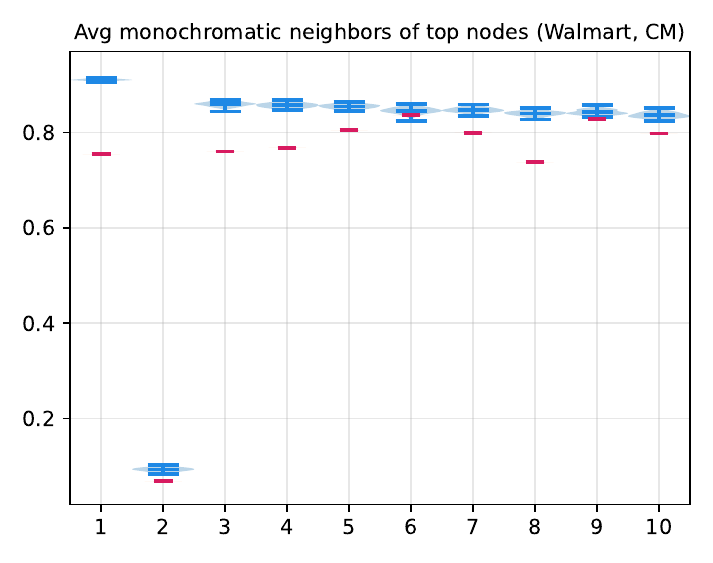} 
\end{subfigure}
\begin{subfigure}{.275\textwidth}
  \centering
  \includegraphics[width=\textwidth]{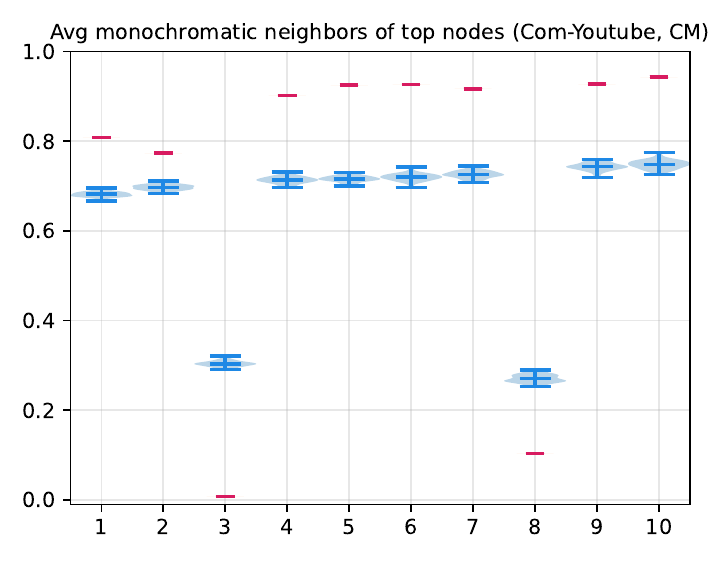} 
\end{subfigure}
\begin{subfigure}{.275\textwidth}
  \centering
  \includegraphics[width=\textwidth]{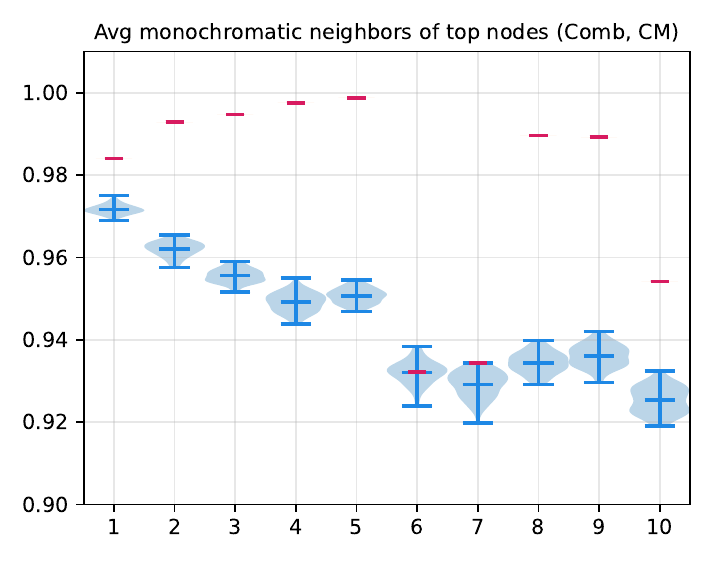} 
\end{subfigure}
\begin{subfigure}{.275\textwidth}
  \centering
  \includegraphics[width=\textwidth]{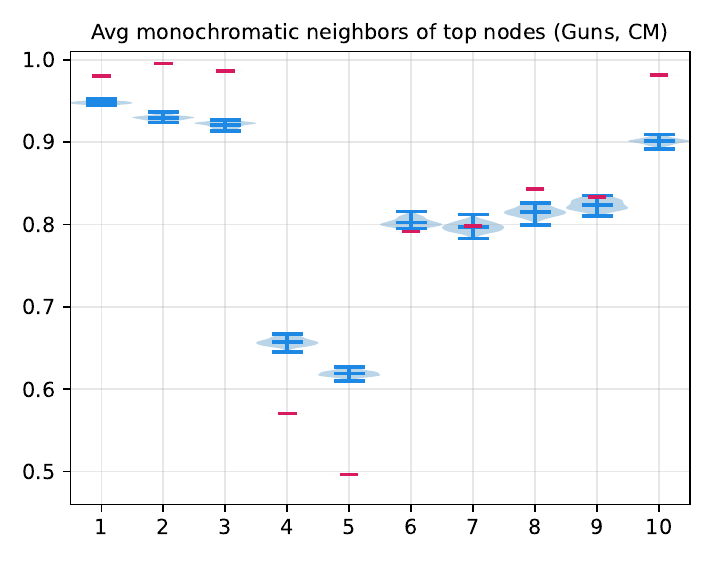} 
\end{subfigure}
\begin{subfigure}{.275\textwidth}
  \centering
  \includegraphics[width=\textwidth]{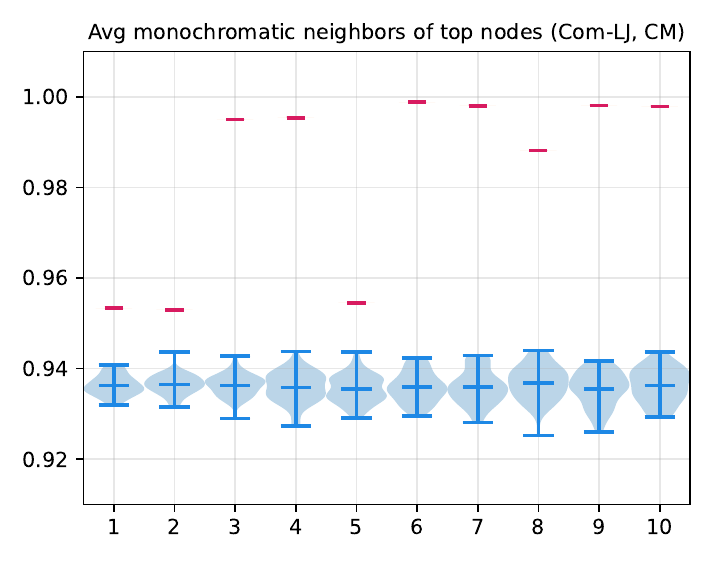} 
\end{subfigure}
\begin{subfigure}{.275\textwidth}
  \centering
  \includegraphics[width=\textwidth]{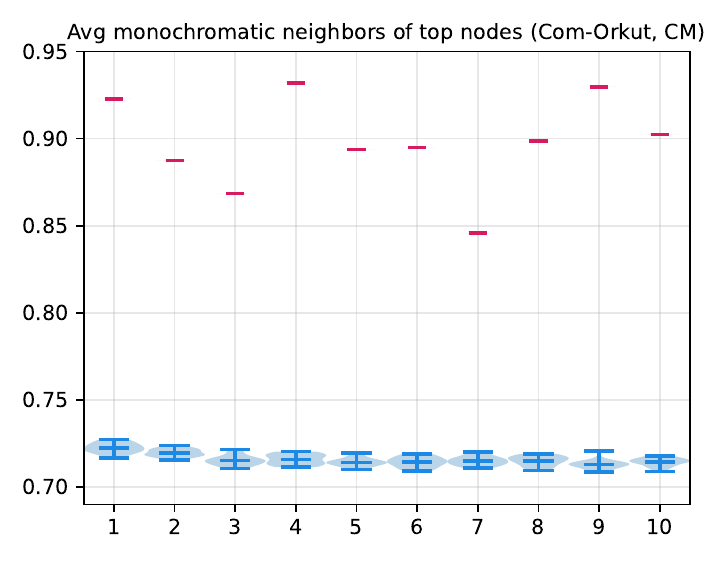} 
\end{subfigure}
\caption{ 
Testing the significance of $M_v(G)$ of the $10$ nodes with highest degree under the CM model.
}
\Description{
Testing the significance of $M_v(G)$ of the $10$ nodes with highest degree under the CM model.}
\label{fig:avgcolcmappendix}
\end{figure*}
\fi

\ifextversion
\begin{figure*}[ht]
\begin{subfigure}{.175\textwidth}
  \centering
  \includegraphics[width=\textwidth]{./figures/power-legend.pdf}
\end{subfigure} \\
\begin{subfigure}{.275\textwidth}
  \centering
  \includegraphics[width=\textwidth]{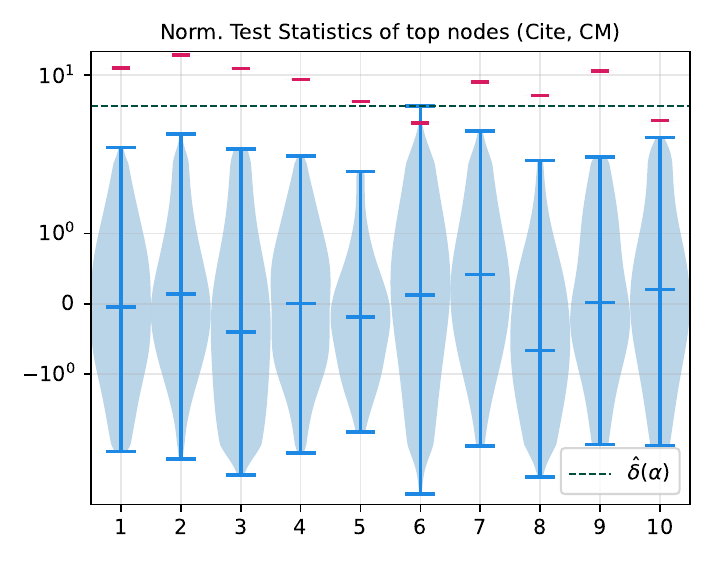} 
\end{subfigure}
\begin{subfigure}{.275\textwidth}
  \centering
  \includegraphics[width=\textwidth]{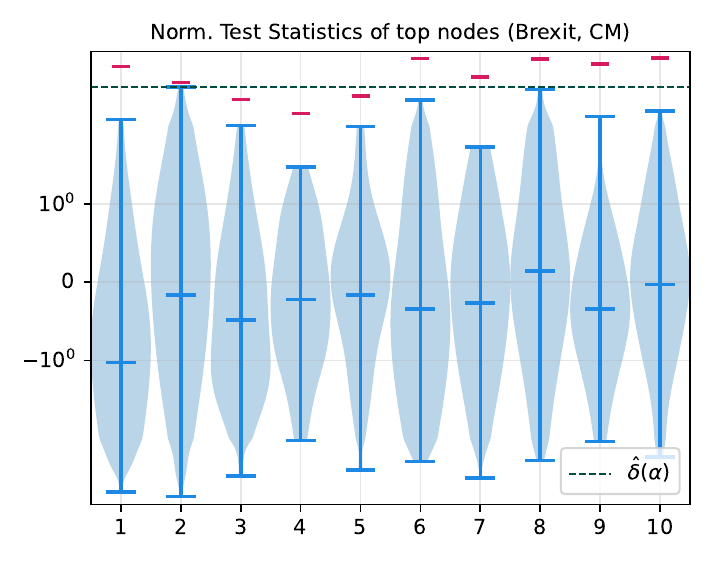} 
\end{subfigure}
\begin{subfigure}{.275\textwidth}
  \centering
  \includegraphics[width=\textwidth]{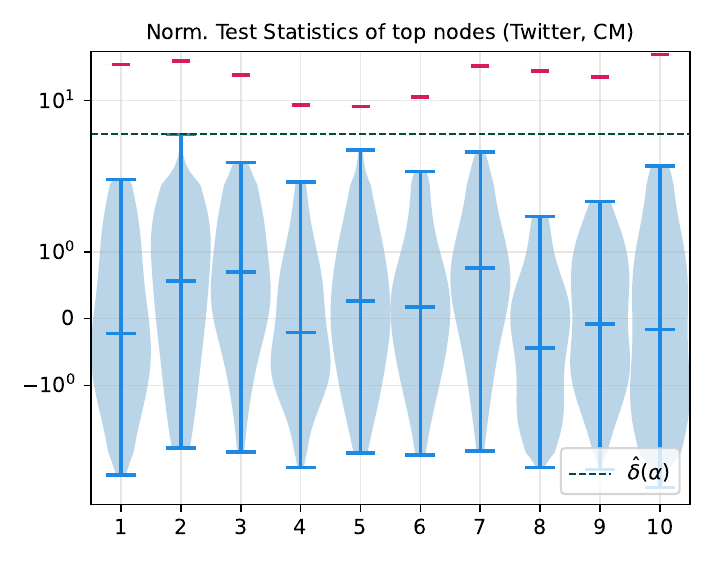} 
\end{subfigure}
\begin{subfigure}{.275\textwidth}
  \centering
  \includegraphics[width=\textwidth]{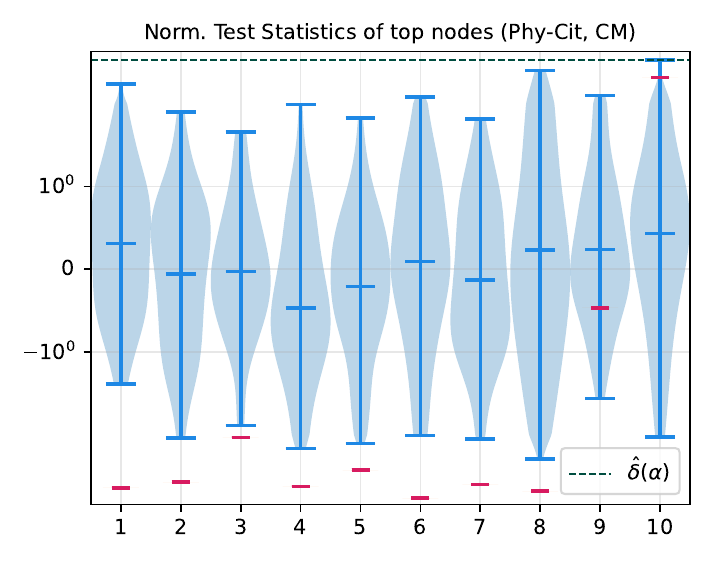} 
\end{subfigure}
\begin{subfigure}{.275\textwidth}
  \centering
  \includegraphics[width=\textwidth]{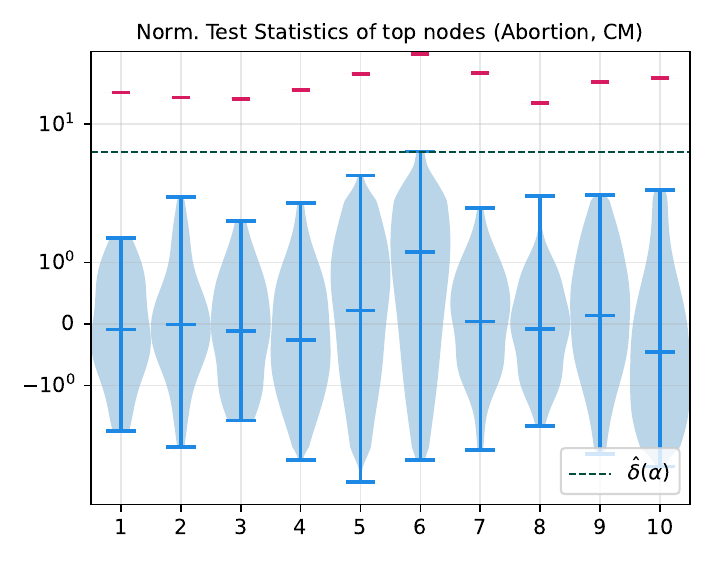} 
\end{subfigure}
\begin{subfigure}{.275\textwidth}
  \centering
  \includegraphics[width=\textwidth]{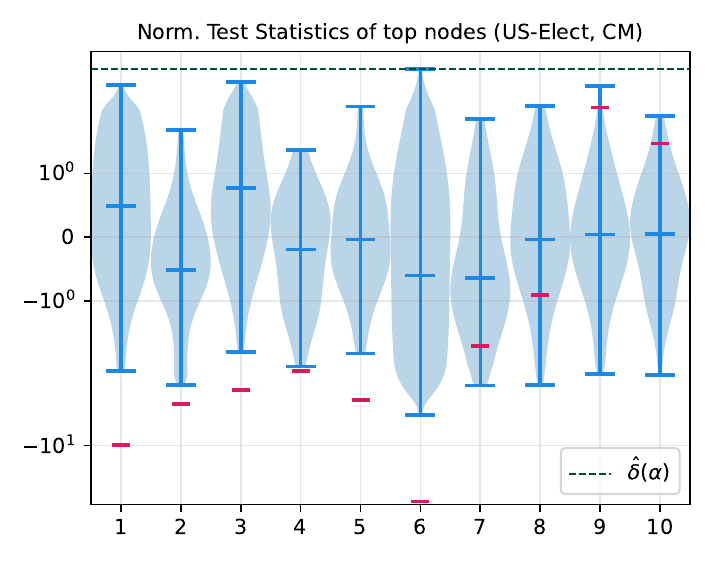} 
\end{subfigure}
\begin{subfigure}{.275\textwidth}
  \centering
  \includegraphics[width=\textwidth]{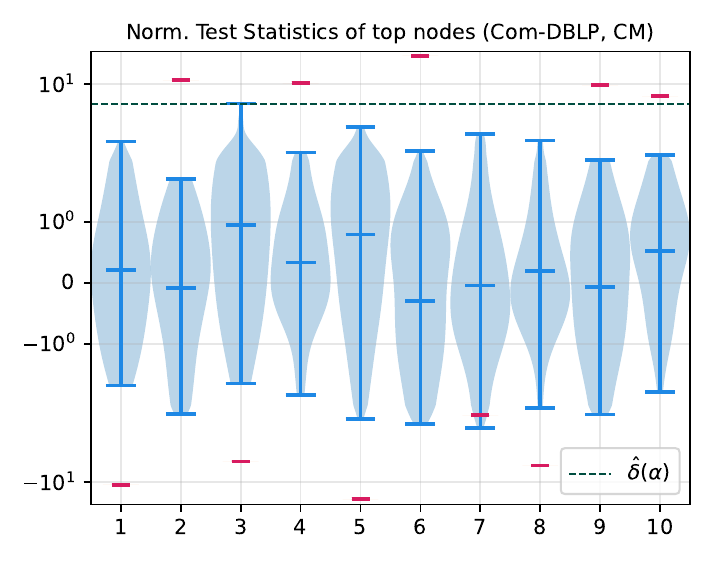} 
\end{subfigure}
\begin{subfigure}{.275\textwidth}
  \centering
  \includegraphics[width=\textwidth]{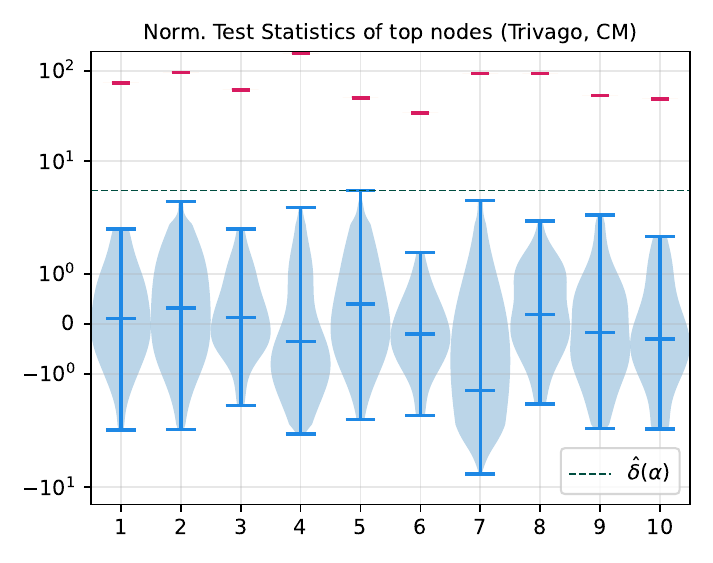} 
\end{subfigure}
\begin{subfigure}{.275\textwidth}
  \centering
  \includegraphics[width=\textwidth]{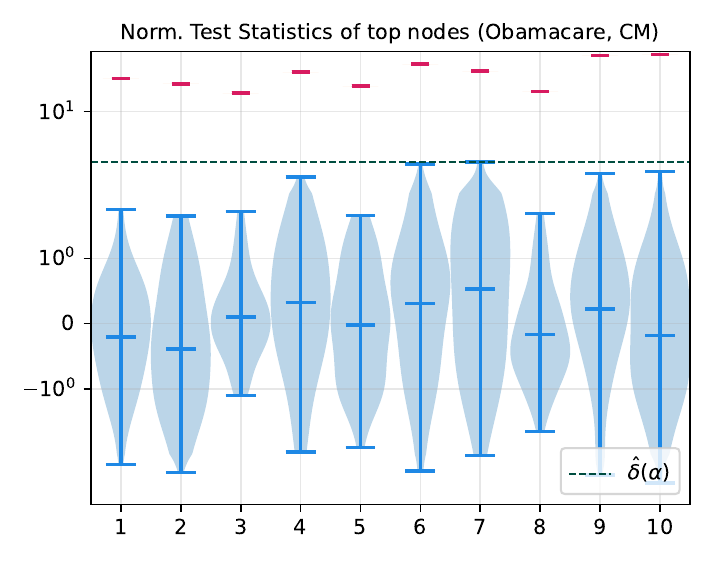} 
\end{subfigure}
\begin{subfigure}{.275\textwidth}
  \centering
  \includegraphics[width=\textwidth]{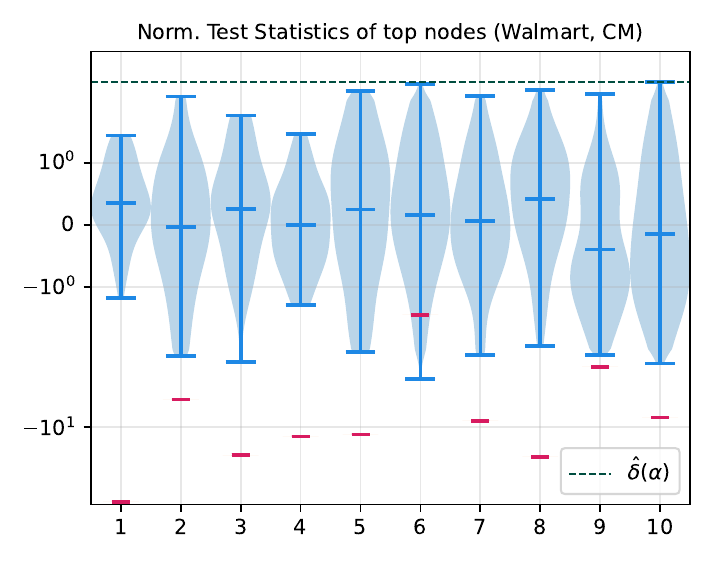} 
\end{subfigure}
\begin{subfigure}{.275\textwidth}
  \centering
  \includegraphics[width=\textwidth]{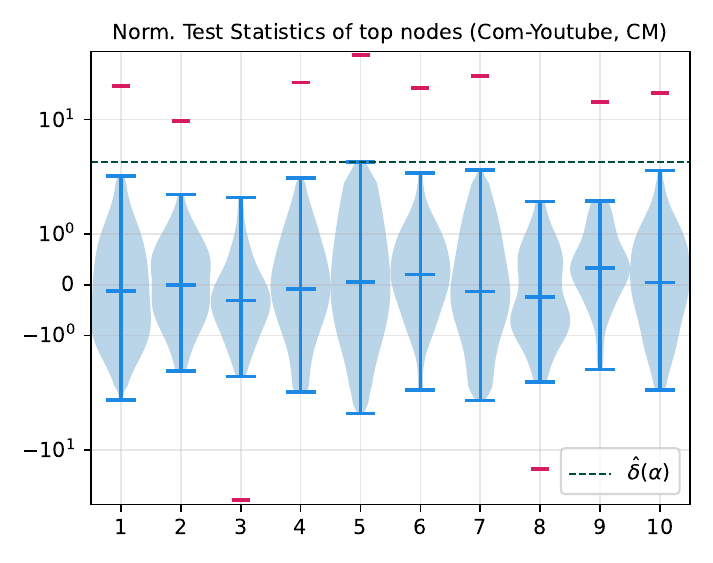} 
\end{subfigure}
\begin{subfigure}{.275\textwidth}
  \centering
  \includegraphics[width=\textwidth]{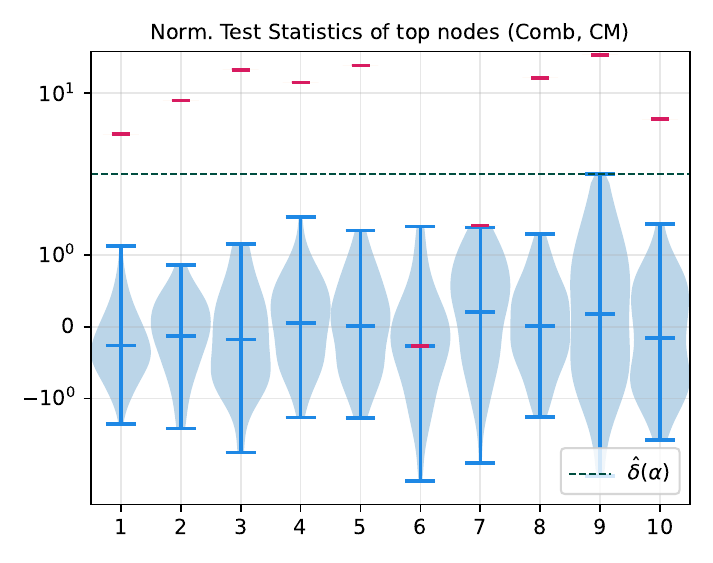} 
\end{subfigure}
\begin{subfigure}{.275\textwidth}
  \centering
  \includegraphics[width=\textwidth]{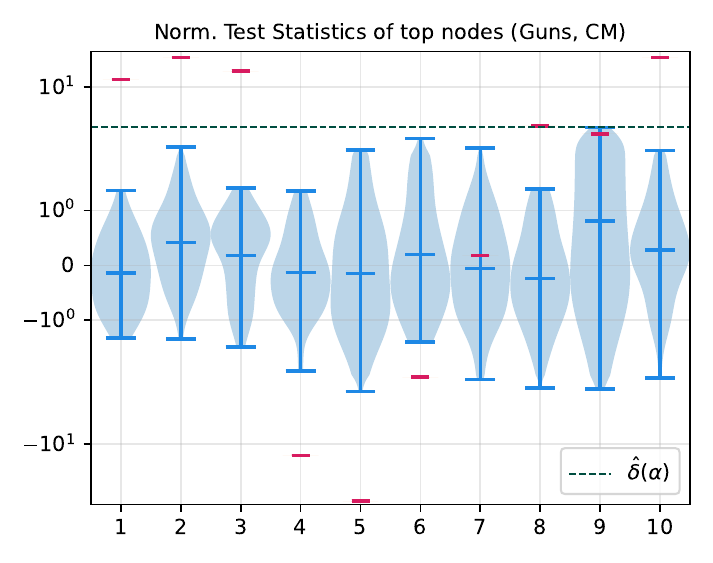} 
\end{subfigure}
\begin{subfigure}{.275\textwidth}
  \centering
  \includegraphics[width=\textwidth]{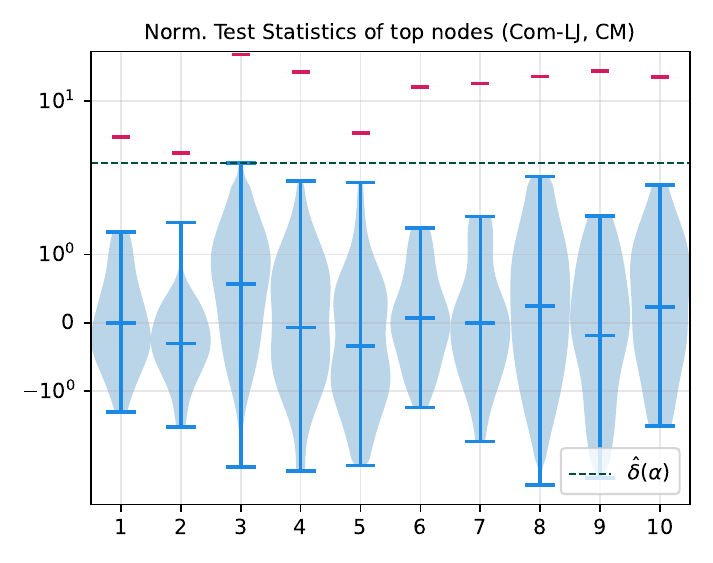} 
\end{subfigure}
\begin{subfigure}{.275\textwidth}
  \centering
  \includegraphics[width=\textwidth]{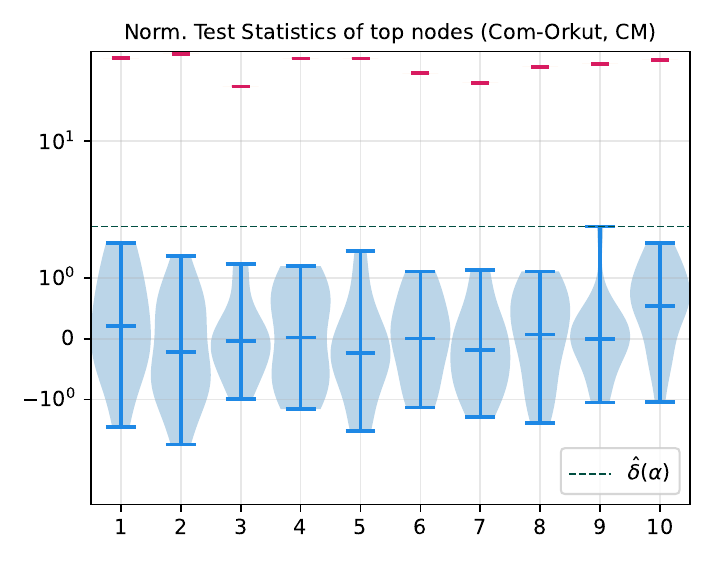} 
\end{subfigure}
\caption{ 
Testing the significance of $M_v(G)$ of the $10$ nodes with highest degree under the CM model.
}
\Description{
Testing the significance of $M_v(G)$ of the $10$ nodes with highest degree under the CM model.}
\label{fig:avgcolcmnormappendix}
\end{figure*}
\fi

\ifextversion
\begin{figure*}[ht]
\begin{subfigure}{.175\textwidth}
  \centering
  \includegraphics[width=\textwidth]{./figures/power-legend.pdf}
\end{subfigure} \\
\begin{subfigure}{.275\textwidth}
  \centering
  \includegraphics[width=\textwidth]{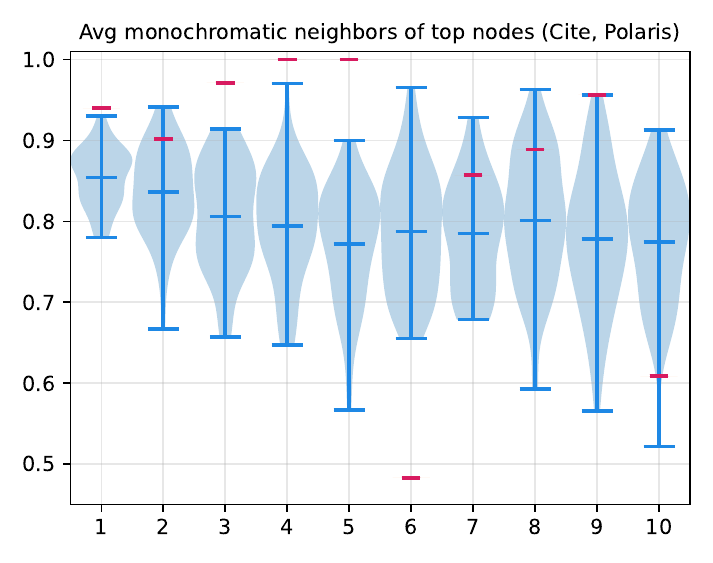} 
\end{subfigure}
\begin{subfigure}{.275\textwidth}
  \centering
  \includegraphics[width=\textwidth]{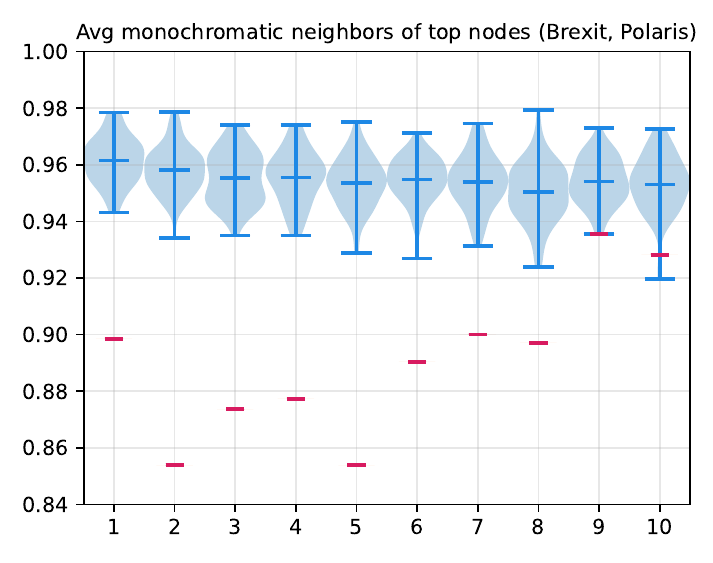} 
\end{subfigure}
\begin{subfigure}{.275\textwidth}
  \centering
  \includegraphics[width=\textwidth]{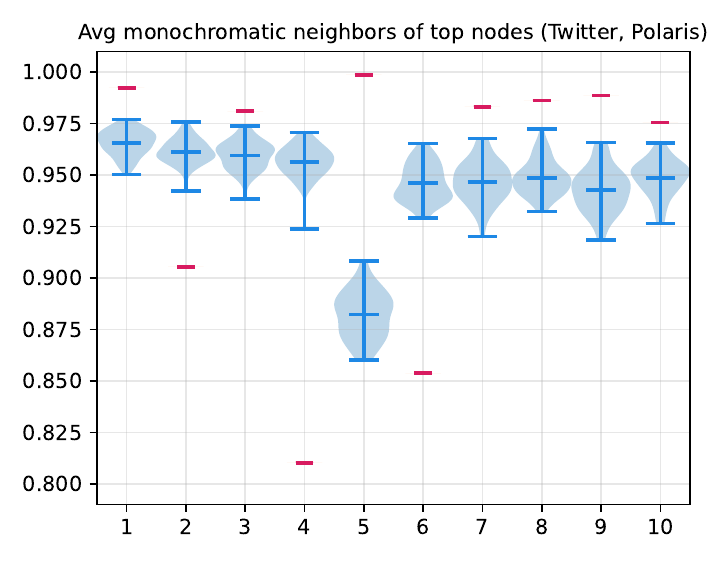} 
\end{subfigure}
\begin{subfigure}{.275\textwidth}
  \centering
  \includegraphics[width=\textwidth]{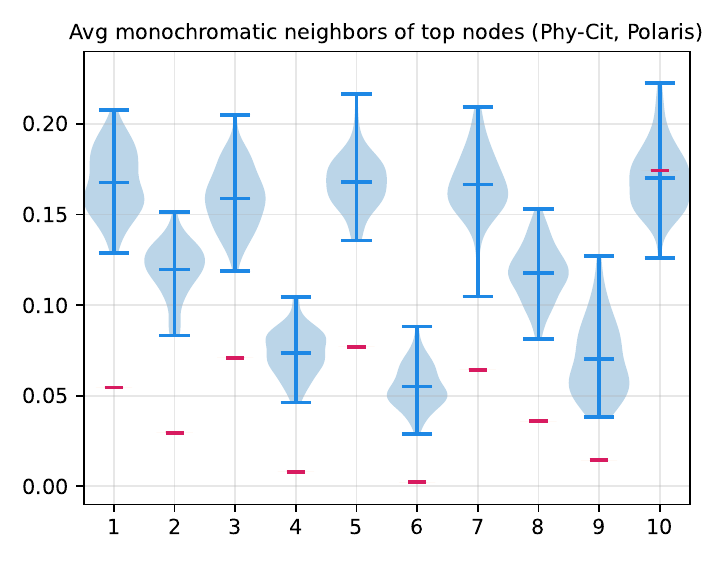} 
\end{subfigure}
\begin{subfigure}{.275\textwidth}
  \centering
  \includegraphics[width=\textwidth]{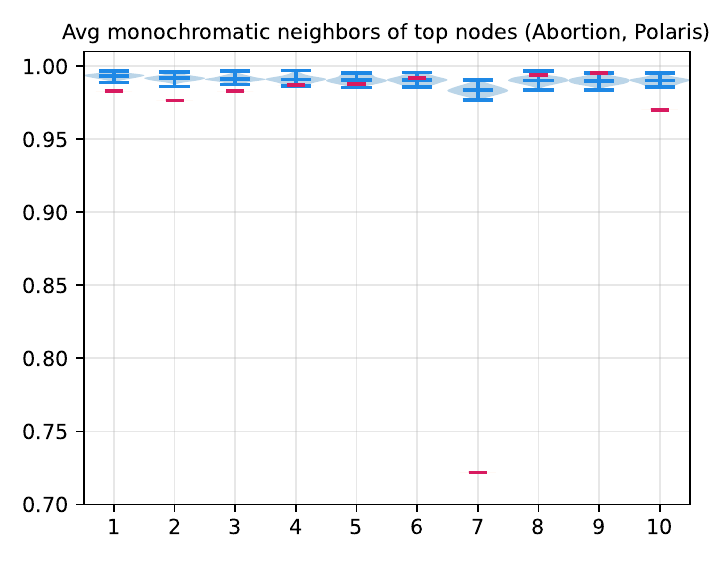} 
\end{subfigure}
\begin{subfigure}{.275\textwidth}
  \centering
  \includegraphics[width=\textwidth]{./figures/avg-col-stats-neigh-US-Elect-Polaris.pdf} 
\end{subfigure}
\begin{subfigure}{.275\textwidth}
  \centering
  \includegraphics[width=\textwidth]{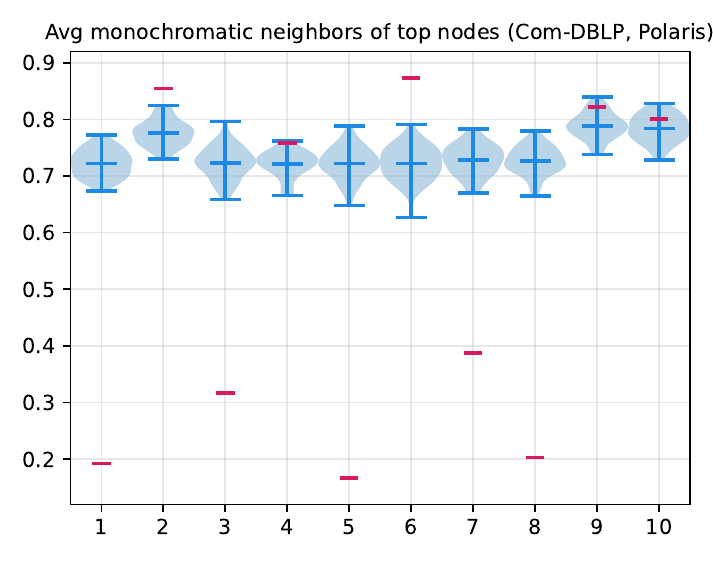} 
\end{subfigure}
\begin{subfigure}{.275\textwidth}
  \centering
  \includegraphics[width=\textwidth]{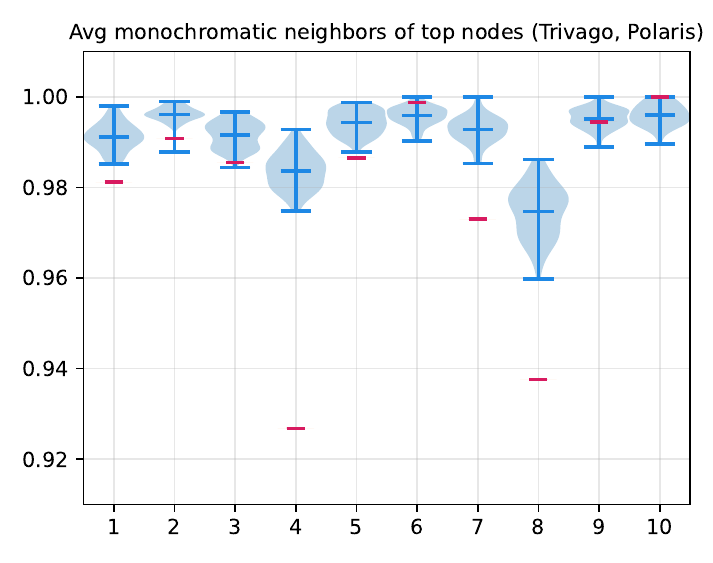} 
\end{subfigure}
\begin{subfigure}{.275\textwidth}
  \centering
  \includegraphics[width=\textwidth]{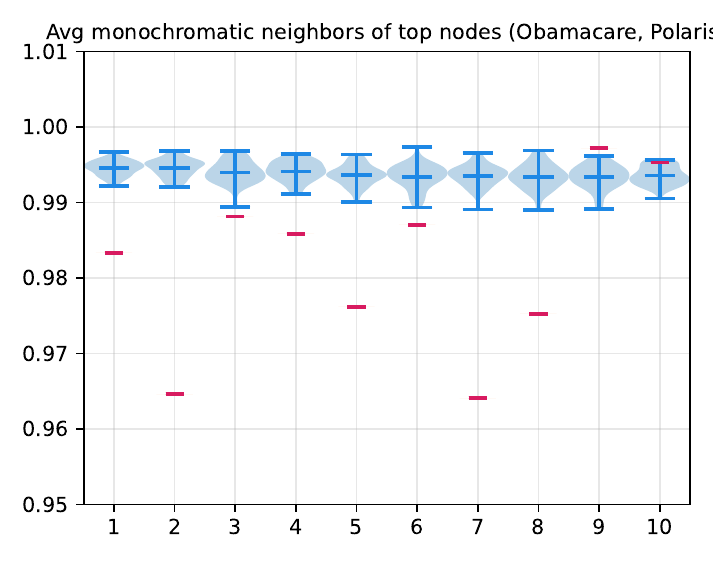} 
\end{subfigure}
\begin{subfigure}{.275\textwidth}
  \centering
  \includegraphics[width=\textwidth]{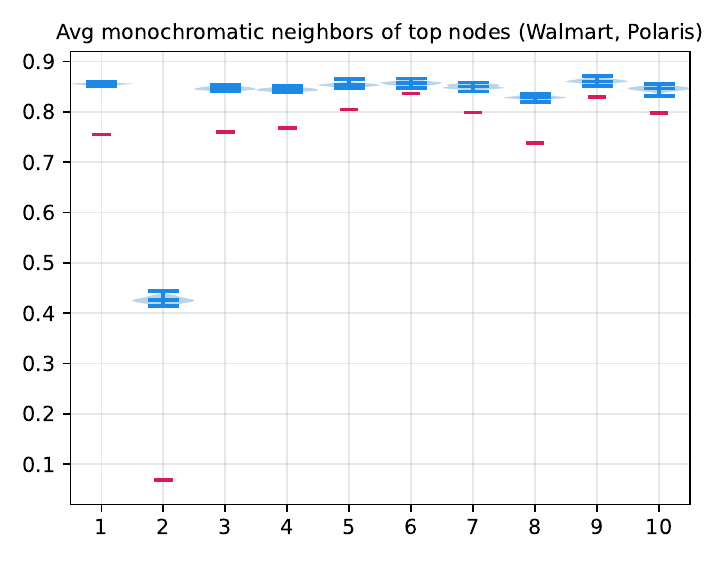} 
\end{subfigure}
\begin{subfigure}{.275\textwidth}
  \centering
  \includegraphics[width=\textwidth]{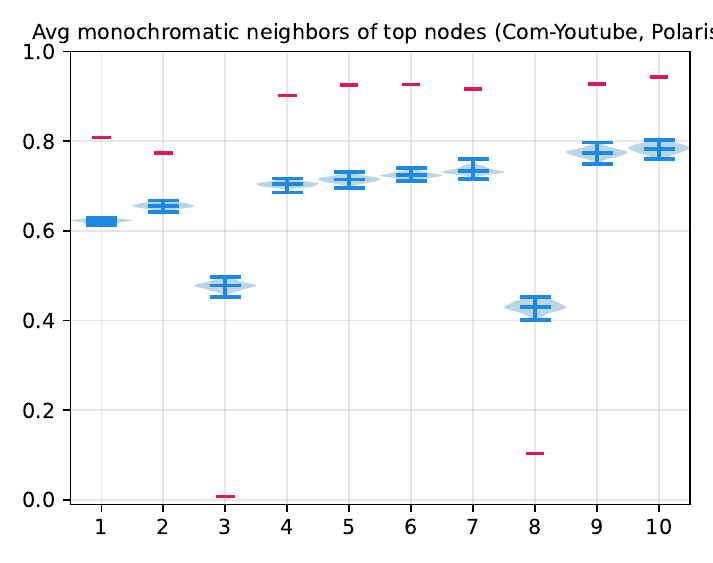} 
\end{subfigure}
\begin{subfigure}{.275\textwidth}
  \centering
  \includegraphics[width=\textwidth]{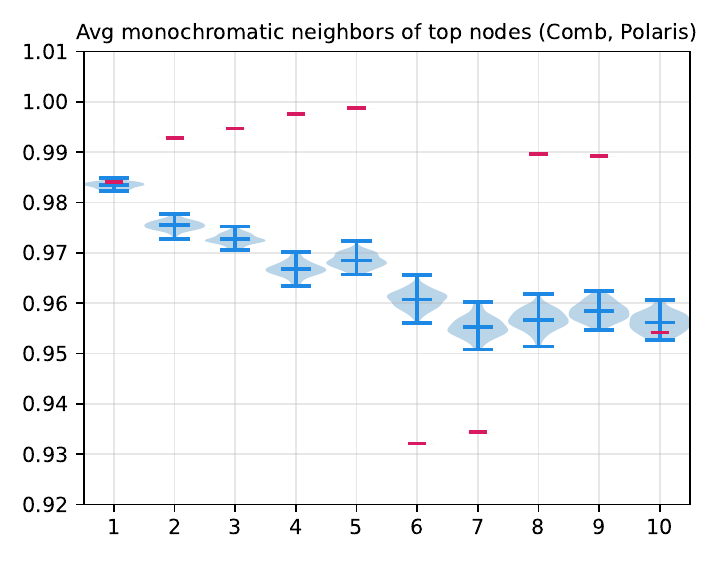} 
\end{subfigure}
\begin{subfigure}{.275\textwidth}
  \centering
  \includegraphics[width=\textwidth]{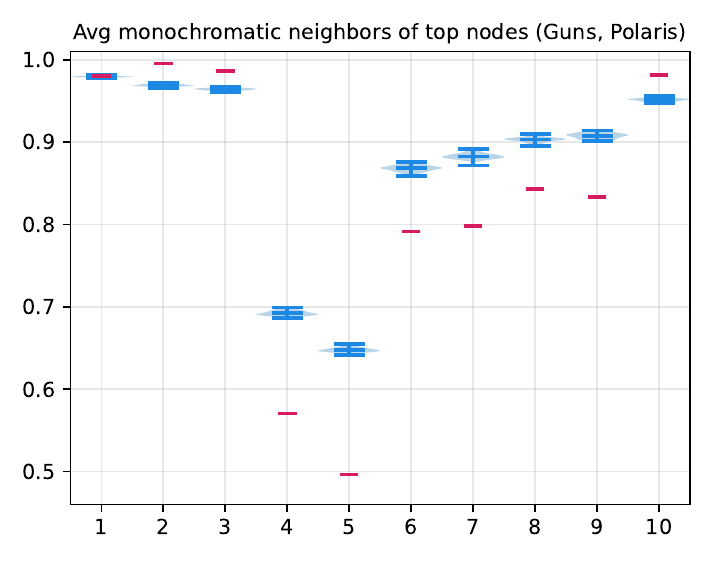} 
\end{subfigure}
\begin{subfigure}{.275\textwidth}
  \centering
  \includegraphics[width=\textwidth]{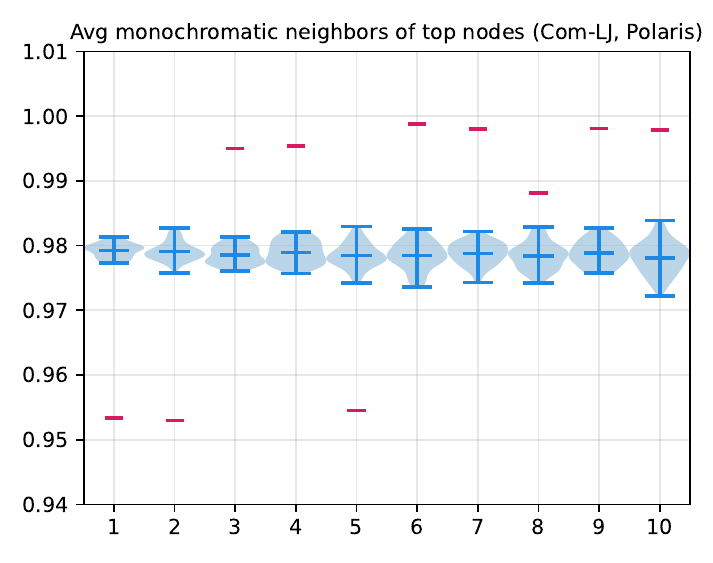} 
\end{subfigure}
\begin{subfigure}{.275\textwidth}
  \centering
  \includegraphics[width=\textwidth]{./figures/avg-col-stats-neigh-Com-Orkut-Polaris.pdf} 
\end{subfigure}
\caption{ 
Testing the significance of $M_v(G)$ of the $10$ nodes with highest degree under the Polaris model.
}
\Description{
Testing the significance of $M_v(G)$ of the $10$ nodes with highest degree under the Polaris model.}
\label{fig:avgcolpolarisappendix}
\end{figure*}
\fi

\ifextversion
\begin{figure*}[ht]
\begin{subfigure}{.175\textwidth}
  \centering
  \includegraphics[width=\textwidth]{./figures/power-legend.pdf}
\end{subfigure} \\
\begin{subfigure}{.275\textwidth}
  \centering
  \includegraphics[width=\textwidth]{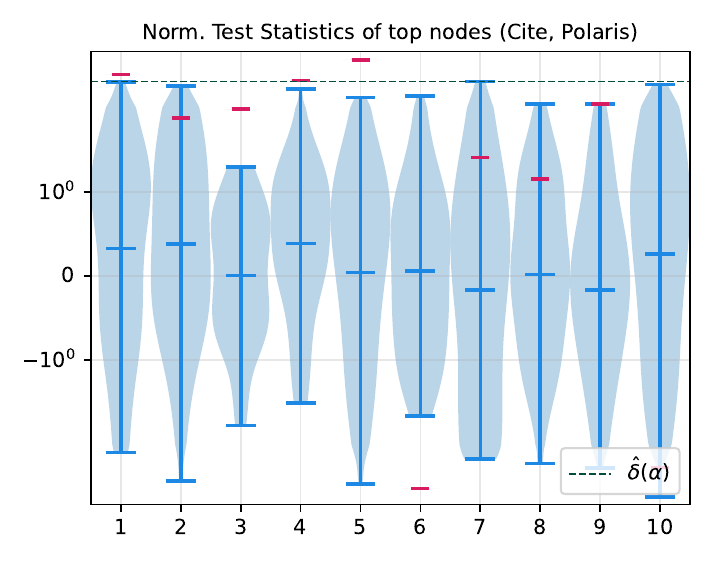} 
\end{subfigure}
\begin{subfigure}{.275\textwidth}
  \centering
  \includegraphics[width=\textwidth]{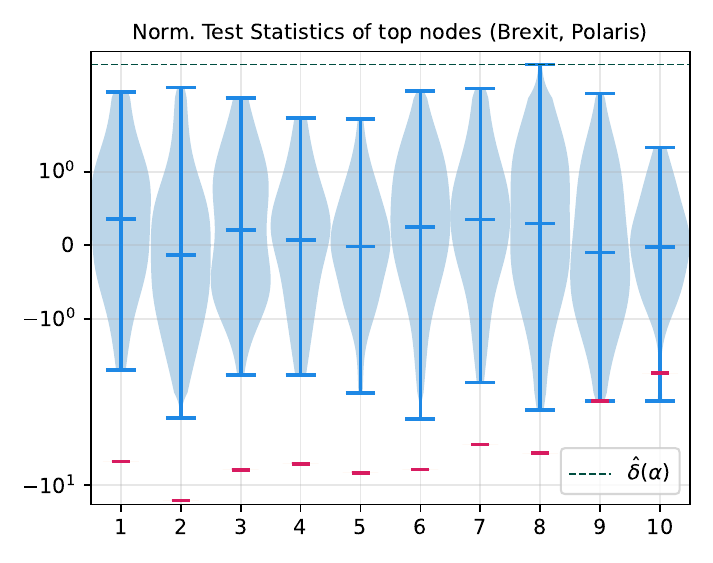} 
\end{subfigure}
\begin{subfigure}{.275\textwidth}
  \centering
  \includegraphics[width=\textwidth]{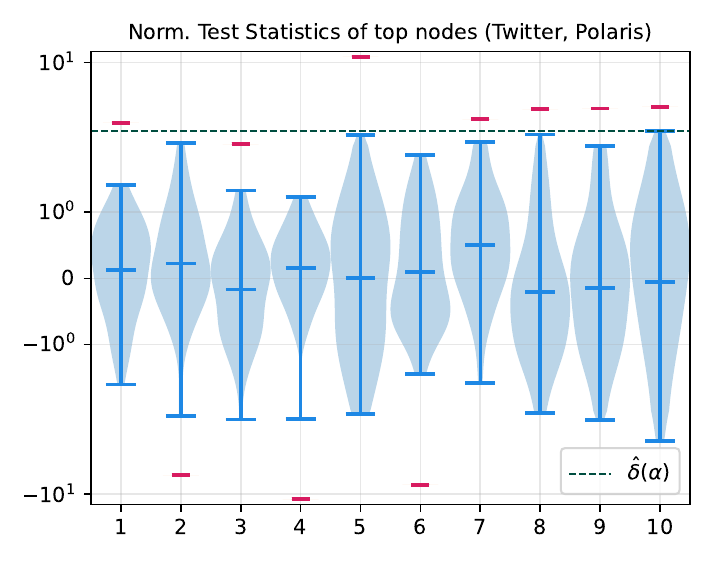} 
\end{subfigure}
\begin{subfigure}{.275\textwidth}
  \centering
  \includegraphics[width=\textwidth]{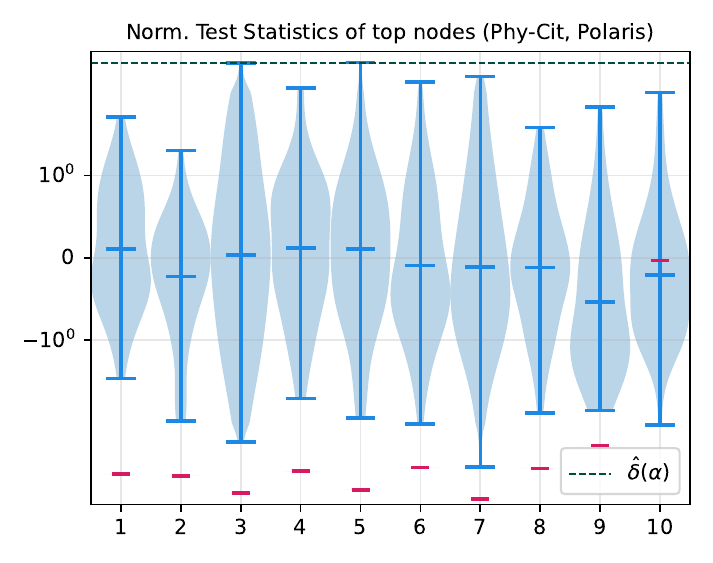} 
\end{subfigure}
\begin{subfigure}{.275\textwidth}
  \centering
  \includegraphics[width=\textwidth]{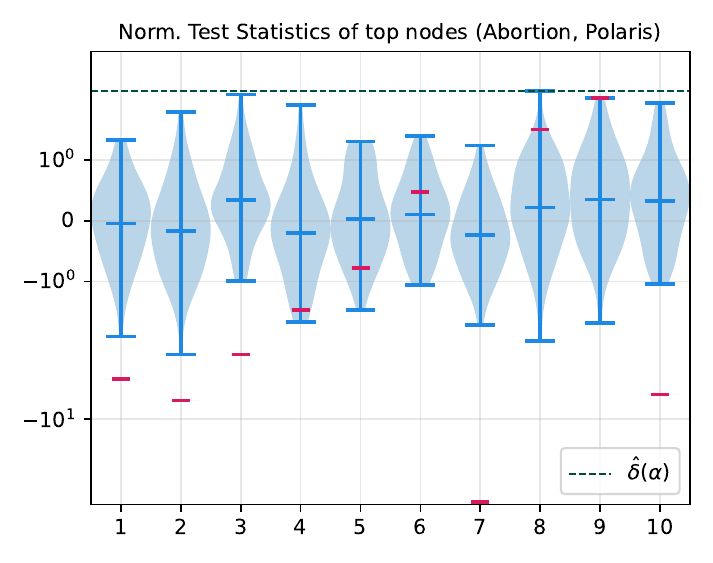} 
\end{subfigure}
\begin{subfigure}{.275\textwidth}
  \centering
  \includegraphics[width=\textwidth]{./figures/avg-col-stats-neigh-US-Elect-norm-Polaris.pdf} 
\end{subfigure}
\begin{subfigure}{.275\textwidth}
  \centering
  \includegraphics[width=\textwidth]{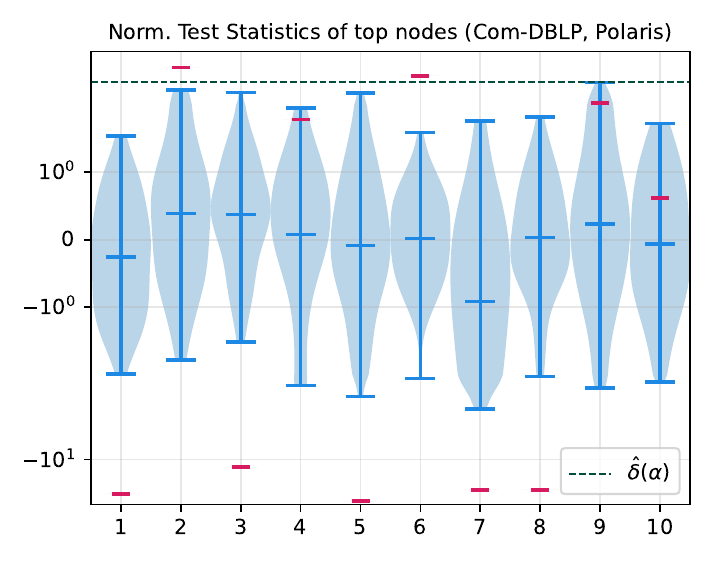} 
\end{subfigure}
\begin{subfigure}{.275\textwidth}
  \centering
  \includegraphics[width=\textwidth]{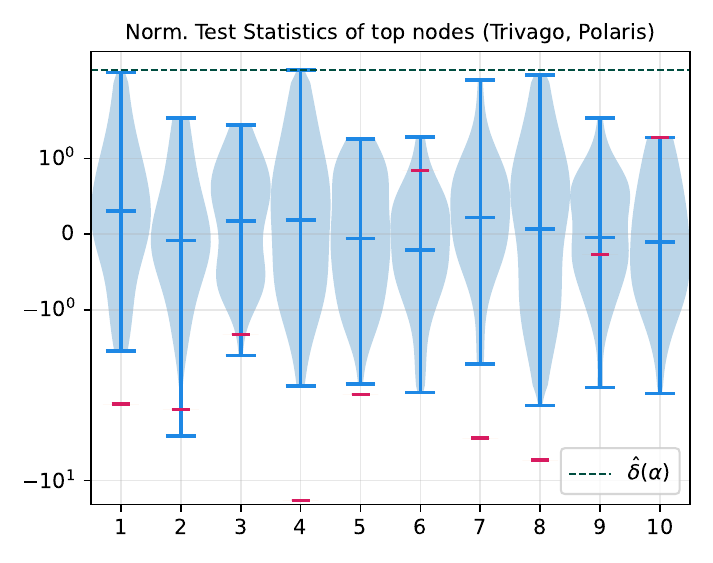} 
\end{subfigure}
\begin{subfigure}{.275\textwidth}
  \centering
  \includegraphics[width=\textwidth]{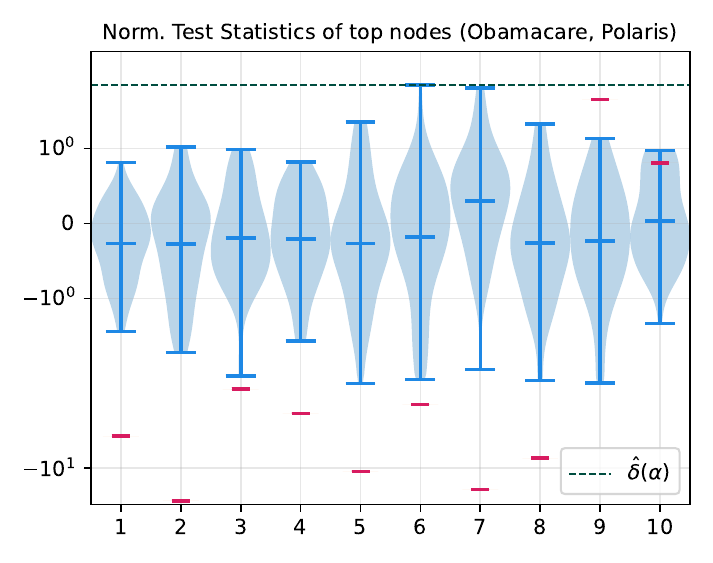} 
\end{subfigure}
\begin{subfigure}{.275\textwidth}
  \centering
  \includegraphics[width=\textwidth]{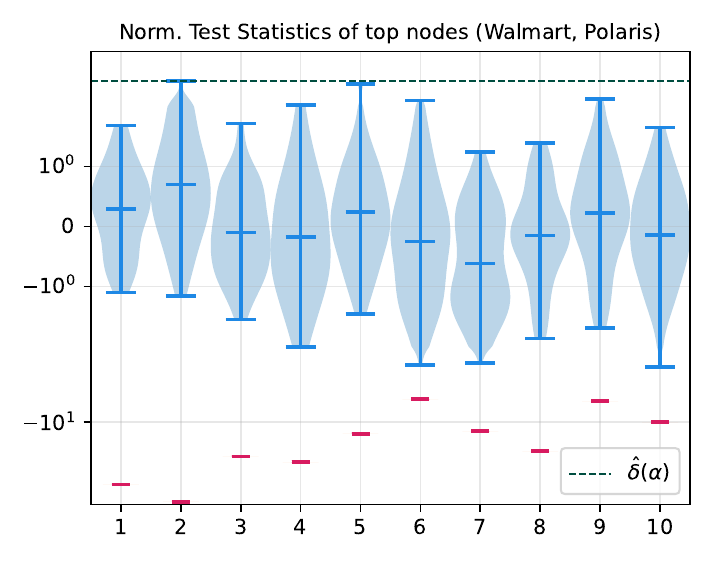} 
\end{subfigure}
\begin{subfigure}{.275\textwidth}
  \centering
  \includegraphics[width=\textwidth]{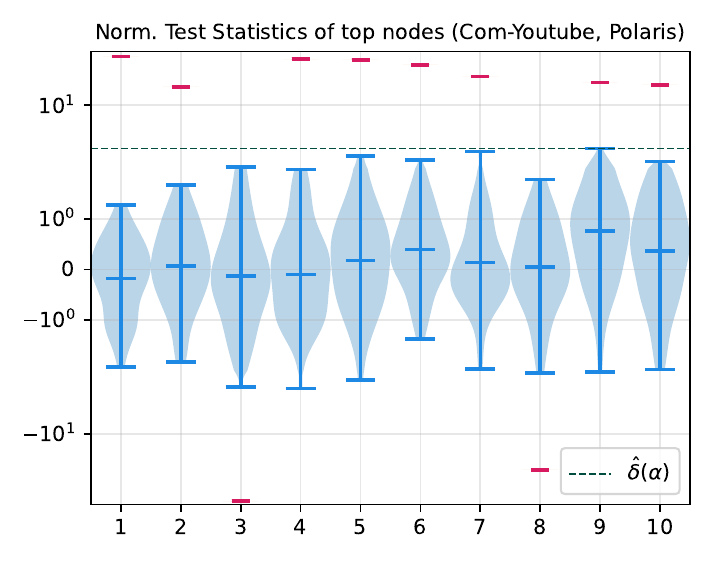} 
\end{subfigure}
\begin{subfigure}{.275\textwidth}
  \centering
  \includegraphics[width=\textwidth]{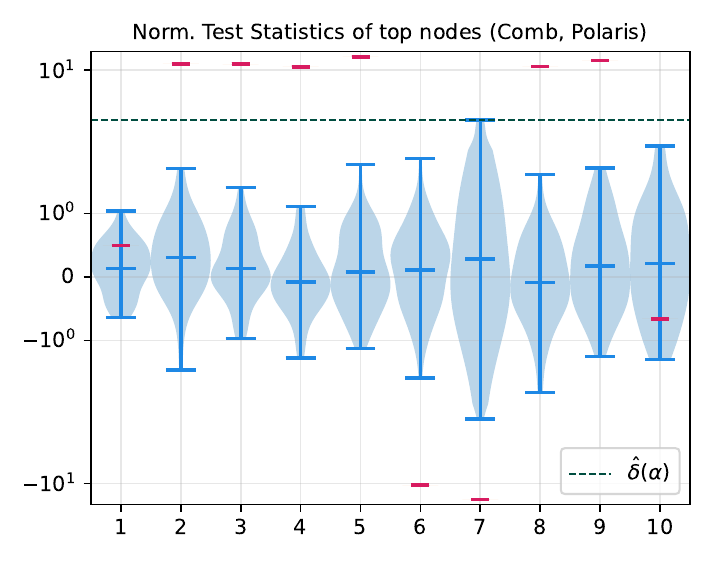} 
\end{subfigure}
\begin{subfigure}{.275\textwidth}
  \centering
  \includegraphics[width=\textwidth]{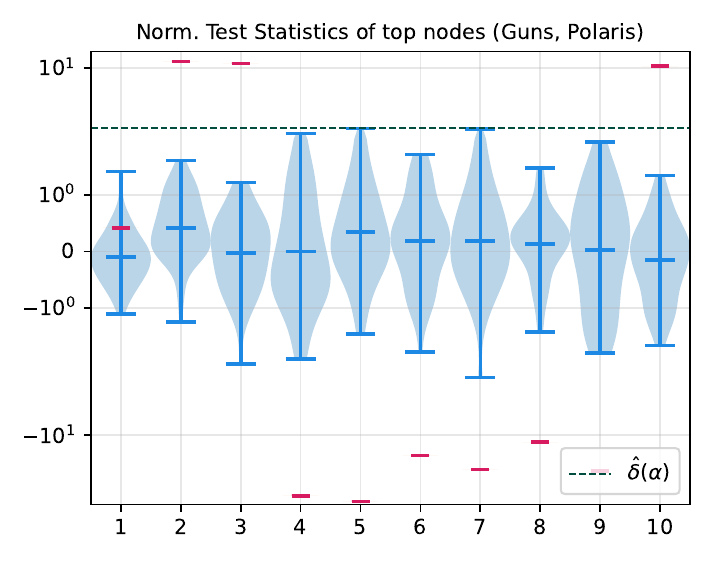} 
\end{subfigure}
\begin{subfigure}{.275\textwidth}
  \centering
  \includegraphics[width=\textwidth]{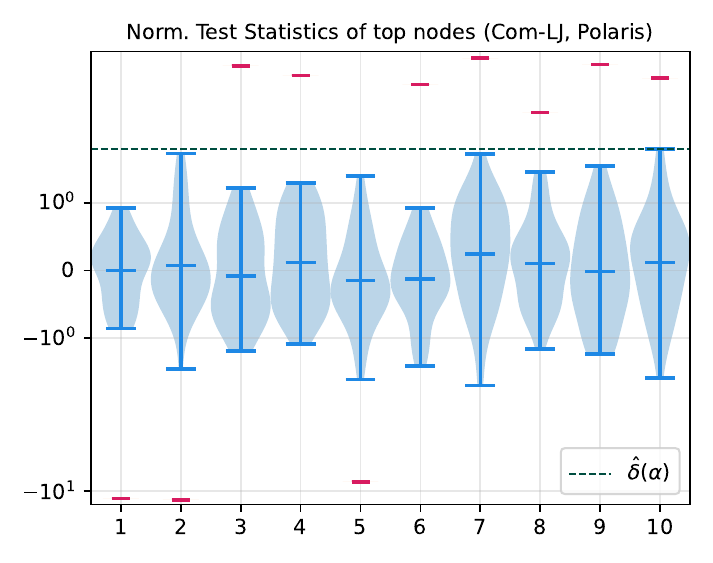} 
\end{subfigure}
\begin{subfigure}{.275\textwidth}
  \centering
  \includegraphics[width=\textwidth]{./figures/avg-col-stats-neigh-Com-Orkut-norm-Polaris.pdf} 
\end{subfigure}
\caption{ 
Testing the significance of $M_v(G)$ of the $10$ nodes with highest degree under the Polaris model.
}
\Description{
Testing the significance of $M_v(G)$ of the $10$ nodes with highest degree under the Polaris model.}
\label{fig:avgcolpolarisnormappendix}
\end{figure*}
\fi

\ifextversion
\begin{figure*}[ht]
\begin{subfigure}{.14\textwidth}
  \centering
  \includegraphics[width=\textwidth]{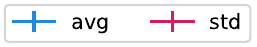}
\end{subfigure} \\
\begin{subfigure}{.275\textwidth}
  \centering
  \includegraphics[width=\textwidth]{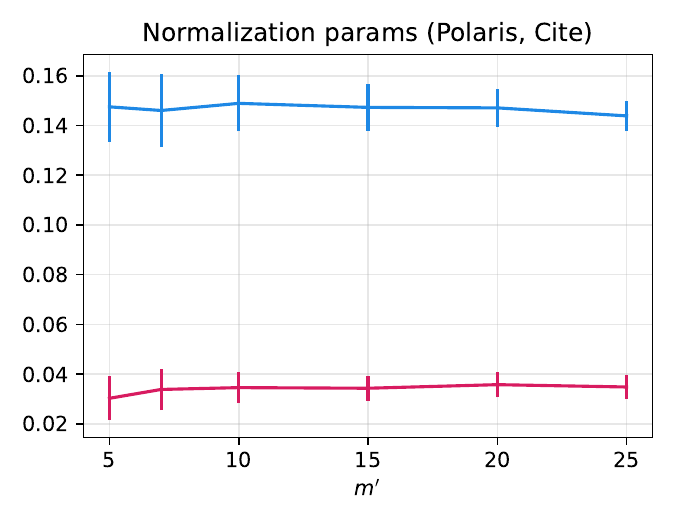} 
\end{subfigure}
\begin{subfigure}{.275\textwidth}
  \centering
  \includegraphics[width=\textwidth]{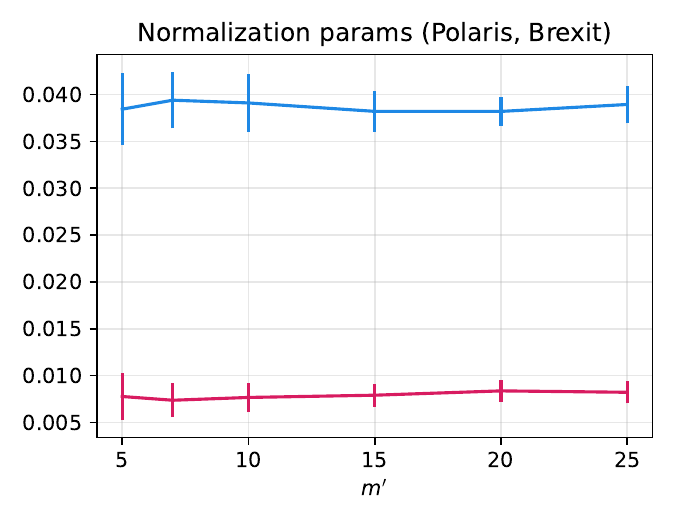} 
\end{subfigure}
\begin{subfigure}{.275\textwidth}
  \centering
  \includegraphics[width=\textwidth]{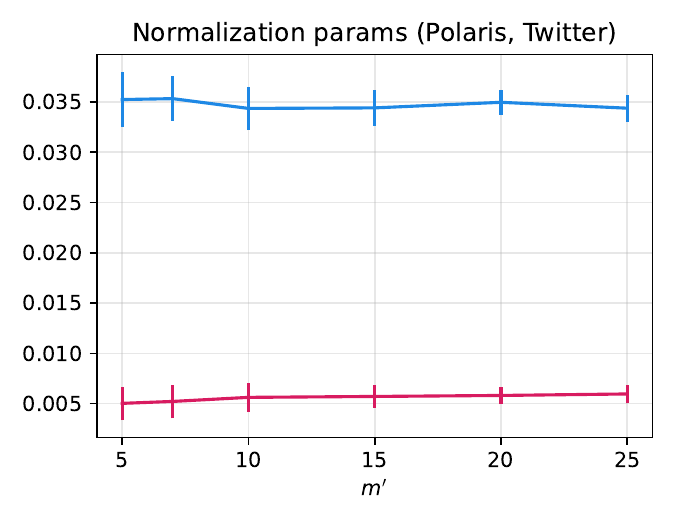} 
\end{subfigure}
\begin{subfigure}{.275\textwidth}
  \centering
  \includegraphics[width=\textwidth]{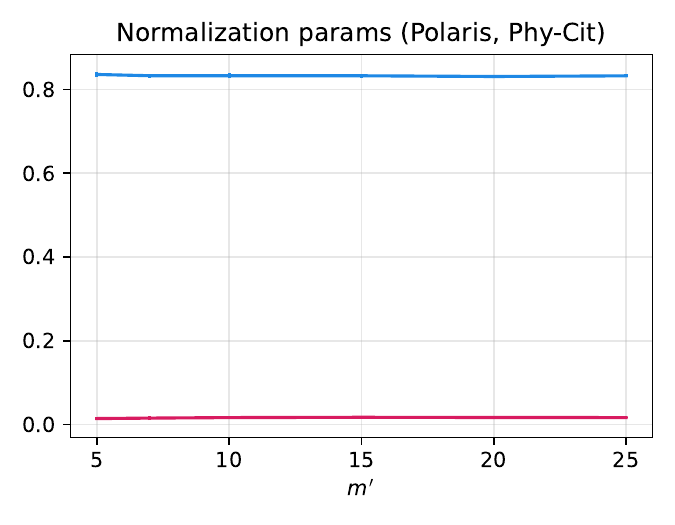} 
\end{subfigure}
\begin{subfigure}{.275\textwidth}
  \centering
  \includegraphics[width=\textwidth]{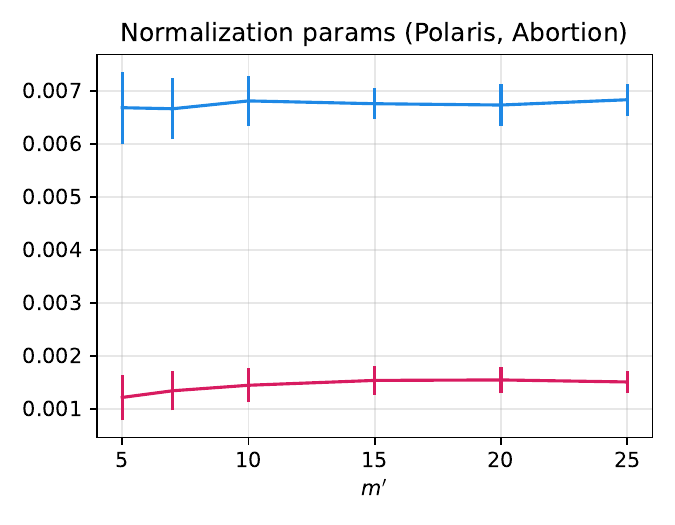} 
\end{subfigure}
\begin{subfigure}{.275\textwidth}
  \centering
  \includegraphics[width=\textwidth]{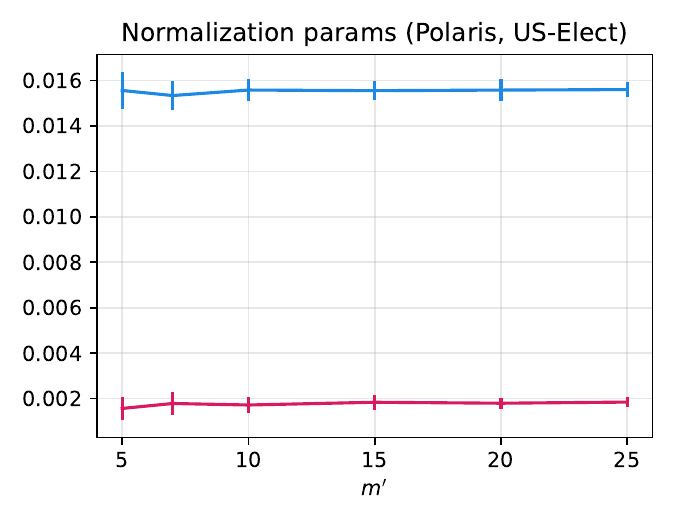} 
\end{subfigure}
\begin{subfigure}{.275\textwidth}
  \centering
  \includegraphics[width=\textwidth]{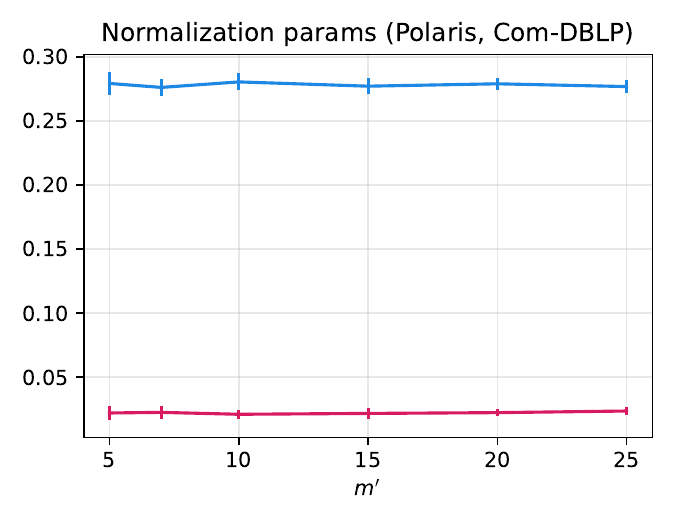} 
\end{subfigure}
\begin{subfigure}{.275\textwidth}
  \centering
  \includegraphics[width=\textwidth]{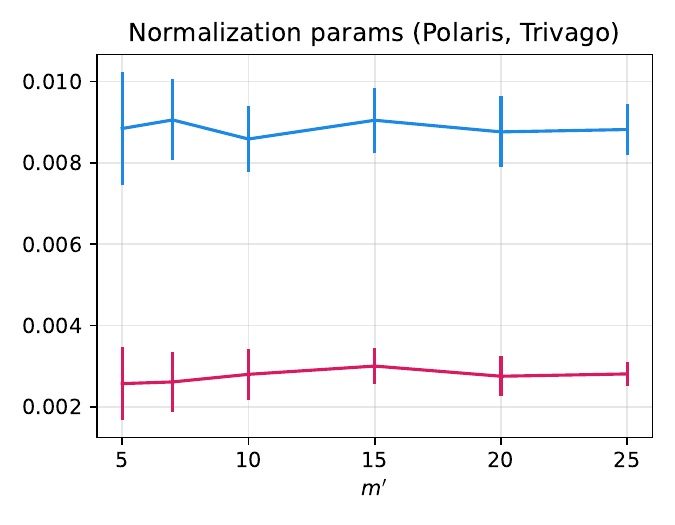} 
\end{subfigure}
\begin{subfigure}{.275\textwidth}
  \centering
  \includegraphics[width=\textwidth]{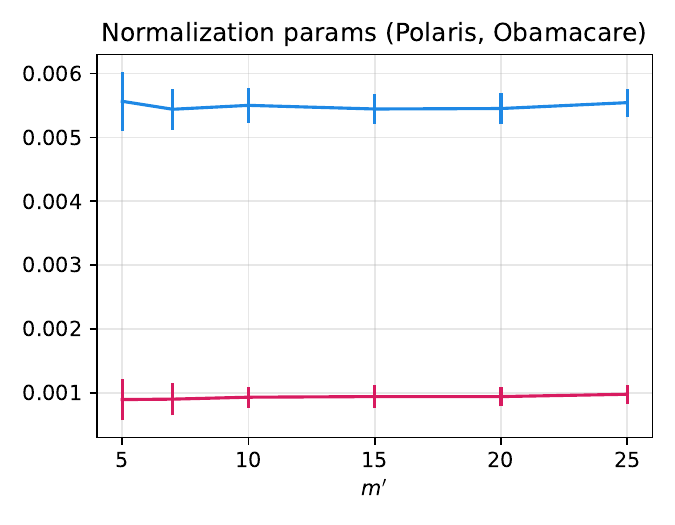} 
\end{subfigure}
\begin{subfigure}{.275\textwidth}
  \centering
  \includegraphics[width=\textwidth]{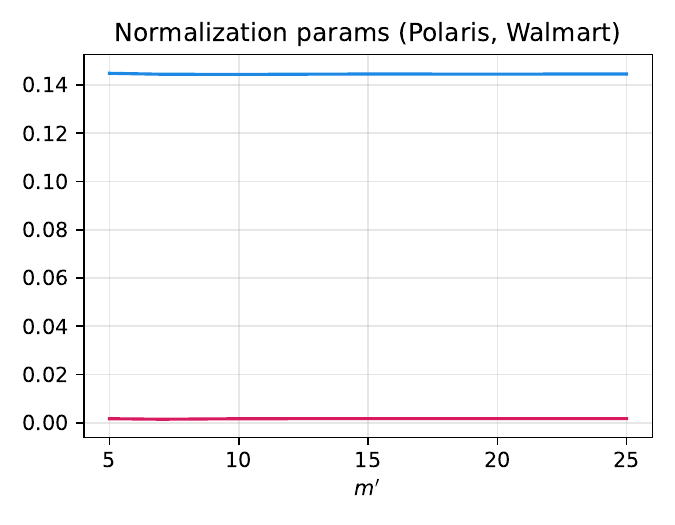} 
\end{subfigure}
\begin{subfigure}{.275\textwidth}
  \centering
  \includegraphics[width=\textwidth]{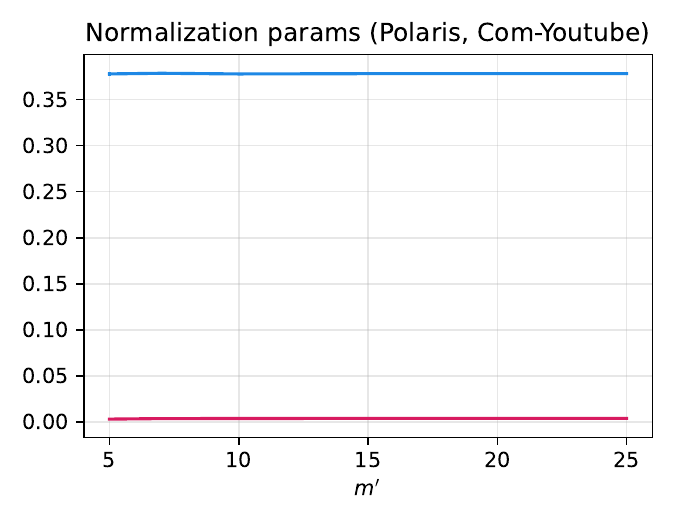} 
\end{subfigure}
\begin{subfigure}{.275\textwidth}
  \centering
  \includegraphics[width=\textwidth]{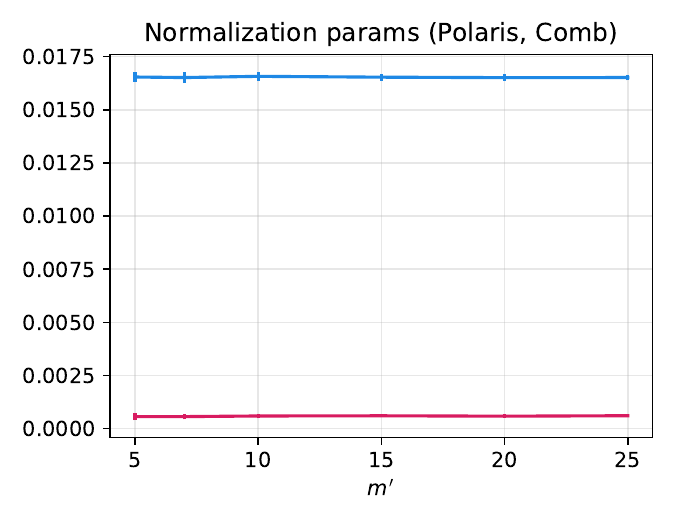} 
\end{subfigure}
\begin{subfigure}{.275\textwidth}
  \centering
  \includegraphics[width=\textwidth]{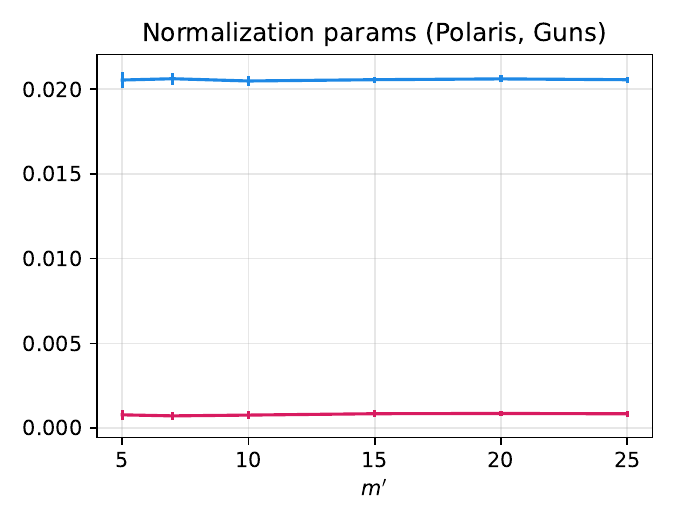} 
\end{subfigure}
\begin{subfigure}{.275\textwidth}
  \centering
  \includegraphics[width=\textwidth]{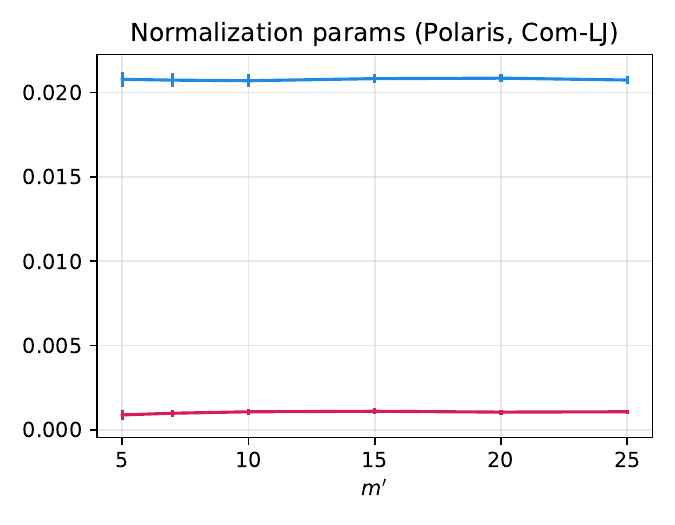} 
\end{subfigure}
\begin{subfigure}{.275\textwidth}
  \centering
  \includegraphics[width=\textwidth]{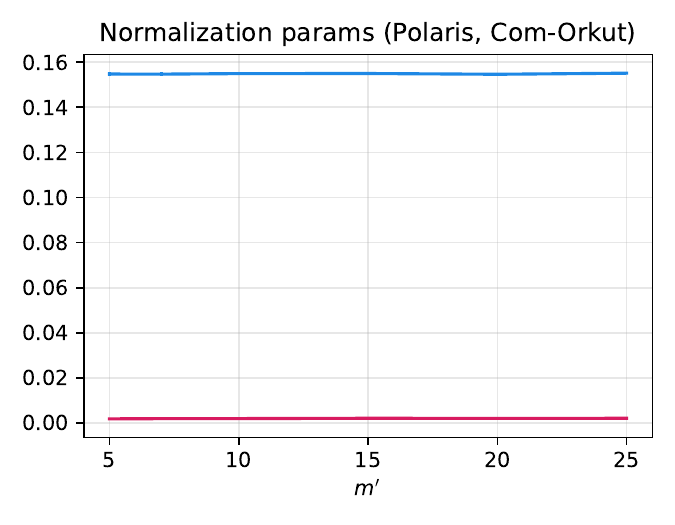} 
\end{subfigure}
\caption{ 
Normalization parameters of the function $M_v$ for the node $v$ with highest degree, under the Polaris model.
}
\Description{
Normalization parameters of the function $M_v$ for the node $v$ with highest degree, under the Polaris model.}
\label{fig:normalizationparameterspolappendix}
\end{figure*}
\fi

\ifextversion
\begin{figure*}[ht]
\begin{subfigure}{.175\textwidth}
  \centering
  \includegraphics[width=\textwidth]{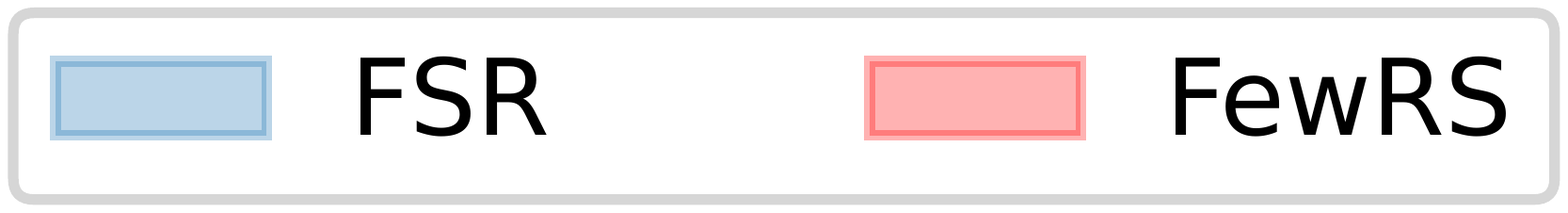}
\end{subfigure} \\
\begin{subfigure}{.28\textwidth}
  \centering
  \includegraphics[width=\textwidth]{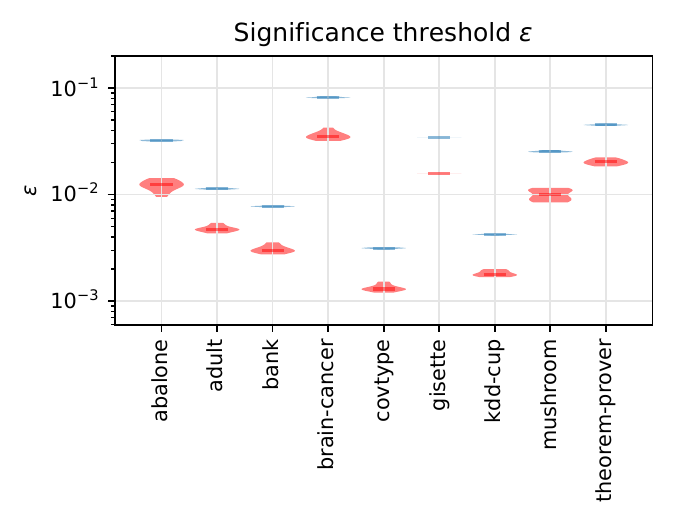} 
  %\caption{}
\end{subfigure}
\begin{subfigure}{.28\textwidth}
  \centering
  \includegraphics[width=\textwidth]{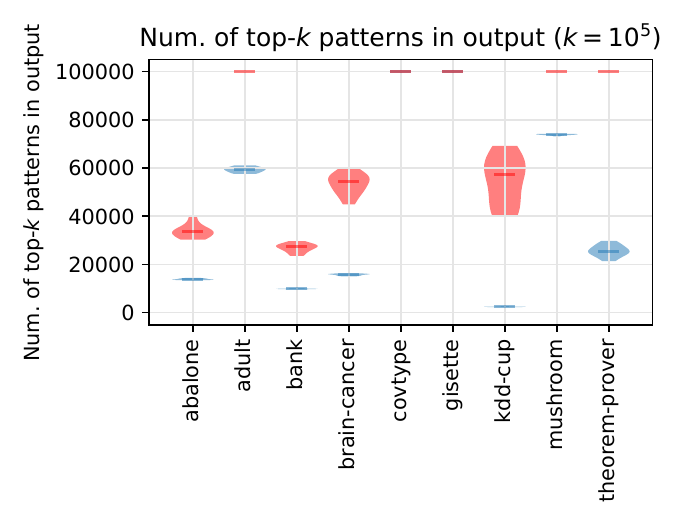} 
    %\caption{}
\end{subfigure}
\begin{subfigure}{.28\textwidth}
  \centering
  \includegraphics[width=\textwidth]{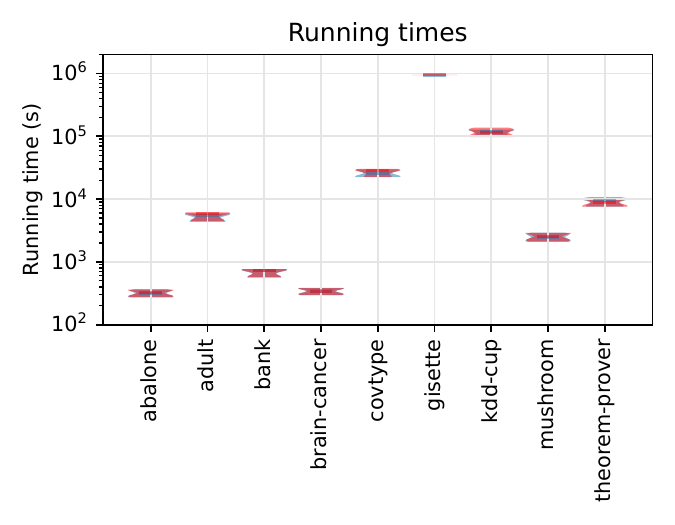} 
    %\caption{}
\end{subfigure}
\caption{ 
Comparison of \algname\ with FSR in terms of significance threshold, number of significant patterns reported in output, and running time. 
}
\Description{
Comparison of \algname\ with FSR in terms of significance threshold, number of significant patterns reported in output, and running time. }
\label{fig:comparisonfsrlarge}
\end{figure*}

\else

\begin{figure}[ht]
\begin{subfigure}{.16\textwidth}
  \centering
  \includegraphics[width=\textwidth]{./figures/legend_c.pdf}
\end{subfigure} \\
\begin{subfigure}{.33\textwidth}
  \centering
  \includegraphics[width=\textwidth]{./figures/eps_bounds_c.pdf} 
  %\caption{}
\end{subfigure}
\begin{subfigure}{.33\textwidth}
  \centering
  \includegraphics[width=\textwidth]{./figures/recall_k_100000_c.pdf} 
    %\caption{}
\end{subfigure}
\caption{ 
Comparison of \algname\ with FSR in terms of significance threshold and number of significant patterns reported in output (running time are equal for both methods, see~\cite{fewrsextended}). 
}
\Description{
Comparison of \algname\ with FSR in terms of significance threshold and number of significant patterns reported in output (running time are equal for both methods, see~\cite{fewrsextended}). }
\label{fig:comparisonfsrsmall}
\end{figure}
\fi